\renewcommand{\tilde}{\widetilde}
\renewcommand{\hat}{\widehat}
\newcommand{\ie}{i.e., }
\newcommand{\wrt}{\textit{w}.\textit{r}.\textit{t}. }
\newcommand{\iid}{\textit{i}.\textit{i}.\textit{d}. }
\def \A {\mathcal{A}}
\def \B {\mathcal{B}}
\def \C {\mathtt{c}}
\def \E {\mathbb{E}}
\def \Ecal {\mathcal{E}}
\def \F {\mathcal{F}}
\def \H {\mathcal{H}}
\def \K {\mathcal{K}}
\def \O {\mathcal{O}}
\def \R {\mathbb{R}}
\def \T {\top}
\def \U {\mathcal{U}}
\def \X {\mathcal{X}}
\def \Ot {\tilde{\O}}
\def \Pb {\bar{P}}
\def \Vb {\bar{V}}
\def \Dp {D_{\psi}}
\def \Dpb {D_{\psib}}
\def \Dw {D_{\bar{\psi}}}
\def \psib {\bar{\psi}}
\def \qtc {q^{\pi_t^\C}}
\def \qk {q^{\pi_k}}
\def \qkc {q^{\pi_k^\C}}
\def \mp {m^\prime}
\def \sumi {\sum_{i=1}^N}
\def \suml {\sum_{l=0}^{H-1}}
\def \sumk {\sum_{k=1}^K}
\def \sumt {\sum_{t=1}^T}
\def \sumd {\sum_{i=1}^d}
\def \ph {\hat{p}}
\def \qh {\hat{q}}
\def \epsilon {\varepsilon}
\newcommand{\propers}{\ensuremath{\Pi_{\textnormal{proper}}}}
\newcommand{\Pip}{\propers}
\DeclareMathOperator*{\argmax}{arg\,max}
\DeclareMathOperator*{\argmin}{arg\,min}
\def \base {\mathtt{base}\mbox{-}\mathtt{regret}}
\def \meta {\mathtt{meta}\mbox{-}\mathtt{regret}}
\def \baseswitching {\mathtt{base}\mbox{-}\mathtt{switching}\mbox{-}\mathtt{cost}}
\def \metaswitching {\mathtt{meta}\mbox{-}\mathtt{switching}\mbox{-}\mathtt{cost}}
\newcommand{\term}[1]{\mathtt{term~(#1)}}
\let\norm\undefined 
\DeclarePairedDelimiter\norm{\lVert}{\rVert}
\DeclarePairedDelimiter\abs{\lvert}{\rvert}
\DeclarePairedDelimiter\ceil{\lceil}{\rceil}
\newcommand\inner[2]{\langle #1, #2 \rangle}
\newtheorem{myThm}{Theorem}
\newtheorem{myLemma}{Lemma}
\newtheorem{myRemark}{Remark}
\newtheorem{myDef}{Definition}
\definecolor{wine_red}{RGB}{228,48,64}
\definecolor{DSgray}{cmyk}{0,1,0,0}
\newcommand{\pref}[1]{\prettyref{#1}}
\newcommand{\savehyperref}[2]{\texorpdfstring{\hyperref[#1]{#2}}{#2}}
\def \A {A}
\def \X {X}
\def \mdp {MDPs~}
\newcommand{\ind}{\mathbbm{1}}
\def \piC {\pi^{\C}}
\DeclareMathOperator*{\Reg}{\textnormal{Regret}}
\DeclareMathOperator*{\DReg}{\textnormal{D-Regret}}
\begin{document}

\title{Dynamic Regret of Online Markov Decision Processes}

\author{\name Peng Zhao \email zhaop@lamda.nju.edu.cn \\
    \name Long-Fei Li \email lilf@lamda.nju.edu.cn \\
    \name Zhi-Hua Zhou \email zhouzh@lamda.nju.edu.cn \\
    \addr National Key Laboratory for Novel Software Technology\\
    Nanjing University, Nanjing 210023, China}
\editor{Kevin Murphy and Bernhard Sch{\"o}lkopf}

\maketitle

\begin{abstract}
We investigate online Markov Decision Processes~(MDPs) with adversarially changing loss functions and known transitions. We choose \emph{dynamic regret} as the performance measure, defined as the performance difference between the learner and any sequence of feasible \emph{changing} policies. The measure is strictly stronger than the standard static regret that benchmarks the learner's performance with a fixed compared policy. We consider three foundational models of online MDPs, including episodic loop-free Stochastic Shortest Path (SSP), episodic SSP, and infinite-horizon MDPs. For these three models, we propose novel online ensemble algorithms and establish their dynamic regret guarantees respectively, in which the results for episodic (loop-free) SSP are provably minimax optimal in terms of time horizon and certain non-stationarity measure. Furthermore, when the online environments encountered by the learner are predictable, we design improved algorithms and achieve better dynamic regret bounds for the episodic (loop-free) SSP; and moreover, we demonstrate impossibility results for the infinite-horizon MDPs.
\end{abstract}

\begin{keywords}
    online MDP, online learning, dynamic regret, non-stationary environments
\end{keywords}

\section{Introduction}
\label{sec:introduction}
Markov Decision Processes (MDPs) are widely used to model decision-making problems, where a learner interacts with the environments sequentially and aims to improve the learned strategy over time. The MDPs model is very general and encompasses a variety of applications, including games~\citep{Nature:Go}, robotic control~\citep{ICML'15:TRPO}, autonomous driving~\citep{ICRA'19:Drive}, etc.

In this paper, we focus on the online MDPs framework with adversarially changing loss functions and known transitions, which has attracted increasing attention in recent years due to its generality~\citep{MatOR'09:online-MDP,NIPS'13:MDP-Neu,ICML'19:unknown-transition-Rosenberg,ICML'20:bandit-unknown-chijin,COLT'21:MDP-aggregate,COLT'21:SSP-minimax}. Let $T$ be the total time horizon. The general procedures of the online MDPs are as follows: at each round $t \in [T]$, the learner observes the current state $x_t$ and decides a policy $\pi_t : \X \times \A \rightarrow [0,1]$, where $\pi_t(a | x)$ is the probability of taking action $a \in \A$ at state $x \in \X$. Then, the learner draws and executes an action $a_t$ from $\pi_t(\cdot | x_t)$ and suffers a loss $\ell_t(x_t, a_t)$. The environments subsequently transit to the next state $x_{t+1}$ according to the transition kernel $P(\cdot | x_t, a_t)$. We focus on the full-information loss (reward) feedback setting where the entire loss function is revealed to the learner. The standard measure for online \mdp is the \emph{regret} defined as the performance difference between learner's policy and that of the best fixed policy in hindsight, namely,
\begin{equation}
    \label{eq:static-regret}
    {\Reg}_T = \sum_{t=1}^T \ell_t\left(x_t, \pi_t(x_t)\right) - \min_{\pi \in \Pi} \sum_{t=1}^T \ell_t\left(x_t, \pi(x_t)\right),
\end{equation}
where $\Pi$ is a certain policy class. There are many efforts devoted to optimizing the measure, yielding fruitful results~\citep{MatOR'09:online-MDP,colt'10:loop-free,aistats'12:unknown-neu,NIPS'13:MDP-Neu,IEEE'Control:bandit-MDP-Neu,ICML'19:unknown-transition-Rosenberg,IJCAI'21:SSP-Rosenberg,COLT'21:SSP-minimax}.
However, one caveat in the performance measure in~\pref{eq:static-regret} is that the measure only benchmarks the learner's performance with a \emph{fixed} strategy, so it is usually called the \emph{static regret} in the literature. The fact makes the static regret metric not suitable to guide the algorithm design for online decision making in open and non-stationary environments~\citep{NSR'22:Zhou-OpenML}, which is often the case in many real-world applications such as online recommendations and autonomous driving~\citep{grzywaczewski2017training,AAAI'19:virtualTaobao,KDD'18:dynamicRL,TKDE'21:DFOP}. In particular, in online MDPs model the loss functions encountered by the learner can be adversarially changing, it is thus unrealistic to assume the existence of a single fixed strategy in the policy class that can perform well over the horizon in such scenarios. To this end, in this paper we introduce the \emph{dynamic regret} as the performance measure to guide the algorithm design for online MDPs, which competes the learner's performance against a sequence of changing policies, defined as
\begin{equation}
    \label{eq:dynamic-regret}
    {\DReg}_T(\pi_{1:T}^{\C}) = \sum_{t=1}^T \ell_t\left(x_t, \pi_t(x_t)\right) - \sum_{t=1}^T \ell_t \left(x_t, \pi_t^{\C}(x_t) \right),
\end{equation}
where $\pi_1^{\C}, \ldots, \pi_T^{\C} \in \Pi$ is any sequence of compared policies in the policy class $\Pi$, which can be chosen with the complete foreknowledge of all the online loss functions. We use $\pi_{1:T}^{\C}$ as a shorthand of the compared policies. An upper bound of dynamic regret usually scales with a certain variation quantity of the compared policies denoted by $P_T(\pi_1^\C, \ldots, \pi_T^\C)$ that can reflect the non-stationarity of environments.

We note that the dynamic regret measure in~\pref{eq:dynamic-regret} is in fact very general due to the flexibility of compared policies. For example, it immediately recovers the standard regret notion defined in~\pref{eq:static-regret} when choosing the single best compared policy in hindsight, namely, choosing $\pi_{1:T}^{\C} = \pi^* \in \argmin_{\pi \in \Pi} \sum_{t=1}^T \ell_t(x_t, \pi(x_t))$. Hence, any dynamic regret upper bound directly implies a static regret upper bound by substituting a fixed compared policy. Another typical choice is setting the compared policies as the sequence of the best policy of each round, namely, choosing $\pi_t^{\C} = \pi_t^* \in \argmin_{\pi \in \Pi} \ell_t(x_t, \pi(x_t))$, and the resulting dynamic regret measure is sometimes referred to as the \emph{worst-case} dynamic regret in the literature~\citep{NIPS'18:Zhang-Ader}. It is noteworthy to emphasize that the dynamic regret measure in~\pref{eq:dynamic-regret} does not assume prior information of the compared policies, which is certainly also unknown to the online algorithms. As a result, the measure is also called \emph{universal} dynamic regret (or \emph{general} dynamic regret) in the sense that the regret bound holds for any feasible compared policies. Both static regret and the aforementioned worst-case dynamic regret are two special cases of the universal dynamic regret by configuring different choices of compared policies.

In this paper, focusing on the dynamic regret measure presented in~\pref{eq:dynamic-regret}, we investigate three foundational and well-studied models of online MDPs: (\romannumeral1) episodic loop-free Stochastic Shortest Path (SSP)~\citep{NIPS'13:MDP-Neu}, (\romannumeral2) episodic SSP~\citep{IJCAI'21:SSP-Rosenberg,COLT'21:SSP-minimax}, and (\romannumeral3) infinite-horizon MDPs~\citep{MatOR'09:online-MDP}. The first two SSP models belong to episodic MDPs, in which the learner interacts with environments in episodes and aims to reach a goal state with minimum total loss. The distinction lies in that the learner is guaranteed to reach the goal state within a fixed number of steps in the loop-free SSP model; by contrast, the horizon length in general SSP model depends on the learner's policies, which could potentially be infinite (if the goal is not reached). In infinite-horizon MDPs, there is no goal state and the horizon can be never end and the goal of the learner is to minimize the average loss over time. For all those three models, we propose novel online algorithms and provide the corresponding expected dynamic regret guarantees. We also establish several lower bound results and show that the obtained upper bounds for episodic loop-free SSP and general SSP are \emph{minimax optimal} in terms of time horizon and non-stationarity measure. Furthermore, when the online environments are not fully adversarial and have some patterns that are predictable, we develop optimistic variants for episodic (loop-free) SSP and prove that the enhanced algorithms enjoy problem-dependent dynamic regret bounds, which scale with the variation of online functions and thereby achieve better result than the minimax rate. We also demonstrate impossibility results on  attaining similar problem-dependent guarantees for infinite-horizon MDPs. Notably, all our algorithms are \emph{parameter-free} in the sense that they do not require knowing the non-stationarity quantity or the variation quantity of online functions ahead of time. Table~\ref{table:results} summarizes our main results.

\begin{table}[!t]
\centering
\caption{\small{Summary of our main results. For three models of online MDPs (episodic loop-free SSP, episodic SSP, and infinite-horizon MDPs), we establish their dynamic regret guarantees, and better rates can be achieved for the episodic (loop-free) SSP when the environments are predicable. Our obtained dynamic regret bounds immediately recover the best known static regret presented in the last column, when choosing a fixed compared policy and the non-stationarity measure $P_T$ or $\Pb_K$ then equals to zero. Besides, $V_K$ measures the variation of loss functions and reflects the predictability of environments. Note that all our results are achieved by \emph{parameter-free} algorithms in the sense that they do not require the knowledge of unknown quantities related to the environmental non-stationarity or adaptivity.}} \vspace{2mm}
\label{table:results}
    \renewcommand*{\arraystretch}{1.6}
    \resizebox{\textwidth}{!}{
        \begin{tabular}{c|r|r}
            \hline

            \hline
            MDP Model                               & \multicolumn{1}{c|}{Ours Result (dynamic regret)}                                  & \multicolumn{1}{c}{Previous Work (static regret)}                  \\ \hline
            \multirow{2}{*}{Episodic loop-free SSP} & $\Ot(H\sqrt{K (1 + P_T)})$ [\pref{thm:loop-free-non-stationarity}]                 & \multirow{2}{*}{$\Ot(H\sqrt{K  })$~\citep{NIPS'13:MDP-Neu}}        \\
                                                    & $\Ot(H\sqrt{V_K (1 + P_T)})$ [\pref{thm:loop-free-adaptivity}]                     &                                                                    \\ \hline
            \multirow{2}{*}{Episodic SSP}           & $\Ot(\sqrt{B_K(H_*+\Pb_K)}+\Pb_K)$  [\pref{thm:ssp}]                               & \multirow{2}{*}{$\Ot(\sqrt{H^{\pi^*}DK})$~\citep{COLT'21:SSP-minimax}} \\
                                                    & $\Ot(\sqrt{V_K(H_*+\Pb_K)}+\Pb_K)$ [\pref{thm:ssp-adaptivity}]                     &                                                                    \\ \hline
            Infinite-horizon MDPs                   & $\Ot(\sqrt{\tau T (1 + \tau P_T)} + \tau^2 P_T)$ [\pref{thm:total-regret-general}] & $\Ot(\sqrt{\tau T } )$~\citep{NIPS'13:MDP-Neu}                     \\
            \hline

            \hline
        \end{tabular}}
\end{table}

\paragraph{Technical contributions.} Similar to prior studies of non-stationary online learning~\citep{ICML'09:Hazan-adaptive,ICML'15:Daniely-adaptive,NIPS'18:Zhang-Ader,NeurIPS'20:Sword}, our proposed algorithms fall into the online ensemble framework with a meta-base two-layer structure. While the general framework is standard in modern online learning, several important new ingredients are required to achieve minimax and adaptive dynamic regret guarantees for online MDPs. We highlight the main technical challenges and contributions as follows.
\begin{itemize}
    \item For all three models, algorithms are performed over the ``occupancy measure'' space, so dynamic regret inevitably scales with the variation of occupancy measures induced by compared policies, making it necessary to establish relationships between the variation of occupancy measures and that of compared policies.
    \item Achieving minimax and adaptive dynamic regret bounds for episodic (non-loop-free) SSP is one of the most challenging parts of this paper due to the complicated structure of this model and also the requirement of handling dual uncertainties of unknown horizon length and unknown non-stationarity. This motivates a novel groupwise scheduling for base-learners and a new weighted entropy regularizer for the meta-algorithm. Additionally, appropriate correction terms in the feedback loss and carefully designed step sizes for both base-algorithm and meta-algorithm are also important.
    \item For learning in infinite-horizon MDPs, we reduce it to the problem of switching-cost penalized prediction with expert advice (or simply called switching-cost expert problem). We prove an impossibility result for problem-dependent (static/dynamic) regret of switching-cost expert problem, which might be of independent interest.
\end{itemize}

\paragraph{Notations.} We present several general notations used throughout the paper. We use $\ell \in \R_{[a,b]}^d$ to denote a vector whose each element satisfies $\ell_i \in [a,b]$ for $i \in [d]$. For a vector $a \in \R^d$, $a^2$ denotes the vector $(a_1^2, \ldots, a_d^2)^\T \in \R^d$. Besides, $e_{i} \in \R^d$ denotes the $i$-th standard basis vector. For a convex function $\psi$, its induced Bregman divergence is defined as $\Dp(u,w) = \psi(u)-\psi(w)-\inner{\nabla \psi(w)}{u-w}$. Given two policies $\pi$ and $\pi^\prime$, $\norm{\pi - \pi^\prime}_{1,\infty}=\max_x \norm{\pi(\cdot | x)-\pi^\prime(\cdot | x)}_1$. $\Ot(\cdot)$ omits the logarithmic factors on horizon length $T$.

\paragraph{Organization.} The rest of the paper is organized as follows. Section~\ref{sec:related-work} reviews the related work. In Section~\ref{sec:loop-free-SSP} and Section~\ref{sec:episodic-ssp}, we establish the minimax dynamic regret and present problem-dependent adaptive results for episodic loop-free and general (non-loop-free) SSP respectively. In Section~\ref{sec:infinite-horizon}, we provide dynamic regret upper bounds and impossibility results for the infinite-horizon online MDPs. Section~\ref{sec:experiment} presents the empirical studies. Section~\ref{sec:conclusion} concludes the paper and discusses the future work. We defer all the proofs to the appendices.
\section{Related Work}
\label{sec:related-work}
This section presents discussions on several topics related to this work. The first part is about the development of static regret for online adversarial MDPs, and the second part reviews related advance of dynamic regret minimization in non-stationary online learning.

\subsection{Online Adversarial MDPs}
Learning with adversarial MDPs has attracted much attention in recent years. We briefly discuss related works on three models of online MDPs studied in this paper, including episodic loop-free SSP, episodic (non-loop-free) SSP, and infinite-horizon MDPs.

\paragraph{Episodic loop-free SSP.} \citet{colt'10:loop-free} first study learning in the episodic SSP with a loop-free structure and known transition, where an $\Ot(H^2\sqrt{K})$ regret is achieved in the full information setting and $K$ is the number of the episodes and $H$ is the horizon length in each episode. Later~\citet{NIPS'13:MDP-Neu} propose the \mbox{O-REPS} algorithm which applies mirror descent over occupancy measure space and achieves the optimal regret of order $\Ot(H\sqrt{K})$. \citet{colt'10:loop-free, NIPS'13:MDP-Neu} also consider the bandit feedback setting. \citet{aistats'12:unknown-neu, ICML'19:unknown-transition-Rosenberg} investigate the unknown transition kernel and full-information setting. \citet{nips'19:bandit-unknown-rosenberg} and \citet{ICML'20:bandit-unknown-chijin} further consider the harder unknown transition kernel and bandit-feedback setting. The linear function approximation setting is also studied~\citep{icml'20:OPPO-cai}. Notably, our results for episodic loop-free SSP (see Section~\ref{sec:loop-free-SSP}) focus on known transition and full-information feedback setting. Different from all mentioned results minimizing static regret, our proposed algorithm is equipped with dynamic regret guarantee, which can recover the $\Ot(H \sqrt{K})$ minimax optimal static regret when choosing compared policies as the best fixed policy in hindsight. Furthermore, when the environments are predictable, we enhance the algorithm to capture such adaptivity and hence enjoy better dynamic regret guarantees than the minimax rate.

\paragraph{Episodic SSP.} \citet{IJCAI'21:SSP-Rosenberg} first consider learning in episodic (non-loop-free) SSP with full-information loss feedback. Their algorithm achieves an $\Ot(\frac{D}{c_{\min}}\sqrt{K})$ regret for the known transition setting, where $c_{\min} \in (0,1]$ is the lower bound of the loss function and $D$ is the diameter of the MDP. They also study the zero costs case and unknown transition setting. \citet{COLT'21:SSP-minimax} develop algorithms that significantly improve the results and achieve minimax regret $\Ot(\sqrt{H^{\pi^*} DK})$ for the full information with known transition setting, where $H^{\pi^*}$ is the hitting time of the optimal policy. They also investigate the unknown transition setting. Our results for episodic SSP (see Section~\ref{sec:episodic-ssp}) focus on the known transition and full-information setting. We develop an algorithm with optimal dynamic regret guarantees. Our result immediately recovers the optimal $\Ot(\sqrt{H^{\pi^*}D K})$ static regret when setting comparators as the best fixed policy in hindsight. We further enhance our algorithm to achieve a more adaptive bound when the environments are predictable.

\paragraph{Infinite-horizon MDPs.} \citet{MatOR'09:online-MDP} consider learning in unichain MDPs with known transition and full-information feedback, they propose the algorithm MDP-E that enjoys $\Ot( \sqrt{\tau^3 T})$ regret, where $\tau$ is the mixing time. Another work~\citep{MatOR'09:MDP-Yu} achieves $\Ot(T^{2/3})$ regret in a similar setting. The \mbox{O-REPS} algorithm of~\citet{NIPS'13:MDP-Neu} achieves an $\Ot(\sqrt{\tau T})$ regret. \citet{NIPS'10:bandit-MDP-Neu, IEEE'Control:bandit-MDP-Neu} consider the known transition kernel and bandit feedback setting. These studies focus on the MDPs with uniform mixing properties, which could be strong. Recent study tries to relax the assumption by considering the larger class of communicating MDPs~\citep{arXiv'21:communicating-MDP}. Our results for infinite-horizon MDPs (see Section~\ref{sec:infinite-horizon}) focus on the known transition and full-information feedback setting and propose an algorithm that enjoys dynamic regret which can recover the best-known $\Ot(\sqrt{\tau T})$ static regret.

\paragraph{Discussion.} We note that all those works focus on the static regret minimization, and our work establishes the dynamic regret for all the three online MDPs models. In a setting most similar to ours, \citet{NIPS'20:dynamic-fei} investigate the dynamic regret of episodic loop-free SSP (with function approximation). They propose two model-free algorithms and prove the dynamic regret bound scaling with non-stationarity of environments. However, we note that their algorithms require the prior knowledge of non-stationarity measure $P_T$ as input, which is generally unavailable to the learner in practice. By contrast, our proposed algorithms are \emph{parameter-free} to those unknown quantities related to the underlying environments (including non-stationarity measure $P_T$ and adaptivity quantity $V_T$). More importantly, we also consider dynamic regret of two more challenging settings of online MDPs — episodic (non-loop-free) SSP and infinite-horizon MDPs.

\subsection{Non-stationary Online Learning}
In this part, we first discuss related works of non-stationary MDPs (whose online loss functions are stochastic, whereas our paper studies the adversarial setting) and then discuss dynamic regret of online convex optimization whose techniques are related to us.

\paragraph{Online Non-stationary MDPs.} Another related line of research is on the online non-stationary MDPs. More specifically, in contrast to learning with adversarial MDPs where the online loss functions are generated in an adversarial way, online non-stationary MDPs consider the setting where reward (loss) functions are generated in a \emph{stochastic} way according to a certain reward distribution that might be non-stationary over the time. For infinite-horizon MDPs, \citet{JMLR'10:UCRL} consider the piecewise-stationary setting where the losses and transition kernels are allowed to change a fixed number and then propose UCRL2 with restarting mechanism to handle the non-stationarity. Later, \citet{arxiv:ns-rl-sliding-window} propose an alternative approach based on the sliding-window update for the same setting, and is later generalized to more general non-stationary setting with gradual drift~\citep{UAI'19:NS-RL}. However, all above approaches require the prior knowledge on the degree of non-stationarity, either the number of piecewise changes or the tensity of gradual drift. Recently, \citet{ICML'20:NSRL-Cheung} propose the Bandit-over-RL algorithm to remove the requirement of unknown non-stationarity measure, but nevertheless can only obtain suboptimal result. Other results for non-stationary MDPs includes episodic non-stationary MDPs~\citep{ICML'21:NS-Episodic-Mao,AISTATS'21:ns-rl-kernel} and episodic non-stationary linear MDPs~\citep{arxiv:efficient-touati,arxiv:ns-rl-zhou}. The techniques in those studies are related to the thread of stochastic linear bandits~\citep{colt'20:linear-mdp-jin, icml'20:kernel-yang, AISTATS'20:restart}. A recent breakthrough is made by~\citet{colt'21:black-box-wei}, who propose a black-box approach that can turn a certain algorithm with optimal static regret in a stationary environment into another algorithm with optimal dynamic regret in a non-stationary environment, and more importantly, the overall approach does not require any prior knowledge on the degree of non-stationarity. They achieve optimal dynamic regret for episodic tabular MDPs~\citep{ICML'21:NS-Episodic-Mao,arxiv:ns-rl-zhou,arxiv:efficient-touati}. For infinite-horizon MDPs, they can achieve optimal dynamic regret when the maximum diameter of  MDP is known or the degree of non-stationarity is known~\citep{arxiv:ns-rl-sliding-window, ICML'20:NSRL-Cheung}; when none of them is know, they attain suboptimal regret but is still the best-known result.

\paragraph{Non-stationary Online Convex Optimization.} Online convex optimization (OCO) is a fundamental and versatile framework for modeling online prediction problems~\citep{book'16:Hazan-OCO}. Dynamic regret of OCO has drawn increasing attention in recent years, and techniques are highly related to ours. We here briefly review some related results and  refer the reader to the latest paper~\citep{JMLR:sword++} for a more thorough treatment. Dynamic regret ensures the online learner to be competitive with a sequence of changing comparators, and is sometimes called tracking regret or switching regret in the study of prediction with expert advice setting~\citep{conf/nips/Cesa-BianchiGLS12}. As mentioned in Section~\ref{sec:introduction}, this paper focuses on the general dynamic regret that allows the any feasible comparators in the decision set, which is also called \emph{universal} dynamic regret. A special variant is called \emph{worst-case} dynamic regret, which only competes with the sequence of minimizers of online functions and has gained much attention in the literature~\citep{OR'15:dynamic-function-VT,AISTATS'15:dynamic-optimistic,ICML'16:Yang-smooth,ICML'16:GyorgyS-shiftregret,OR'19:V_T-pq,NIPS'19:Wangyuxiang,UAI'20:simple,L4DC'21:sc_smooth}. However, the worst-case dynamic regret would be problematic or even misleading in many cases, for example, approaching the minimizer of each-round online function would lead to overfitting when the environments admit some noise~\citep{NIPS'18:Zhang-Ader}. Thus, the universal dynamic regret is generally more desired to be performance measure for algorithm design in non-stationary online learning. We now introduce the results in this regard. \citet{ICML'03:zinkvich} first considers the universal dynamic regret of OCO and shows that Online Gradient Descent (OGD) enjoys $\O(\sqrt{T}(1+P_T))$ dynamic regret, where $P_T$ is the path length of the comparators reflecting the non-stationarity of the environments. Later,~\citet{NIPS'18:Zhang-Ader} propose a novel algorithm and prove a minimax optimal $\O(\sqrt{T(1+P_T)})$ dynamic regret guarantee without requiring the knowledge of unknown $P_T$. Their proposed algorithm employs the meta-base structure, which turns out to be a key component to handle unknown non-stationarity measure $P_T$. When the environments are predictable and the loss functions are convex and smooth, \citet{NeurIPS'20:Sword,JMLR:sword++} develop an algorithm, achieving problem-dependent dynamic regret which could be much smaller than the minimax rate. \citet{colt'21:exp-concave-baby,AISTATS'22:sc-proper} consider OCO with exp-concave or strongly convex loss functions. Dynamic regret of bandit online learning is studied for adversarial linear bandits~\citep{COLT'22:Corral++} and bandit convex optimization~\citep{JMLR'21:BCO}. More discussions can be found in the latest advance~\citep{JMLR:sword++}.
\section{Episodic Loop-free Stochastic Shortest Path}
\label{sec:loop-free-SSP}
This section presents our results for episodic loop-free SSP, a foundational and conceptually simple model of online MDPs. We first introduce the problem setup, then establish the minimax dynamic regret, and finally provide the adaptive results.

\subsection{Problem Setup}
\label{sec:loop-free-preliminary}
An episodic online MDP is specified by a tuple $M = (\X, g, \A, P, \{\ell_k\}_{k=1}^K)$, where $X$ and $A$ are the finite state and action spaces, $g \notin \X$ is the goal state, $P : X \times A \times X\cup\{g\} \rightarrow [0,1]$ is the transition kernel, $K$ is the number of episodes and $\ell_k \in \R_{[0,1]}^{\abs{X} \abs{A}}$ is the loss function in episode $k \in [K]$. An episodic loop-free SSP is an instance of episodic online MDPs and further satisfies the following conditions: state space $X\cup\{g\}$ can be decomposed into $H+1$ non-intersecting layers denoted by $X_0, \ldots, X_{H-1}, g$ such that $X_0 =\{x_0\}$ and $g$ are singletons, and transitions are only possible between the consecutive layers. Notice that the total horizon is $T=KH$.

The learning protocol of episodic loop-free SSP proceeds in $K$ episodes. In each episode $k \in [K]$, environments decide a loss $\ell_k: \X \times \A \rightarrow [0,1]$, and simultaneously the learner starts from state $x_0$ and moves forward across consecutive layers until reaching the goal state $g$. 
We focus on the full-information setting, namely, the loss is revealed to the learner after the episode ends. Notably, no statistical assumption is imposed on the loss sequence, which means the online loss functions can be chosen in an adversarial manner.

\paragraph{Occupancy measure.} Existing studies reveal the importance of the concept ``occupancy measure'' in handling online MDPs~\citep{NIPS'13:MDP-Neu, ICML'19:unknown-transition-Rosenberg}, which deeply connects the problem of online MDPs with online convex optimization. Given a policy $\pi$ and transition kernel $P$, the occupancy measure $q^{\pi} \in \R_{[0,1]}^{\abs{X} \abs{A}}$ is defined as the probability of visiting state-action pair $(x,a)$ by executing policy $\pi$, \ie
$ q^\pi(x,a) = \Pr \left[ x_{l(x)}=x, a_{l(x)}=a \mid P, \pi \right]$, where $l(x)$ is the index of the layer that $x$ belongs to. For an episode loop-free SSP instance $M$, its occupancy measure space is defined as $\Delta(M) = \{ q \mid q \in \R_{\geq 0}^{|X||A|} \text{ and } q \text{ satisfies constraints \textbf{(C1)} and \textbf{(C2)}}\}$, where the two constraints are described below. First, \textbf{(C1)} requires that for all layer $l = 0,\ldots, H-1$,
\[
    \sum_{x\in X_l}\sum_{a \in A} q(x,a) = 1,
\]
and second \textbf{(C2)} requires that for every $x \in X \setminus \{x_0\}$ the following equation holds:
\[
    \sum_{a \in A} q(x,a) = \sum_{x^\prime \in X_{l(x)-1}} \sum_{a^\prime \in A} P(x | x^\prime, a^\prime)q(x^\prime, a^\prime)
\]
For any occupancy measure $q \in \Delta(M)$, it induces a policy $\pi$ such that
\begin{equation}
    \label{eq:induce-policy}
    \pi(a |x) = \frac{q(x,a)}{\sum_{b\in\X} q(x,b)}
\end{equation}
holds for all $(x, a) \in \X \times \A$. Existing study shows that there exists a unique induced policy for all measures in $\Delta(M)$ and vice versa~\citep{NIPS'13:MDP-Neu}. Then, the expected loss of any policy $\pi$ at episode $k$ can be written as
\[
    \E\left[\sum_{l=0}^{H-1}\ell_k(x_l, a_l \mid P, \pi)\right] = \sum_{l=0}^{H-1} \sum_{x \in \X_l} \sum_{a \in \A} q^{\pi}(x,a) \ell_t(x,a) = \sum_{x \in \X} \sum_{a\in \A}q^{\pi}(x,a) \ell_t(x,a) = \inner{q^{\pi}}{\ell_k},
\]
where the expectation is taken over the randomness of the policy and transition kernel. Note the total horizon $T$ of episodic loop-free SSP can be divided into $K$ episodes, each with horizon length $H$, \ie $T = KH$. Denote by $\pi_{k,l}$ the policy at layer $l \in \{0\} \cup [H-1]$ in episode $k \in [K]$, the policy sequence $\pi_1, \ldots, \pi_T$ in~\pref{eq:static-regret} can be represented by $\pi_{1,0}, \ldots, \pi_{1, H-1}, \pi_{2, 0}, \ldots, \pi_{K, H-1}$. We use the notation $\pi_k$ as a shorthand of $\pi_{k, 0:H-1}$ for notational simplicity. Then we can rewrite the expected static regret in~\pref{eq:static-regret} as follows:
\begin{align}
    \E\left[{\Reg}_K\right] \triangleq {} & \E\left[\sum_{k=1}^K \sum_{l=0}^{H-1}\ell_k(x_l, \pi_{k,l}(x_l))\right] - \min_{\pi \in \Pi} \E\left[\sum_{k=1}^K  \sum_{l=0}^{H-1}\ell_k(x_l, \pi_l(x_l))\right] \label{eq:static-rewrite-loop-free} \\
    = {}                                  & \sum_{k=1}^K \inner{q^{\pi_k}}{\ell_k} - \min_{q\in \Delta(M)} \sumk \inner{q}{\ell_k}. \nonumber
\end{align}

\paragraph{Dynamic regret.}As discussed before, the static regret metric not suitable to guide the algorithm design in non-stationary environments. To this end, we focus on the expected \emph{dynamic regret} that competes the learner's performance against any sequence of changing policies $\pi_{1:K}^\C$, as defined in~\pref{eq:dynamic-regret}. Similar to the derivation in~\pref{eq:static-rewrite-loop-free}, we can also rewrite the expected dynamic regret into a form with respect to the occupancy measure:
\begin{align}
    \E\left[{\DReg}_K(\pi_{1:K}^{\C})\right] \triangleq {} & \E\left[\sum_{k=1}^K\sum_{l=0}^{H-1}\ell_k(x_l, \pi_{k,l}(x_l))\right] -  \E\left[\sum_{k=1}^K\sum_{l=0}^{H-1}\ell_k(x_l, \pi_{k,l}^\C(x_l))\right]\nonumber \\
    = {}                                                   & \sum_{k=1}^K \inner{\qk}{\ell_k} - \sum_{k=1}^K \inner{\qkc}{\ell_k},\label{eq:dynamic-regret-episodic-loop-free}
\end{align}
where $\qkc$ is the occupancy measure of the compared policy $\piC_k$ for all $k \in [K]$. The non-stationarity measure is naturally defined as $P_T = \sum_{k=2}^K \sum_{l=0}^{H-1} \norm{\pi_{k,l}^{\C}-\pi_{k-1,l}^{\C}}_{1, \infty}$.

\subsection{Minimax Dynamic Regret}
\label{sec:non-stationarity}
Before presenting our algorithm for dynamic regret of episodic loop-free SSP, we first briefly review the {O-REPS} algorithm of~\citet{NIPS'13:MDP-Neu} developed for minimizing the static regret. The key idea of {O-REPS} is to perform the online mirror descent over the occupancy measure space $\Delta(M)$, specifically, at episode $k+1$, the learner updates the prediction by
\begin{equation*}
    {q}_{k+1} = \argmin_{q\in \Delta(M)}{\eta}\inner{q}{\ell_k} + \Dp(q , {q}_{k}),
\end{equation*}
where $\eta > 0$ is the step size, $\psi(q) = \sum_{x,a} q(x,a) \log{q(x,a)}$ is the standard negative entropy, and $\Dp(\cdot,\cdot)$ is the induced Bregman divergence.~\citet{NIPS'13:MDP-Neu} prove that {\mbox{O-REPS}} enjoys an $\O(H\sqrt{K \log{(|X||A|)}})$ static regret.

By slightly modifying the algorithm, in following lemma we show \mbox{O-REPS} over a clipped occupancy measure space can achieve dynamic regret guarantees. Specifically, define the clipped space as $\Delta(M, \alpha) = \{q \mid q \in \Delta(M), \text{ and } q(x,a)\geq \alpha, \forall x, a\}$ with $0 < \alpha < 1$ being the clipping parameter, we prove that performing \mbox{O-REPS} over $\Delta(M, \alpha)$ ensures the following dynamic regret, whose proof can be found in Appendix~\ref{sec:loop-free-ssp-proof-base}.
\begin{myLemma}
    \label{lem:loop-free-ssp-1}
    Set $q_1 = \argmin_{q \in \Delta(M,\alpha)} \psi(q)$. For any compared policies $\piC_1,\ldots,\piC_K \in \{ \pi \mid q^\pi \in \Delta(M,\alpha) \}$, \mbox{O-REPS} over a clipped occupancy measure space $\Delta(M, \alpha)$ ensures
    $$
        \sum_{k=1}^K \inner{q_k - \qkc}{\ell_k}\leq \eta T + \frac{1}{\eta}\left({H\log{\frac{|X||A|}{H}}+ \Pb_T \log{\frac{1}{\alpha}}}\right),
    $$
    where $\Pb_T = \Pb_T(\piC_1,\ldots,\piC_K) = \sum_{k=2}^K \norm{\qkc - q^{\pi_{k-1}^\C}}_1$ is the path length of occupancy measures.
\end{myLemma}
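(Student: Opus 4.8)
The plan is to run the standard analysis of online mirror descent (OMD) with a \emph{time-varying} comparator, specialized to the negative-entropy mirror map $\psi$ over the clipped flow polytope $\Delta(M,\alpha)$, and then to exploit two structural facts: that every occupancy measure carries the same total mass $H$ (one unit per layer), so that $\Dp$ coincides with a Kullback--Leibler divergence with the linear terms cancelling; and that the clipping $q(x,a)\ge\alpha$ controls the logarithmic factors arising when the comparator moves. Writing $u_k := \qkc$ throughout, I would first record the one-step inequality obtained from the first-order optimality condition of the update $q_{k+1}=\argmin_{q\in\Delta(M,\alpha)}\eta\inner{q}{\ell_k}+\Dp(q,q_k)$ together with the three-point identity for Bregman divergences, namely
\[
\eta\inner{q_k-u_k}{\ell_k}\le \Dp(u_k,q_k)-\Dp(u_k,q_{k+1})+\big(\eta\inner{q_k-q_{k+1}}{\ell_k}-\Dp(q_{k+1},q_k)\big).
\]
Summing over $k\in[K]$ then splits the bound into a comparator-telescoping part and a stability part, which I treat separately.

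For the stability part, I would use that $\psi$ restricted to each of the $H$ layer-simplices is $1$-strongly convex with respect to $\norm{\cdot}_1$, so that by Pinsker's inequality and a Cauchy--Schwarz step across the $H$ layers one gets $\Dp(q_{k+1},q_k)\ge\frac{1}{2H}\norm{q_{k+1}-q_k}_1^2$. Combining this with H\"older's inequality and the scalar maximization $\sup_z\{\eta z-z^2/(2H)\}=\eta^2H/2$, and using $\norm{\ell_k}_\infty\le1$ (as $\ell_k\in\R_{[0,1]}^{\abs{X}\abs{A}}$), each summand is at most $\eta^2 H/2$; since $T=KH$ this part contributes at most $\eta^2 T/2\le\eta^2 T$, which after dividing the final inequality by $\eta$ becomes the $\eta T$ term.

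The crux---and the step I expect to be the main obstacle---is the comparator-telescoping part $\sum_{k=1}^K[\Dp(u_k,q_k)-\Dp(u_k,q_{k+1})]$, which does \emph{not} telescope because $u_k$ varies with $k$. I would reindex (Abel summation) to rewrite it as $\Dp(u_1,q_1)-\Dp(u_K,q_{K+1})+\sum_{k=2}^K[\Dp(u_k,q_k)-\Dp(u_{k-1},q_k)]$ and discard the non-positive term $-\Dp(u_K,q_{K+1})\le0$. Using the equal-mass cancellation $\Dp(u,q)=\sum_i u_i\log(u_i/q_i)$, each increment decomposes as $[\psi(u_k)-\psi(u_{k-1})]-\sum_i(u_{k,i}-u_{k-1,i})\log q_{k,i}$; the first differences telescope to $\psi(u_K)-\psi(u_1)$, while the second term is controlled by H\"older together with the clipping bound $\abs{\log q_{k,i}}\le\log(1/\alpha)$, yielding $\log(1/\alpha)\sum_{k=2}^K\norm{u_k-u_{k-1}}_1=\Pb_T\log(1/\alpha)$. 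It then remains to bound $\Dp(u_1,q_1)+\psi(u_K)-\psi(u_1)$. Here I would invoke the first-order optimality of $q_1=\argmin_{q\in\Delta(M,\alpha)}\psi(q)$, which gives $\Dp(u_1,q_1)\le\psi(u_1)-\psi(q_1)$ for any $u_1\in\Delta(M,\alpha)$; the $\psi(u_1)$ contributions cancel, and since $\psi(u_K)\le0$ together with the layerwise max-entropy bound $-\psi(q_1)\le\sum_{l=0}^{H-1}\log(\abs{X_l}\abs{A})\le H\log\frac{\abs{X}\abs{A}}{H}$ (by concavity of $\log$ and $\sum_l\abs{X_l}=\abs{X}$), this whole quantity is at most $H\log\frac{\abs{X}\abs{A}}{H}$. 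Assembling the two parts and dividing by $\eta$ yields the claimed bound.
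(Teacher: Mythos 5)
Your proof is correct, and its core---the handling of the moving comparator, which you rightly flag as the crux---coincides exactly with the paper's own argument: the same Abel-summation reindexing of $\sum_{k}\big(\Dp(\qkc,q_k)-\Dp(\qkc,q_{k+1})\big)$ with the nonpositive boundary term dropped, the same equal-mass cancellation reducing $\Dp$ to a KL form, the same H\"older-plus-clipping step yielding $\Pb_T\log(1/\alpha)$, and the same combination of first-order optimality of $q_1$ with a max-entropy bound to get $H\log\frac{|X||A|}{H}$. Where you genuinely diverge is the stability term. The paper routes the one-step analysis through the \emph{unconstrained} multiplicative-weights iterate $q_{k+1}^\prime(x,a)=q_k(x,a)e^{-\eta\ell_k(x,a)}$, bounds $\inner{q_k-q_{k+1}^\prime}{\ell_k}\le\eta\sum_{x,a}q_k(x,a)\ell_k^2(x,a)\le\eta H$ via $1-e^{-x}\le x$, and then invokes the Pythagorean theorem to pass from $\Dp(\qkc,q_{k+1}^\prime)$ to $\Dp(\qkc,q_{k+1})$ for the projected iterate. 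You instead stay with the constrained iterate, lower-bound $\Dp(q_{k+1},q_k)\ge\frac{1}{2H}\norm{q_{k+1}-q_k}_1^2$ by layerwise Pinsker plus Cauchy--Schwarz---which is precisely the strong-convexity fact the paper records as Lemma~\ref{lem:strongly-convexity} but deploys only later, in the proof of Theorem~\ref{thm:loop-free-adaptivity}---and finish with the quadratic maximization $\sup_z\{\eta z-z^2/(2H)\}=\eta^2H/2$. Both are standard OMD variants producing the same $\eta T$ term (yours with a spare factor of $1/2$); your route avoids the explicit closed-form update and the Pythagorean projection step, at the cost of needing the strong-convexity constant on the product of layer simplices, a tool the paper proves anyway.
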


To achieve a favorable dynamic regret, we need to set the step size $\eta$ optimally to balance time horizon $T$ and the path length of occupancy measures $\Pb_T$. However, we actually do not have prior knowledge of $\Pb_T$ even after the horizon ends since the compared policies can be arbitrarily chosen in the feasible set. Thus, we cannot apply the standard adaptive step size tuning techniques such as doubling trick~\citep{JACM'97:doubling-trick} or self-confident tuning~\citep{JCSS'02:Auer-self-confident} to remove the dependence on $\Pb_T$.
To address the issue, we employ a meta-base two-layer structure to handle the uncertainty~\citep{NIPS'18:Zhang-Ader,NeurIPS'20:Sword}. Specifically, we first construct a step size pool $\H = \{\eta_1, \cdots,\eta_N\}$ ($N$ is the number of candidate step sizes and is of order $\O(\log T)$ whose configuration will be specified later) to discretize value range of the optimal step size; and then initialize multiple base-learners simultaneously, denoted by $\B_1, \cdots, \B_N$, where $\B_i$ returns her prediction $q_{k,i}$ by performing \mbox{O-REPS} with step size $\eta_i \in \H$; finally a meta-algorithm is used to combine predictions of all base-learners and yield the final output $\{q_k\}_{k=1}^K$. Below, we specify the details.

\begin{algorithm}[!t]
    \caption{DO-REPS}
    \label{alg:loop-free}
    \begin{algorithmic}[1]
        \REQUIRE step size pool $\H = \{\eta_1, \ldots, \eta_N\}$, learning rate $\varepsilon$ and clipping parameter $\alpha$.
        \STATE Define: $\psi(q) = \sum_{x,a} q(x,a)\log q(x,a), \forall q \in \Delta(M, \alpha)$.
        \STATE Initialization: set $q_{1,i} = \argmin_{q\in \Delta(M, \alpha)}\psi(q)$ and $p_{1,i} = 1/N, \forall i \in [N]$.
        \FOR{$k=1$ to $K$}
        \STATE Receive $q_{k,i}$ from base-learner $\B_i$ for $i \in [N]$.
        \STATE Compute the occupancy measure $q_k = \sum_{i=1}^N p_{k,i} q_{k,i}$.
        \STATE Play the induce policy $\pi_k(a|x) = q_k(x,a)/\sum_{b \in \X}q_k(x,b), \forall x\in \X, a\in \A$.
        \STATE Suffer losses $\{\ell_k(x_0, a_0), \ldots \ell_k(x_{H-1}, a_{H-1})\}$ and observe loss function $\ell_k$.
        \STATE Update the weight by $p_{k+1,i} \propto \exp(-\varepsilon \sum_{s=1}^k h_{s,i})$ where $h_{k,i} = \inner{q_{k,i}}{\ell_k}, \forall i \in [N]$.
        \STATE Each base-learner $\B_i$ updates ${q}_{k+1,i} = \argmin_{q\in \Delta(M, \alpha)}{\eta_i}\inner{q}{\ell_k} + \Dp(q , {q}_{k,i})$.
        \ENDFOR
    \end{algorithmic}
\end{algorithm}

At episode $k\in [K]$, the learner receives the decision $q_{k,i}$ from each base-learner $\B_i, \forall i \in [N]$ and the weight vector $p_k \in \Delta_N$ from meta-algorithm. Then the learner outputs the decisions by $q_k = \sum_{i=1}^N p_{k,i} q_{k,i}$, plays the corresponding policy $\pi_k(a|x) \propto q_k(x,a), \forall x, a$, suffers loss $\{\ell_k(x_0, a_0), \ldots \ell_k(x_{H-1}, a_{H-1})\}$ where $\{(x_0, a_0), \ldots (x_{H-1}, a_{H-1})\}$ is the traversed trajectory and observes the loss function $\ell_k$.

After that, the base-algorithm updates by performing \mbox{O-REPS} over the clipped occupancy space $\Delta(M,\alpha)$ with a customized step size in pool $\H$. Concretely, for $i \in [N]$, denote by $\{q_{k,i}\}_{k=1}^K$ the occupancy measure sequence returned by the base-learner $\B_i$, and the base-learner $\B_i$ updates according to
\begin{equation*}
    {q}_{k+1,i} = \argmin_{q\in \Delta(M,\alpha)}{\eta_i}\inner{q}{\ell_k} + \Dp(q , {q}_{k,i}),
\end{equation*}
where $\eta_i \in \H$ is the step size associated with the base-learner $\B_i$.

The meta-algorithm aims to track the unknown best base-learner. We employ the Hedge algorithm~\citep{JCSS'97:boosting} that updates the weight $p_{k+1} \in \Delta_N$ by $p_{k+1,i} \propto \exp(-\varepsilon \sum_{s=1}^k h_{s,i})$ where $\varepsilon > 0$ is the learning rate of the meta-algorithm, $h_k \in \R^N$ evaluates the performance of the base-learners and is set as $h_{k,i} = \inner{q_{k,i}}{\ell_k}$ for $i \in [N]$.

Algorithm~\ref{alg:loop-free} summarizes our proposed Dynamic \mbox{O-REPS} (DO-REPS) algorithm. In the following, we present the dynamic regret guarantee for the proposed algorithm.
\begin{myThm}
    \label{thm:loop-free-non-stationarity}
    Set the step size pool $\H = \{\eta_i = 2^{i-1} \sqrt{K^{-1}\log(|X||A|/H)} \mid i\in [N] \}$, where $N=\ceil{\frac{1}{2} \log(1+\frac{4K\log{T}}{\log(|X||A|/H)})}+1$, the learning rate of meta-algorithm as $\varepsilon  = \sqrt{(\log{N})/(HT)}$ and the clipping parameter $\alpha=1/T^2$. \mbox{DO-REPS} (Algorithm~\ref{alg:loop-free}) satisfies
    \begin{align*}
        \E[{\DReg}_K(\pi_{1:K}^\C)] \leq \O\big(\sqrt{T(H\log{{|X||A|}} + \Pb_T \log{T})}\big) \leq \O\big(H\sqrt{K(\log{{|X||A|}} + P_T \log{T})}\big),
    \end{align*}
    where $\Pb_T = \Pb_T(\piC_1,\ldots,\piC_K) = \sum_{k=2}^K \norm{\qkc - q^{\pi_{k-1}^\C}}_1$ is the path length of occupancy measures and $P_T = \sum_{k=2}^K \sum_{l=0}^{H-1} \norm{\pi_{k,l}^{\C}-\pi_{k-1,l}^{\C}}_{1, \infty}$ is the path length of the compared policies.
\end{myThm}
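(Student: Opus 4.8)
The plan is to exploit the meta-base structure of \pref{alg:loop-free} and decompose the dynamic regret against the true comparators $\qkc$ into three pieces: a \emph{meta-regret} measuring how well the Hedge aggregation tracks the single best base-learner, a \emph{base-regret} of that base-learner against a clipped surrogate comparator, and a \emph{clipping error} incurred by replacing $\qkc$ by a surrogate lying in $\Delta(M,\alpha)$. Concretely, fixing any index $i^\star \in [N]$ and any surrogate $\tilde q_k^\C \in \Delta(M,\alpha)$, I would write
\[
\sum_{k=1}^K \inner{q_k - \qkc}{\ell_k} = \sum_{k=1}^K \inner{q_k - q_{k,i^\star}}{\ell_k} + \sum_{k=1}^K \inner{q_{k,i^\star} - \tilde q_k^\C}{\ell_k} + \sum_{k=1}^K \inner{\tilde q_k^\C - \qkc}{\ell_k},
\]
and bound the three terms separately, finally choosing $i^\star$ and $\tilde q_k^\C$ to optimize the trade-offs.

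For the meta-regret, since $q_k = \sum_i p_{k,i} q_{k,i}$ and the surrogate loss is linear, $\sum_k \inner{q_k - q_{k,i^\star}}{\ell_k} = \sum_k(\inner{p_k}{h_k} - h_{k,i^\star})$ is exactly the static regret of Hedge fed with $h_{k,i} = \inner{q_{k,i}}{\ell_k}$. Because every $q_{k,i} \in \Delta(M,\alpha)$ has total mass $H$ and $\ell_k \in \R_{[0,1]}^{|X||A|}$, the feedback satisfies $h_{k,i} \in [0,H]$, so the standard exponential-weights analysis gives a bound of order $\frac{\log N}{\varepsilon} + \varepsilon H^2 K$. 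With $\varepsilon = \sqrt{(\log N)/(HT)}$ and $H^2 K = HT$, this is $\O(\sqrt{HT \log N}) = \Ot(H\sqrt{K})$, which is absorbed into the target rate since $N = \O(\log T)$ contributes only a doubly-logarithmic factor.

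For the base-regret, I would invoke \pref{lem:loop-free-ssp-1} for $\B_{i^\star}$ against $\tilde q_k^\C$, giving $\eta_{i^\star} T + \frac{1}{\eta_{i^\star}}(H\log\frac{|X||A|}{H} + \Pb_T \log\frac{1}{\alpha})$, where the surrogate's path length is no larger than $\Pb_T$ because mixing with a fixed reference contracts consecutive differences. The crucial parameter-free step is that the ideal step size $\eta^\star = \sqrt{(H\log\frac{|X||A|}{H} + \Pb_T\log\frac{1}{\alpha})/T}$ depends on the unknown $\Pb_T$, yet always lies in $[\eta_1,\eta_N]$: at $\Pb_T = 0$ it equals $\eta_1 = \sqrt{K^{-1}\log(|X||A|/H)}$, while the trivial bound $\Pb_T \le 2T$ together with $\log\frac{1}{\alpha} = 2\log T$ forces $\eta^\star \le \eta_N$ by the very choice of $N$. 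The geometric spacing of $\H$ then yields some $\eta_{i^\star}$ with $\eta_{i^\star} \le \eta^\star \le 2\eta_{i^\star}$, so this base-learner attains $\O(\sqrt{T(H\log|X||A| + \Pb_T\log T)})$ up to a constant.

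Finally I would construct the surrogate by mixing toward a fully-supported reference, $\tilde q_k^\C = (1-\gamma)\qkc + \gamma q^\dagger$ with $q^\dagger \in \Delta(M,\alpha)$, taking $\gamma$ just large enough that $\tilde q_k^\C \in \Delta(M,\alpha)$; the clipping error is then at most $2\gamma H K$, which $\alpha = 1/T^2$ renders lower-order. Combining the three bounds gives the first inequality. The second inequality follows from a simulation/coupling argument showing $\norm{\qkc - q^{\pi_{k-1}^\C}}_1 \le H \sum_l \norm{\pi_{k,l}^\C - \pi_{k-1,l}^\C}_{1,\infty}$ — the layer-$l$ discrepancy of two occupancy measures being controlled by the cumulative per-layer policy discrepancies up to layer $l$ — whence $\Pb_T \le H P_T$ and $\sqrt{T(H\log|X||A| + \Pb_T\log T)} \le H\sqrt{K(\log|X||A| + P_T\log T)}$. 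I expect the main obstacle to be the comparator clipping: controlling $\gamma$ (equivalently, guaranteeing a well-supported reference in $\Delta(M,\alpha)$) without inflating the error is delicate, since loop-free states can be hard to reach, and it is this step — rather than the otherwise standard meta-base bookkeeping — that must be handled carefully to retain the clean $H\log(|X||A|/H)$ dependence instead of a cruder $H\log(1/\alpha)$.
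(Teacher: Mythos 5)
Your proposal matches the paper's proof essentially step for step: the same surrogate-comparator construction (the paper takes $u_k = (1-\tfrac{1}{T})\qkc + \tfrac{1}{T}q^{\pi^u} \in \Delta(M,1/T^2)$, incurring the same $\O(1)$ clipping error), the same Hedge meta-regret bound with losses in $[0,H]$ under $\varepsilon = \sqrt{(\log N)/(HT)}$, the same covering argument showing the optimal step size for \pref{lem:loop-free-ssp-1} lies in $[\eta_1,\eta_N]$ so some $\eta_{i^*}$ is within a factor $2$ of it, and the same conversion $\Pb_T \leq H P_T$ (\pref{lem:occupancy-to-policy-loop-free}). The reachability issue you flag at the end is dispatched in the paper by assuming without loss of generality that all states are reachable under the uniform policy (unreachable states are simply removed) and taking $T$ large enough that $q^{\pi^u} \in \Delta(M,1/T)$, so it does not require any delicate handling.
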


\begin{myRemark}
    Setting compared policies $\pi_{1:K}^\C = \pi^*$ (then $P_T = 0$ and also $\bar{P}_T = 0$), Theorem~\ref{thm:loop-free-non-stationarity}  recovers the $\O(H\sqrt{K\log{{|X||A|}}})$ minimax optimal static regret of~\citet{NIPS'13:MDP-Neu}.
\end{myRemark}
The proof can be found in Appendix~\ref{sec:loop-free-ssp-proof-overall}. Note that Theorem~\ref{thm:loop-free-non-stationarity} presents two dynamic regret bounds in terms of either the path length of occupancy measures $\Pb_T$ or the path length of compared policies $P_T$ (see definition at the end of Section~\ref{sec:loop-free-preliminary}). To achieve the latter one, we establish the relationship of path length variations between compared policies and their induced occupancy measures. Indeed, we prove that $\Pb_T \leq H P_T$ holds in the episode loop-free SSP, and a formal description can be found in Lemma~\ref{lem:occupancy-to-policy-loop-free} of Appendix~\ref{sec:appendix-loop-free-SSP-occupancy-measure}.

We finally establish the lower bound in Theorem~\ref{Thm:lower-bound-loop-free}, which indicates the minimax optimality of our attained upper bound in terms of $T$ and $\bar{P}_T$ (up to logarithmic factors). The proof of Theorem~\ref{Thm:lower-bound-loop-free} can be found in Appendix~\ref{sec-appendix:lower-bound-loop-free}.
\begin{myThm}
    \label{Thm:lower-bound-loop-free}
    For any online algorithm and any $\gamma \in [0, 2T]$, there exists an episode loop-free SSP with $K$ episodes, $H$ layers, $|X|$ states and $|A|$ actions and a sequence of compared policies $\pi_1^\C, \ldots, \pi_K^\C$ such that
    $$
        \Pb_T(\piC_1,\ldots,\piC_K) \leq \gamma \mbox{ and } \E[{\DReg}_K(\pi_{1:K}^\C)] \geq  \Omega (\sqrt{T(H+\gamma)\log|X||A|})
    $$
    under the full-information and known transition setting.
\end{myThm}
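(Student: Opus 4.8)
The plan is to prove the lower bound by combining a \emph{static} regret lower bound for loop-free SSP with an \emph{interval-blocking} construction, and to split the analysis according to whether $\gamma$ is small ($\gamma \le 2H$) or large ($\gamma > 2H$); this reflects that the target rate $\sqrt{T(H+\gamma)\log|X||A|}$ is governed by its $\sqrt{TH}$ term in the former regime and by its $\sqrt{T\gamma}$ term in the latter. The common building block is a static lower bound: for any (possibly randomized) algorithm there exists a known-transition, full-information loop-free SSP on which $\E[\Reg_K] \ge \Omega(H\sqrt{K\log(|X||A|/H)}) = \Omega(\sqrt{TH\log(|X||A|/H)})$, matching the $\Ot(H\sqrt{K})$ rate achieved by O-REPS (\pref{lem:loop-free-ssp-1}) up to logarithmic factors; here $\log(|X||A|/H)$ coincides with $\log|X||A|$ up to constants whenever $|X||A|$ is polynomially larger than $H$.

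To build this block I would fix a deterministic transition structure in which each of the $H$ layers carries $\Theta(|X|/H)$ states and $|A|$ actions, so that the restriction of an occupancy measure to layer $l$ is just a distribution over the $\Theta(|X||A|/H)$ state--action pairs in that layer (the flow constraint \textbf{(C2)} being satisfiable for any such layerwise distribution), and the per-episode loss decomposes additively across layers. Drawing the per-layer loss entries as independent random signs in $[0,1]$ makes the instance split into $H$ independent expert problems, each over $\Theta(|X||A|/H)$ experts run for $K$ rounds; a standard anti-concentration/minimax-expert argument lower-bounds each layer's expected regret by $\Omega(\sqrt{K\log(|X||A|/H)})$, and summing the $H$ independent contributions yields $\Omega(H\sqrt{K\log(|X||A|/H)})$. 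In the regime $\gamma \le 2H$ this already suffices: take the comparator to be the single best fixed policy in hindsight, so $\Pb_T = 0 \le \gamma$, and observe $\sqrt{T(H+\gamma)} \le \sqrt{3TH}$, so the static bound is of the required order.

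For the regime $\gamma > 2H$, I would partition the $K$ episodes into $M = \lfloor \gamma/(2H)\rfloor \ge 1$ consecutive blocks of $K/M$ episodes and place a fresh, independent copy of the random-loss instance on each block. Taking the comparator to be constant within each block and equal to that block's best fixed policy in hindsight, the dynamic regret against this comparator sequence is exactly the sum of the $M$ per-block static regrets, each of expectation $\Omega(H\sqrt{(K/M)\log(|X||A|/H)})$; hence the total is $\Omega(H\sqrt{MK\log(|X||A|/H)})$. Since $H^2 MK = (HM)(HK) = \Theta(\gamma)\cdot T$ by $T = KH$ and $M = \Theta(\gamma/H)$, this is $\Omega(\sqrt{T\gamma\log(|X||A|/H)})$. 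For the path length, the comparator switches at most $M-1$ times, and each switch moves the occupancy measure by at most $\norm{\qkc - q^{\pi_{k-1}^\C}}_1 \le 2H$ (every occupancy measure has $\ell_1$-norm $H$), so $\Pb_T \le 2(M-1)H \le 2MH \le \gamma$ \emph{deterministically}. As $\gamma > H$ gives $\sqrt{T(H+\gamma)} \le \sqrt{2T\gamma}$, the blocking bound is again of the required order.

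The main obstacle is the static building block: pinning down the $\log(|X||A|/H)$ dependence through a clean anti-concentration estimate while realizing the construction as a genuine loop-free SSP with the prescribed $|X|,|A|,H$ and known transitions, and verifying that \textbf{(C2)} permits the comparator to realize each layer's best action so that the $H$ layerwise problems are genuinely independent. A secondary point needing care is the probabilistic-method bookkeeping: the regret estimate holds only in expectation over the random losses, so one fixes a favorable loss realization, whereas $\Pb_T \le \gamma$ must hold for the \emph{realized} comparator --- this is automatic because the switch count $M-1$ and the per-switch bound $2H$ are deterministic, independent of the losses. Finally I would check that the two regimes stitch into the single bound $\Omega(\sqrt{T(H+\gamma)\log|X||A|})$ for all $\gamma \in [0,2T]$, noting that the feasibility $M \le K$ forces exactly $\gamma \le 2HK = 2T$, matching the stated range.
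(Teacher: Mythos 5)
Your proposal is correct and follows essentially the same route as the paper's proof: a two-case split at $\gamma \approx 2H$, using the static lower bound with a fixed comparator (zero path length) when $\gamma \le 2H$, and a piecewise-stationary blocking construction with $\Theta(\gamma/H)$ blocks, per-block static lower bounds, and the deterministic $2H$-per-switch path-length accounting when $\gamma > 2H$. The only difference is that the paper invokes the $\Omega(H\sqrt{K\log|X||A|})$ static lower bound of Neu et al.~(2013) as a black box, whereas you additionally sketch a from-scratch derivation of that building block via layerwise independent expert problems.
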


\subsection{More Adaptive Results}
\label{sec:loop-free-adaptive}
In previous subsection, online loss functions are supposed to be chosen in a possibly adversarial manner. However, in certain applications, they might have some patterns and could be predictable. In such cases, there is a chance to enhance our algorithm to enjoy an adaptive bound better than the minimax rate. Thus, we propose the Optimistic \mbox{DO-REPS} algorithm that can exploit the predictability of environments to obtain more adaptive bounds.

\begin{algorithm}[t]
    \caption{Optimistic \mbox{DO-REPS}}
    \label{alg:ODO-REPS}
    \begin{algorithmic}[1]
        \REQUIRE size pool $\H = \{\eta_1, \ldots, \eta_N\}$, learning rate $\varepsilon$ and clipping parameter $\alpha$.
        \STATE Define: $\psi(q) = \sum_{x,a} q(x,a)\log q(x,a), \forall q \in \Delta(M, \alpha)$.
        \STATE Initialization: $\qh_{1,i} = \argmin_{q \in \Delta(M, \alpha)} \psi(q)$, and $p_{1,i} = 1/N, \forall i \in [N]$.
        \FOR{$k=1$ to $K$}
        \STATE Receive optimism $m_k \in \R_{[0,1]}^{|X||A|}$ and feed it to all base-learners.
        \STATE Each base-learner $\B_i$ updates $q_{k,i} = \argmin_{q \in \Delta(M, \alpha)}\eta_i \inner{q}{m_k}+\Dp(q, \qh_{k,i})$.
        \STATE Update the weight $p_k \in \Delta_N$ by $p_{k,i} \propto \exp\big( -\varepsilon (\sum_{s=1}^{k-1} h_{s,i} + M_{k}) \big)$, where $h_{k,i} = \inner{q_{k,i}}{\ell_k}$ and $M_{k,i} = \inner{q_{k,i}}{m_k}, \forall i \in [N]$.
        \STATE Compute the occupancy measure $q_k = \sum_{i=1}^N p_{k,i} q_{k,i}$.
        \STATE Play the induce policy $\pi_k(a|x) = q_k(x,a)/\sum_{b \in \X}q_k(x,b), \forall x\in \X, a\in \A$.
        \STATE Suffer losses $\{\ell_k(x_0, a_0), \ldots \ell_k(x_{H-1}, a_{H-1})\}$ and observe loss function $\ell_k$.
        \STATE Each base-learner $\B_i$ updates $\qh_{k+1,i} = \argmin_{q \in \Delta(M, \alpha)}\eta_i \inner{q}{\ell_k}+\Dp(q, \qh_{k,i})$.\\
        \ENDFOR
    \end{algorithmic}
\end{algorithm}

The Optimistic \mbox{DO-REPS} algorithm follows the meta-base two-layer structure similar to the \mbox{DO-REPS} algorithm proposed in the last subsection. We adopt the optimistic online learning framework~\citep{COLT'12:variation-Yang,COLT'13:Optimistic-OMD} to exploit the predictability of the environments. Specifically, let $m_k$ be the prior knowledge (or called \emph{optimism}) at the beginning of episode $k$, serving as a guess of the loss $\ell_k$. Optimistic \mbox{DO-REPS} maintains $N$ base-learners denoted by $\B_1,\ldots,\B_N$, where the base-learner $\B_i$ updates by
\begin{align}
    \label{eq:optimistic-loop-free-base}
    {q}_{k,i}       = \argmin_{q\in \Delta(M, \alpha)}\eta_i\inner{q}{m_{k}} + \Dp(q, \hat{q}_{k,i}), \mbox{ and }
    \hat{q}_{k+1,i} = \argmin_{q\in \Delta(M, \alpha)}\eta_i\inner{q}{\ell_k} + \Dp(q, \hat{q}_{k,i}).
\end{align}
Here $\eta_i $ is the associated step size from step size pool $\H$. The meta-algorithm also takes the optimism into account and  updates the weight vector $p_k \in \Delta_N$ by $p_{k,i} \propto \exp\big( -\varepsilon (\sum_{s=1}^{k-1} h_{s,i} + M_{k}) \big)$, where $h_{s,i} = \inner{q_{s,i}}{\ell_k}, \forall i \in [N]$ evaluates the performance of the base-learner $\B_i$, and $M_{k}$ is set as
$
    M_{k,i} = \inner{q_{k, i}}{m_{k}}, \forall i \in [N]
$, serving as the hint of the next-round $h_k$ for meta-algorithm's update. Algorithm~\ref{alg:ODO-REPS} summarizes the procedures, and the improved algorithm enjoys the following adaptive bound.
\begin{myThm}
    \label{thm:loop-free-adaptivity}
    Set the step size pool $\H = \{\eta_i = 2^{i-1} \sqrt{K^{-1}\log(|X||A|/H)} \mid  i\in[N] \}$, where $N=\ceil{\frac{1}{2} \log(K+\frac{4K^2\log{T}}{\log(|X||A|/H)})}+1$,  learning rate $\varepsilon  = \sqrt{(\log{N})/(H^2(1+V_K))}$ and the clipping parameter $\alpha=1/T^2$. Optimistic \mbox{DO-REPS} (Algorithm~\ref{alg:ODO-REPS}) satisfies
    \begin{align*}
        \E[{\DReg}_K(\pi_{1:K}^\C)] \leq \O\big(H\sqrt{V_K(\log{|X||A|} + P_T \log{T})}\big),
    \end{align*}
    where $V_K = \sum_{k=1}^K \norm{\ell_k-m_k}_\infty^2$ measures the quality of optimism. In particular, setting the optimism as the last-round loss (namely, $m_k = \ell_{k-1}$) yields the dynamic regret scaling with variations in online loss functions, i.e., $\sum_{k=1}^K \norm{\ell_k-\ell_{k-1}}_\infty^2$.
\end{myThm}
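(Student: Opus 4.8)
The plan is to follow the online-ensemble template: for an arbitrary base index $i$, split
\[
\sum_{k=1}^K \inner{q_k - \qkc}{\ell_k} = \sum_{k=1}^K \inner{q_k - q_{k,i}}{\ell_k} + \sum_{k=1}^K \inner{q_{k,i} - \qkc}{\ell_k}
\]
into a meta-regret (first sum) and a base-regret (second sum). Since $q_k=\sum_{i} p_{k,i} q_{k,i}$, the meta-regret is exactly the expert regret $\sum_k\inner{p_k-e_i}{h_k}$ of the optimistic Hedge run on surrogate losses $h_{k,i}=\inner{q_{k,i}}{\ell_k}$ with hints $M_{k,i}=\inner{q_{k,i}}{m_k}$. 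I would bound the two terms separately and then choose $i$ to minimize their sum.

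For the base term I would run the optimistic-OMD analysis on the update~\pref{eq:optimistic-loop-free-base} over $\Delta(M,\alpha)$. The one-step inequality, for the shifting comparator $u_k=\qkc$, reads
\[
\eta_i\inner{q_{k,i}-u_k}{\ell_k}\le D_\psi(u_k,\hat q_{k,i})-D_\psi(u_k,\hat q_{k+1,i})-D_\psi(\hat q_{k+1,i},q_{k,i})+\eta_i\inner{q_{k,i}-\hat q_{k+1,i}}{\ell_k-m_k}.
\]
The quantitative crux is that $\psi$, the negative entropy on the product of the $H$ per-layer simplices, is $1/H$-strongly convex with respect to $\norm{\cdot}_1$ (each layer contributes a $1$-strongly convex term and Cauchy--Schwarz across the $H$ layers loses the factor $H$), while every occupancy measure has $\norm{q}_1=H$. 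Combining the last two terms via this strong convexity and Young's inequality bounds the per-round stability by $\tfrac{\eta_i H}{2}\norm{\ell_k-m_k}_\infty^2$, so after summation the stability contributes $\tfrac{\eta_i H}{2}V_K$. The telescoping $\tfrac1{\eta_i}\sum_k\big(D_\psi(u_k,\hat q_{k,i})-D_\psi(u_k,\hat q_{k+1,i})\big)$ is handled verbatim as in~\pref{lem:loop-free-ssp-1}: the initial divergence is at most $H\log\frac{|X||A|}{H}$, and the comparator drift contributes $\Pb_T\log\frac1\alpha$ because the entropy gradient is bounded by $\log\frac1\alpha$ on the clipped set. This yields a base-regret against $\B_i$ of at most $\tfrac1{\eta_i}A+\tfrac{\eta_i H}{2}V_K$ with $A:=H\log\frac{|X||A|}{H}+\Pb_T\log\frac1\alpha$; the sole difference from the minimax proof is that its $\eta_i T$ stability is replaced by the optimism-dependent $\tfrac{\eta_i H}{2}V_K$.

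For the meta term I would invoke the optimistic-Hedge guarantee $\sum_k\inner{p_k-e_i}{h_k}\le \tfrac{\log N}{\varepsilon}+\varepsilon\sum_k\langle p_k,(h_k-M_k)^2\rangle$. Because $\norm{q_{k,i}}_1=H$, each hint error obeys $\abs{h_{k,i}-M_{k,i}}=\abs{\inner{q_{k,i}}{\ell_k-m_k}}\le H\norm{\ell_k-m_k}_\infty$, whence $\sum_k\langle p_k,(h_k-M_k)^2\rangle\le H^2 V_K$; with $\varepsilon=\sqrt{(\log N)/(H^2(1+V_K))}$ the meta-regret is $\O(H\sqrt{(1+V_K)\log N})$, a lower-order term since $N=\O(\log T)$. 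Adding the two bounds gives, for every $i$,
\[
\E[\DReg_K(\pi_{1:K}^\C)]\le \tfrac1{\eta_i}A+\tfrac{\eta_i H}{2}V_K+\O\big(H\sqrt{(1+V_K)\log N}\big).
\]
The ideal step size is $\eta^\star=\sqrt{2A/(HV_K)}$; since the pool is geometric with ratio $2$, whenever $\eta^\star\in[\eta_1,\eta_N]$ there is an $\eta_{i^\star}\in[\eta^\star/2,\eta^\star]$ in the pool, and plugging it in gives $\O(\sqrt{A\,HV_K})$. Setting $\alpha=1/T^2$ so that $\log\frac1\alpha=2\log T$ and invoking $\Pb_T\le HP_T$ from~\pref{lem:occupancy-to-policy-loop-free} turns this into the claimed $\O(H\sqrt{V_K(\log|X||A|+P_T\log T)})$.

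The hard part will be the step-size pool coverage. Unlike the minimax setting, whose stability scale is the fixed $T$, here the scale $\tfrac H2 V_K$ is data-dependent, so $\eta^\star$ grows as the optimism improves and the pool must reach the value attained at $V_K\approx 1$ together with $\Pb_T\approx 2T$; this is precisely why $N$ is enlarged by about $\tfrac12\log K$ relative to~\pref{thm:loop-free-non-stationarity}. I would verify $\eta_1^2\le (\eta^\star)^2\le \eta_N^2$ over the feasible box $\Pb_T\in[0,2T]$, $V_K\in[1,K]$, using $\eta_N^2\ge \log\frac{|X||A|}{H}+4K\log T$ and $\eta_1^2=K^{-1}\log\frac{|X||A|}{H}$, and absorb the two boundary cases ($\eta^\star$ below $\eta_1$ or above $\eta_N$) into the lower-order terms by taking the nearest endpoint. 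A secondary point is that $\varepsilon$ nominally depends on $V_K$; since $V_K$ is observable once $\ell_k$ and $m_k$ are revealed, a standard self-confident tuning removes this dependence and keeps the algorithm parameter-free.
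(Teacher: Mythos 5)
Your overall route is the same as the paper's: the same meta/base decomposition, the same $1/H$-strong-convexity argument for the entropic regularizer to get a stability term $\O(\eta_i H\norm{\ell_k-m_k}_\infty^2)$, the same optimistic-Hedge bound $\varepsilon H^2(1+V_K)+\log N/\varepsilon$ for the meta-regret, and the same geometric-grid coverage argument for the step size. However, there is one genuine gap: you run the base-regret analysis with the comparator $u_k=\qkc$, and you invoke the telescoping-plus-drift bound ``verbatim as in Lemma~\ref{lem:loop-free-ssp-1}.'' Both of these require the comparator to lie in the clipped set $\Delta(M,\alpha)$ over which the algorithm actually optimizes: the optimistic-OMD one-step inequality (Lemma~\ref{lem:optimistic-omd}) is stated only for $u\in\K$, and Lemma~\ref{lem:loop-free-ssp-1} explicitly restricts to compared policies with $q^{\piC}\in\Delta(M,\alpha)$. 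The theorem, by contrast, must hold for \emph{arbitrary} compared policies; a deterministic comparator (e.g., the per-round optimal policy) has $\qkc(x,a)=0$ on unchosen actions, hence $\qkc\notin\Delta(M,\alpha)$ for any $\alpha>0$, and your one-step inequality and drift bound simply do not apply to it. The paper closes this hole by first replacing $\qkc$ with the mixed comparator $u_k=(1-\tfrac{1}{T})\qkc+\tfrac{1}{T}q^{\pi^u}\in\Delta(M,\tfrac{1}{T^2})$ (where $\pi^u$ is the uniform policy), which costs only an additive constant since $\frac{1}{T}\sumk\inner{q^{\pi^u}-\qkc}{\ell_k}\le 2$, and notes $\sum_{k\ge2}\norm{u_k-u_{k-1}}_1\le\Pb_T$. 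Without this (or an equivalent) step your base-regret bound is unproven for the comparators the theorem quantifies over.

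A secondary, non-fatal difference: you tune against $\eta^\star=\sqrt{2A/(HV_K)}$, which blows up as $V_K\to0$ and forces the boundary-case analysis you sketch (which can be made to work, but the endpoint cases are \emph{not} lower-order — they must be checked against the main term, as you only gesture at). The paper avoids this entirely by bounding the stability term by $\eta_i H(1+V_K)$ and tuning against $\eta^*=\sqrt{A/(H(1+V_K))}$; with $V_K\in[0,K]$ and $\Pb_T\in[0,2T]$ this $\eta^*$ always lies inside $[\eta_1,\eta_N]$ for the prescribed pool, so some $\eta_{i^*}\le\eta^*\le2\eta_{i^*}$ exists and no endpoint cases arise.
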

\begin{myRemark}
    Note that the meta-algorithm's learning rate depends on $V_K$, which can be easily removed by the standard self-confident tuning~\citep{JCSS'02:Auer-self-confident}. In addition, compared with the minimax result in Theorem~\ref{thm:loop-free-non-stationarity}, Theorem~\ref{thm:loop-free-adaptivity} exhibits more adaptivity in the sense that the upper bound depends on $V_K$ rather than $K$, which can be much tighter when environments are predictable (for example, online loss functions evolve gradually and we choose $m_k = \ell_{k-1}$) and at the same time safeguards the worst-case rate due to the fact $V_K \leq \O(K)$.
\end{myRemark}
\section{Episodic Stochastic Shortest Path}
\label{sec:episodic-ssp}
In this section, we consider the episodic SSP, which does not necessarily satisfy the loop-free structure and is thus more general and difficult than the loop-free SSP studied in Section~\ref{sec:loop-free-SSP}. For this model, we first introduce the formal problem setup and then establish minimax dynamic regret and finally provide adaptive results.

\subsection{Problem Setup}
\label{sec:general-SSP-problem}
An episodic SSP instance is defined by a tuple $M=(X, g, A, P, \{\ell_k\}_{k=1}^K)$, as the same as introduced in Section~\ref{sec:loop-free-preliminary}, $x_0 \in X$ is the initial state and $g \notin X$ is the goal state. The learning protocol proceeds in $K$ episodes. In each episode $k \in [K]$, environments decide a loss $\ell_k: \X \times \A \rightarrow [0,1]$, and simultaneously the learner starts from the initial state $x_0$ and moves to the next state until reaching the goal state $g$. Thus, the horizon in each episode depends on the learner's policy and is unfixed and can be even infinite, leading to inherent difficulties compared with episodic loop-free SSP. The learner aims to reach the goal state with a cumulative loss as small as possible. Again, we focus on the full-information setting, namely, the entire loss is revealed to the learner after the episode ends. Below we introduce several key concepts and we refer the reader to the work~\citep{COLT'21:SSP-minimax} for more details.

\paragraph{Proper policy.} A policy is called \emph{proper} if playing it ensures that the goal state is reached within a finite number of steps with probability $1$ starting from any state, otherwise it is called \emph{improper}. The set of all proper policies is denoted by $\Pip$. Following prior studies~\citep{IJCAI'21:SSP-Rosenberg,COLT'21:SSP-minimax}, we assume $\Pip \neq \emptyset$.

\paragraph{Hitting time.}
Denote by $H^{\pi}(x)$ the expected hitting time of $g$ when executing policy $\pi$ and starting from state $x$. If $\pi$ is proper, $H^{\pi}(x)$ is finite for any $x \in \X$. Let $H^\pi \triangleq H^{\pi}(x_0)$ be the hitting time of policy $\pi$ from the initial state $x_0$ to simplify notation. Another useful concept in SSP is the \textit{fast policy} $\pi^f$, defined as the (deterministic) policy that achieves the minimum expected hitting time starting from any state. The diameter of the SSP is defined as $D \triangleq \max_{x\in X} \min_{\pi\in \Pip} H^{\pi}(x)=\max_{x\in X} H^{\pi^f}(x)$. Note that both $\pi^f$ and $D$ can be computed ahead of time as the transition kernel is known~\citep{MathOR'91:SSP}.

\paragraph{Cost-to-go function.} Given a loss function $\ell$ and a policy $\pi$, the induced \textit{cost-to-go function} $J^{\pi}: X \rightarrow [0,\infty)$ is defined as $J^{\pi}(x)=\E [\sum_{i=1}^{I}\ell(x_i, a_i) \mid P, \pi]$, where $I$ denotes the number of steps before reaching $g$ of policy $\pi$ and the expectation is over the randomness of the stochastic policy and transition kernel. Denote by $J_k^\pi$ the cost-to-go function for policy $\pi$ with respect to loss $\ell_k$ from the initial state $x_0$.

\paragraph{Occupancy measure.}For the episodic SSP, the occupancy measure  $q^{\pi} \in \R^{|X||A|}$ is defined as the expected number of visits to $(x, a)$ from $x_0$ to $g$ when executing $\pi$, \ie $q^{\pi}(s, a) = \E[ \sum_{t=1}^I \ind\{x_t=x, a_t=a\} \mid P, \pi, x_1=x_0 ]$. Similar to the case in loop-free SSP, the induced policy of a given occupancy measure $q: \X \times \A \rightarrow [0, \infty)$ can be calculated by $\pi(a | x) \propto q(x, a), \forall x\in \X, a\in \A$. It holds that $H^\pi = \sum_{x, a} q^\pi(x,a)$. Based on the occupancy measure, we can rewrite the cost-to-go function $J_k^\pi$ as follows:
\[
    J_k^\pi = \E\bigg[ \sum_{i=1}^{I_k} \ell_k (x_i, a_i) \mid P, \pi_k \bigg] = \sum_{x,a} q^{\pi}(x,a) \ell_k(x,a) = \inner{q^{\pi}}{\ell_k},
\]
where $I_k$ denotes the number of steps before reaching $g$ of policy $\pi$ in episode $k$. Then the expected static regret in~\pref{eq:static-regret} for episodic SSP can be written as
\[
    \E\left[{\Reg}_K\right] \triangleq \E\bigg[\sum_{k=1}^K (J_k^{\pi_k} - J_k^{\pi^*})\bigg] = \E\bigg[\sum_{k=1}^K \inner{\qk - q^{\pi^*}}{\ell_k}\bigg],
\]
where $\pi^* = \argmin_{\pi \in \Pip} \sum_{k=1}^K J_k^\pi$. Two important quantities related to $\pi^*$ are commonly used in the analysis: (i) its hitting time $H^{\pi^*}$ from initial state $x_0$; and (ii) the cumulative loss $\sum_{k=1}^K J_k^{\pi^*}$ during $K$ episodes. The cumulative loss of the best policy is smaller than the fast policy, \ie $\sum_{k=1}^K J_k^{\pi^*} \leq \sum_{k=1}^K J_k^{\pi^f} \leq DK$, where the last inequality holds due to the definition of the fast policy and the boundedness of the loss range in $[0,1]$.

\paragraph{Dynamic regret.} To handle non-stationary environments, we employ the dynamic regret as the performance measure to compete against a sequence of changing policies $\pi_1^\C, \ldots, \pi_K^\C$, as defined in~\pref{eq:dynamic-regret}. Similar to the derivation in the episodic loop-free SSP, we can also rewrite the dynamic regret of episodic (general) SSP in terms of the occupancy measure as
\[
    \E[{\DReg}_K(\pi_{1:K}^\C)] \triangleq \E\bigg[\sumk (J_k^{\pi_k} - J_k^{\pi_k^\C})\bigg] = \E\bigg[\sum_{k=1}^K \inner{\qk - \qkc}{\ell_k}\bigg].
\]

Similarly, we generalize the two crucial quantities to accommodate changing comparators: the largest hitting time starting from the initial state $H_* = \max_{k \in [K]} H^{\pi_k^\C}$ and the cumulative loss of compared policies $B_K = \sumk J_k^{\pi_k^{\C}}=\sumk \inner{\qkc}{\ell_k}$. It is clear that $B_K \leq H_*K$. Notably, the sequence of compared policies can be arbitrarily chosen in the feasible set $\Pi$, so it is \emph{unknown} to the learner even at the end of episodes. Consequently, both quantities $H_*$ and $B_K$ are unknown to the learner. In addition, we remark that the inequality of $\sum_{k=1}^K J_k^{\piC_k} \leq \sum_{k=1}^K J_k^{\pi^f}$ is \emph{not} necessarily true due to the different possibility of compared polices, in stark contrast to the analysis in the static regret that competes with the fixed optimal policy in hindsight (see the counterpart inequality at the end of last paragraph). For the episodic (non-loop-free) SSP, the non-stationarity measure is naturally defined as $P_K = \sum_{k=2}^K \norm{\pi_k^\C - \pi_{k-1}^\C}_{1, \infty}$.

\subsection{Minimax Dynamic Regret}
\label{sec:dynamic-ssp}
Before introducing our approach, we first review existing works studying static regret and then illustrate that several crucial ingredients are required to achieve dynamic regret.

To resolve episodic (non-loop-free) SSP, \citet{IJCAI'21:SSP-Rosenberg} propose to deploy Online Mirror Descent (OMD) over the \emph{parametrized} occupancy measure space. For an MDP instance $M$ and a given horizon length $H$, the parameterized space is defined as $\Delta(M, H) = \{ q \in \R_{\geq 0}^{|X||A|} \mid \sum_{x,a}q(x,a) \leq H \text{ and } \sum_a q(x,a) = \sum_{x^\prime, a^\prime} P(x | x^\prime, a^\prime)q(x^\prime, a^\prime), \forall x \in X\}$. The authors prove that OMD enjoys an $\Ot(H \sqrt{K})$ static regret as long as $q^{\pi^*} \in \Delta(M,H)$. Therefore, if the largest hitting time $H^{\pi^*}$ were known ahead of time, a simple choice of $H = H^{\pi^*}$ would attain the favorable  static regret. However, such information is in fact unavailable in advance, which motivates a two-layer approach deal with this uncertainty.

Specifically,~\citet{COLT'21:SSP-minimax} maintain multiple base-learners $\B_1, \ldots, \B_N$, where $\B_i$ works with an occupancy measure space $\Delta(M,H_i)$ and a step size $\eta_i$ and returns her individual occupancy measure $q_k^i$; and then a certain meta-algorithm is employed to combine predictions of base-learners to produce final decisions $q_k$. Let $\B_{i^*}$ be the base-learner whose space size $H_{i^*}$ well approximates the unknown $H^{\pi^*}$. Denote by $L_K = \sum_{k=1}^K \inner{q_k}{\ell_k}, L_K^{i^*} = \sum_{k=1}^K \inner{q_k^{i^*}}{\ell_k}, L_K^\C=\sum_{k=1}^K \inner{q^{\piC_k}}{\ell_k}$\footnote{Here we define $L_K^c$ in a general way to accommodate changing comparators, which will be later used in the explanation of dynamic regret analysis. For this static regret statement, it becomes $L_K^\C = \sumk \inner{q^{\pi^*}}{\ell_k}$.} the cumulative loss of final decisions, base-learner $\B_{i^*}$ and the compared policy, respectively. Then, the overall regret can be decomposed as
\begin{equation}
    \label{eq:decompose-ssp}
    \E[{\Reg}_K] =\E\bigg[\sumk \inner{q_k - \qkc}{\ell_k}\bigg] = \E\big[(L_K - L_K^{i^*})\big] + \E\big[(L_K^{i^*} - L_K^\C)\big],
\end{equation}
where the two terms are called \emph{meta-regret} (that captures the regret overhead due to the two-layer ensemble) and \emph{base-regret} (that measures the regret of the unknown best base-learner). To achieve a favorable regret, they propose two mechanisms to control base-regret and meta-regret respectively. First, they pick the base-algorithm with an $\Ot(H_{i^*}/\eta_{i^*} + \eta_{i^*} L_K^\C)$ \emph{small-loss} static regret, which ensures an $\Ot(\sqrt{H^{\pi^*}DK})$ base-regret by setting $\eta_{i} = \O(\sqrt{H_i/DK})$ as the cumulative loss of the best policy in hindsight satisfies $L_K^{\C} \leq D K$. Second, they design a small-loss type \emph{multi-scale} online algorithm (roughly, OMD with weighted entropy $\psib(p)= \sum_{i=1}^N \frac{1}{\varepsilon_i} p_i\log{p_i}$) as the meta-algorithm to make meta-regret adaptive to the individual loss range of experts, so that meta-regret is at most $\Ot(1/\varepsilon_{i^*} + \varepsilon_{i^*} H_{i^*} L_K^{i^*})$. Combining the base-regret we further have $L_K^{i^*} \leq L_K^\C + \Ot(\sqrt{H^{\pi^*}DK}) \leq DK + \Ot(\sqrt{H^{\pi^*}DK}) = \Ot(DK)$ as $H^{\pi^*} \leq DK$. So an $\Ot(\sqrt{H^{\pi^*}DK})$ meta-regret is achievable by setting $\varepsilon_i = \Ot(1/\sqrt{H_{i}DK})$, which in conjunction with the base-regret yields an $\Ot(\sqrt{H^{\pi^*}DK})$ static regret.

However, it becomes more involved for dynamic regret. First of all, in addition to the uncertainty of unknown horizon length $H_*$, the base level also needs to deal with the unknown environmental non-stationarity $P_K$. Conceptually, this can be handled by maintaining more base-learners, which will be specified later. Second and more importantly, it is  challenging to design a compatible meta-algorithm. To see this, suppose we already have an $\Ot(\sqrt{B_K(P_K+H_*)})$ \emph{small-loss dynamic regret} for the base-algorithm, where $B_K = \sumk J_k^{\pi_k^{\C}}$ is the cumulative loss of compared policies, we then continue the above recipe and see the issue in meta-regret. Indeed, the meta-regret is at most $\Ot(1/\varepsilon_{i^*} + \varepsilon_{i^*} H_{i^*} L_K^{i^*})$, and by the base-regret bound we have $L_K^{i^*} \leq L_K^\C + \base \leq B_K + \Ot(\sqrt{B_K(P_K+H_*)})$. The natural upper bound of $B_K$ depends on $H_*$ (recall that $B_K \leq H_* K$) due to the arbitrary choice of compared policies. An important technical caveat is that as mentioned earlier we cannot simply assume the cost-to-go functions of  the compared policies $\{J_k^{\piC_k}\}_{1,\ldots,K}$ are bounded by that of fast policy $J_k^{\pi^f}$, in contrast to the static regret analysis where we have $\sum_{k=1}^K J_k^{\pi^*} \leq \sum_{k=1}^K J_k^{\pi^f}$ due to the optimality of the compared offline policy. Hence, even with a multi-scale meta-algorithm, meta-regret will be $\Ot(H_*\sqrt{K})$ and become the dominating term, making the final dynamic regret linear in $H_*$ and thus suboptimal.

To address above issues in both base and meta levels, building upon the structure of~\citet{COLT'21:SSP-minimax}, we propose a novel two-layer approach to deal with the dual uncertainties of unknown horizon length and unknown non-stationarity. Specifically, we introduce three crucial ingredients: \emph{groupwise scheduling} for base-learners, injecting \emph{corrections} in feedback loss of both base and meta levels, and a new \emph{multi-scale} meta-algorithm. Below, we first describe the base-algorithm, then introduce the scheduling method that instantiates a bunch of base-learners with different parameter configurations, and finally design the meta-algorithm that adaptively combines all the base-learners.

\paragraph{Base-algorithm.} The base-algorithm performs OMD over a clipped occupancy measure space. At each episode $k \in [K]$, the base-algorithm receives the online loss $\ell_k$ and performs
\begin{equation}
    \label{eq:base-learner}
    q_{k+1}  = \argmin_{q\in\Delta(M,H, \alpha)} \eta \inner{q}{\ell_k + a_k} + \Dp(q, q_k),
\end{equation}
where $\eta > 0$ is the step size, $\Delta(M, H, \alpha)=\{q\in \Delta(M,H) \mid q(x,a)\geq \alpha, \forall x,a\}$ is the clipped space with $\alpha \in (0,1)$, $\psi$ is the standard negative-entropy regularizer. Notably, we inject a \emph{correction term} $a_k \in \R^{|X||A|}$ to the loss, set as $a_k=32\eta \ell_k^2, \forall k \in [K]$. The purpose is to ensure a small-loss dynamic regret and simultaneously introduce an additional \emph{negative term} that will be crucial to address the difficulty occurred in controlling meta-regret (as mentioned earlier). The base-algorithm enjoys the following dynamic regret.
\begin{myLemma}
    \label{lem:CDO-base}
    Set $q_1 = \argmin_{q \in \Delta(M, H, \alpha)} \psi(q)$. Suppose $\eta \leq \frac{1}{64}$, for any compared policies $\piC_1,\ldots,\piC_K \in \{ \pi \mid q^\pi \in \Delta(M, H, \alpha) \}$, the base-algorithm in~\pref{eq:base-learner} ensures
    \[
        \sumk\inner{q_k - \qkc}{\ell_k} \leq \frac{1}{\eta}\Big(\Pb_K\log{\frac{H}{\alpha}} + H\big(1+\log(|X||A|H)\big) \Big)  + 32\eta B_K - 16\eta \sumk \inner{q_k}{\ell_{k}^2},
    \]
    where $\Pb_K = \Pb_K(\piC_1,\ldots,\piC_K) = \sum_{k=2}^K \norm{\qkc - q^{\pi_{k-1}^\C}}_1$ is the path length of occupancy measure.
\end{myLemma}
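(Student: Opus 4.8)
The plan is to run the standard one-step analysis of online mirror descent over $\Delta(M,H,\alpha)$, but with two twists tailored to the correction term $a_k = 32\eta\ell_k^2$: a local-norm (small-loss) stability estimate at the base level, and a dynamic treatment of the Bregman divergence to surface the path length $\Pb_K$. First I would invoke the first-order optimality condition for the proximal update in~\pref{eq:base-learner} together with the three-point identity of the Bregman divergence to obtain, for every feasible comparator $\qkc \in \Delta(M,H,\alpha)$,
\[
    \eta\inner{q_{k+1} - \qkc}{\ell_k + a_k} \leq \Dp(\qkc, q_k) - \Dp(\qkc, q_{k+1}) - \Dp(q_{k+1}, q_k).
\]
Since the quantity we want is $\inner{q_k - \qkc}{\ell_k}$ measured at the played iterate $q_k$ rather than at $q_{k+1}$, I would split $\inner{q_k-\qkc}{\ell_k}=\inner{q_k-q_{k+1}}{\ell_k+a_k}+\inner{q_{k+1}-\qkc}{\ell_k+a_k}-\inner{q_k-\qkc}{a_k}$ and bound the middle term by the displayed inequality (after dividing by $\eta$).

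The heart of the argument is to control the stability term $\inner{q_k - q_{k+1}}{\ell_k + a_k} - \frac{1}{\eta}\Dp(q_{k+1}, q_k)$. Here I would use that, for the negative-entropy regularizer, the proximal step is an entropic update followed by a Bregman projection, for which the local-norm estimate $\inner{q_k - q_{k+1}}{g} - \frac{1}{\eta}\Dp(q_{k+1}, q_k) \leq \eta\inner{q_k}{g^2}$ holds whenever $\eta g \geq -1$ coordinatewise; this follows from the elementary bound $e^{-x}-1+x\leq x^2$ for $x\geq -1$ together with the generalized Pythagorean inequality for the projection. Applying it with $g = \ell_k + a_k \geq 0$ and using $\eta \leq \tfrac{1}{64}$ (so that $(1+32\eta)^2 \leq \tfrac{9}{4}$) gives a stability contribution of at most $\tfrac{9}{4}\eta\inner{q_k}{\ell_k^2}$. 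The correction then plays its role: since $a_k = 32\eta\ell_k^2$,
\[
    -\inner{q_k - \qkc}{a_k} = -32\eta\inner{q_k}{\ell_k^2} + 32\eta\inner{\qkc}{\ell_k^2},
\]
and because $\ell_k^2 \leq \ell_k$ entrywise (entries lie in $[0,1]$) the last term is at most $32\eta\inner{\qkc}{\ell_k}$, which sums to $32\eta B_K$. Combining with the stability contribution leaves the net coefficient $(\tfrac{9}{4}-32)\eta\leq -16\eta$ on $\inner{q_k}{\ell_k^2}$, exactly the promised negative term. I expect this interplay to be the main obstacle: one must choose the correction constant and the step-size restriction so that the \emph{positive} small-loss residue left by stability is dominated by the \emph{negative} residue injected by $a_k$, while the comparator part of the correction remains controlled by $B_K$.

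It remains to handle $\sumk[\Dp(\qkc, q_k) - \Dp(\qkc, q_{k+1})]$, which is where $\Pb_K$ enters. I would regroup it as $\Dp(q^{\pi_1^\C}, q_1) - \Dp(q^{\pi_K^\C}, q_{K+1}) + \sum_{k=2}^K[\Dp(\qkc, q_k) - \Dp(q^{\pi_{k-1}^\C}, q_k)]$, drop the negative boundary term, and expand each summand via $\Dp(u,w)=\psi(u)-\psi(w)-\inner{\nabla\psi(w)}{u-w}$, so that $\Dp(\qkc, q_k) - \Dp(q^{\pi_{k-1}^\C}, q_k) = \psi(\qkc) - \psi(q^{\pi_{k-1}^\C}) - \inner{\nabla\psi(q_k)}{\qkc - q^{\pi_{k-1}^\C}}$. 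The entropy differences telescope to $\psi(q^{\pi_K^\C})-\psi(q^{\pi_1^\C})$, which together with the initial $\Dp(q^{\pi_1^\C}, q_1)$ (bounded using $q_1=\argmin_{q}\psi(q)$) contributes the $H(1+\log(|X||A|H))$ term, since any feasible occupancy measure has total mass at most $H$ spread over $|X||A|$ coordinates. For the gradient inner products I would use $\nabla\psi(q_k)(x,a)=\log q_k(x,a)+1$ together with the clipping $q_k(x,a)\in[\alpha,H]$ to get $\norm{\nabla\psi(q_k)}_\infty\leq\log\frac{H}{\alpha}$, whence $\sum_{k=2}^K\inner{\nabla\psi(q_k)}{\qkc-q^{\pi_{k-1}^\C}}\leq\log\frac{H}{\alpha}\sum_{k=2}^K\norm{\qkc-q^{\pi_{k-1}^\C}}_1=\Pb_K\log\frac{H}{\alpha}$. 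Assembling the three pieces, dividing the divergence part by $\eta$, and summing the per-step bound over $k\in[K]$ then yields exactly the claimed inequality.
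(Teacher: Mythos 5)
Your proposal is correct and takes essentially the same route as the paper: the per-round inequality you derive inline (three-point inequality for the corrected proximal step, the local-norm stability bound via $e^{-x}-1+x\le x^2$, and the cancellation of the stability residue $\tfrac{9}{4}\eta\inner{q_k}{\ell_k^2}$ against the injected $-32\eta\inner{q_k}{\ell_k^2}$, leaving $32\eta\inner{\qkc}{\ell_k^2}\le 32\eta\inner{\qkc}{\ell_k}$ on the comparator side) is precisely the paper's Lemma~\ref{lem:impossible-tuning} specialized to $m_k=0$, which the paper's proof simply invokes. Your handling of the Bregman telescoping — regrouping to $\Dp(q^{\pi_1^\C},q_1)+\sum_{k=2}^K\big(\Dp(\qkc,q_k)-\Dp(q^{\pi_{k-1}^\C},q_k)\big)$, extracting $\Pb_K\log\frac{H}{\alpha}$ from the clipping bounds on the iterates, and controlling the boundary/entropy terms via the minimizer property of $q_1$ and the mass bound $\sum_{x,a}q(x,a)\le H$ — likewise coincides with the paper's argument.
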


\paragraph{Scheduling.} Lemma~\ref{lem:CDO-base} indicates that given a horizon length $H$, it is crucial to set step size properly to achieve tight dynamic regret. Since $H$ affects the base-learner's feasible domain (i.e., the parametrized occupancy measure space), we propose a \emph{groupwise scheduling} scheme to simultaneously adapt to unknown non-stationarity $\Pb_K$ and horizon length $H_*$. Specifically, due to $H^{\pi^f} \leq H_* \leq K$, we first construct a horizon length pool $\mathcal{H} = \{H_i = 2^{i-1} \cdot H^{\pi^f} \mid i \in [G]\}$ where $G = 1 + \ceil{\log((K+1)/H^{\pi^f})}$ to exponentially discretize the possible range; and for each $H_i$ in the pool, we further design a step size grid $\Ecal_i = \{\eta_{i,j} = {1}/({32 \cdot 2^j}) \mid j \in [N_i] \}$ where {$N_i = \ceil{\frac{1}{2}\log{(\frac{4 K}{1+\log{(|X||A|H_i)}})}}$} to search the optimal optimal step size associated with $H_i$. Overall, we maintain $N = \sum_{i=1}^G N_i$ base-learners, each of which associates with a specific space size and step size. More precisely, let $\B_{i,1:N_i}$ be a shorthand of the $i$-th group of base-learners $\B_{i,1},\ldots,\B_{i,N_i}$, in which they use the same space size $H_i$ yet different step sizes (see the configuration of $\Ecal_i$). Thus, the set of all base-learners can be denoted as $\{\B_{1,1:N_1},\ldots,\B_{i,1:N_i},\ldots,\B_{G,1:N_G}\}$. The decision of the base-learner $\B_{i,j}$ in episode $k$ is denoted by $q_{k}^{i,j}$, with $i \in [G]$ and $j \in [N_i]$.

\begin{algorithm}[!t]
    \caption{CODO-REPS}
    \label{alg:CODO-REPS}
    \begin{algorithmic}[1]
        \REQUIRE horizon length pool $\mathcal{H} = \{H_1, \ldots, H_G\}$, step size grid $\Ecal_{i}=\{\varepsilon_{i,1}, \ldots, \varepsilon_{i,N_i}\}, \forall i \in [G]$ and clipping parameter $\alpha$.
        \STATE Define $\psi(q) = \sum_{x,a} q(x,a) \log{q(x,a)}$ and $\psib(p)$ as in~\pref{eq:regularizer}.
        \STATE Initialize $q_1^{i,j}=\argmin_{q \in \Delta(M, H, \alpha)} \psi(q)$, $p_1^{i,j} \propto \varepsilon_{i,j}^2, \forall i \in [G], j\in [N_i]$.\\
        \FOR{$k=1,\ldots,K$}
        \STATE Receive $q_k^{i,j}$ from base-learner $\B_{i,j}, \forall i \in [G], j \in [N_i]$.
        \STATE Sample $(i_k,j_k) \sim p_k$, play the induced policy $\pi_k(a | x) \propto q^{i_k,j_k}_k(x, a), \forall x,a$.
        \STATE Suffer losses $\{\ell_k(x_1,a_1), \ldots, \ell_k(x_{I^k}, a_{I^k})\}$, receive $\ell_k$, and feed it to all base-learners.
        \STATE Define $h_k^{i,j} = \inner{q^{i,j}_k}{\ell_k}, b_k^{i,j} = 32\varepsilon_{i,j}(h_k^{i,j})^2, a_k^{i,j}=32\eta_{i,j} \ell_k^2, \forall i \in [G], j\in [N_i]$.
        \STATE Each base-learner $\B_{i,j}$ updates $q_{k+1}^{i,j}  = \argmin_{q\in\Delta(M,H, \alpha)} \eta_{i,j} \inner{q}{\ell_k + a_k^{i,j}} + \Dp(q, q_k^{i,j})$.\\
        \STATE Update weight by $p_{k+1} = \argmin_{p\in\Delta_N}\inner{p}{h_k+b_k} + \Dw(p, p_k)$.
        \ENDFOR
    \end{algorithmic}
\end{algorithm}

\paragraph{Meta-algorithm.} The meta-algorithm requires a careful design to achieve a favorable regret. We propose a new meta-algorithm under the standard OMD framework, where additional designs are required including a novel weighted entropy regularizer and an appropriate correction term. Specifically, the meta-algorithm updates the weight vector $p_{k+1} \in \Delta_N$ by
\begin{equation}
    \label{eq:meta-algorithm}
    p_{k+1} = \argmin_{p\in\Delta_N}\inner{p}{h_k+b_k} + \Dw(p, p_k),
\end{equation}
where $h_k \in \R^N$ is the loss of meta-algorithm, defined as $h_k^{i,j} = \inner{q^{i,j}_k}{\ell_k}, \forall i \in [G], j \in [N_i]$. Moreover, there are two important features in the design: (i) an injected correction term $b_k \in \R^N$; and (ii) a weighted entropy regularizer $\psib(p) = \sum_{i=1}^N \frac{1}{\epsilon_i} p_i \log p_i$ to realize the multi-scale online learning, where $\epsilon_i >0$ is a multi-scale learning rate for $i \in [N]$. Below we specify the details and explain the motivation behind such designs.

First, in the meta level we inject a correction term $b_k \in \R^N$ set as
\begin{equation}
    \label{eq:meta-correction}
    b_k^{i,j} = 32\varepsilon_{i,j}(h_k^{i,j})^2, ~~~\forall i\in [G], j\in[N_i].
\end{equation}
Let $\B_{i^*, j^*}$ be the base-learner whose space size $H_{i^*}$ well approximates the unknown $H_*$ and step size $\eta_{i^*, j^*}$ well approximates the unknown optimal step size. Although injecting a correction term for the meta-algorithm was also used in~\citep{COLT'21:SSP-minimax} to ensure a small-loss type meta-regret of the form $\Ot(1/\varepsilon_{i^*, j^*} + \varepsilon_{i^*, j^*}H_{i^*} L_K^{i^*, j^*})$, as aforementioned, this will not lead to an optimal meta-regret in our case due to the undesired upper bound of $L_K^{i^*, j^*}$. Asides from that, our key novelty is to \emph{simultaneously} exploit the correction term in the base level, which contributes to an additional negative term in the base-regret $\Ot((\bar{P}_K+H_{i^*})/\eta_{i^*, j^*} + \eta_{i^*, j^*} B_K - \eta_{i^*, j^*} \sumk \inner{q_k^{i^*, j^*}}{\ell_{k}^2})$. By a careful design of step size $\eta_{i,j}$ and learning rate $\varepsilon_{i,j}$, we can successfully cancel the positive term $\varepsilon_{i^*, j^*}H_{i^*} L_K^{i^*, j^*}$ in the meta-regret by the negative term in the base-regret.

Second, it is known that OMD with a weighted entropy regularizer leads to a multi-scale expert-tracking algorithm~\citep{JMLR'19:multi-scale}. In our case, we set the weighted entropy regularizer $\psib: \Delta_N \rightarrow \R$ as
\begin{equation}
    \label{eq:regularizer}
    \psib(p) = \sum_{i =1}^G \sum_{j=1}^{N_i} \frac{1}{\varepsilon_{i,j}}p_{i,j}\log p_{i,j}, \mbox{ with }\varepsilon_{i,j} = \frac{\eta_{i,j}}{2H_i}.
\end{equation}
In above, $\eta_{i,j}$ is the step size employed by the base-learner $\B_{i,j}$ as specified earlier. Note that the weighted entropy regularizer depends on both space size and step size such that the final meta-algorithm can successfully handle the groupwise scheduling over the base-learners.

Combining all above ingredients yields our COrrected \mbox{DO-REPS} (CODO-REPS) algorithm, as summarized in Algorithm~\ref{alg:CODO-REPS}. We have the following dynamic regret guarantee.

\begin{myThm}
    \label{thm:ssp}
    Set the horizon length pool $\mathcal{H} = \{H_i = 2^{i-1} \cdot H^{\pi^f} \mid i \in [G]\}$ with $G = \ceil{\log((K+1) / H^{\pi^f})}$, the step size grid $\Ecal_i = \{\eta_{i,j} = {1}/({32 \cdot 2^j}) \mid j \in [N_i] \}$ with {$N_i = \ceil{\frac{1}{2}\log{(\frac{4 K}{1+\log{(|X||A|H_i)}})}}$}, and the clipping parameter $\alpha=1/K^3$. \mbox{CODO-REPS} (Algorithm~\ref{alg:CODO-REPS}) enjoys the following dynamic regret guarantee,
    \begin{align*}
        \E[{\DReg}_K(\pi_{1:K}^\C)] \leq \Ot\Big(\sqrt{(H_{*}+ \Pb_{K})(H_{*}+\Pb_K+B_K )}\Big).
    \end{align*}
\end{myThm}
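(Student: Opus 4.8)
The plan is to run the meta--base decomposition of \pref{eq:decompose-ssp}, specialized to the single ``best'' base-learner $\B_{i^*,j^*}$ whose horizon $H_{i^*}$ tracks the unknown $H_{*}$ and whose step size $\eta_{i^*,j^*}$ tracks the unknown optimal one. Since \mbox{CODO-REPS} samples one index $(i_k,j_k)\sim p_k$ and plays the policy induced by $q_k^{i_k,j_k}$, and since full-information feedback makes every $h_k^{i,j}=\inner{q_k^{i,j}}{\ell_k}$ and $b_k^{i,j}$ a deterministic function of the observed losses (so $p_k$ is itself deterministic given the loss sequence), the expected played loss equals $\sumk\inner{p_k}{h_k}$. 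Writing $e_{i^*,j^*}$ for the associated basis vector, I would split
\[
\E[\DReg_K]=\underbrace{\Big(\sumk\inner{p_k}{h_k}-\sumk h_k^{i^*,j^*}\Big)}_{\meta}+\underbrace{\Big(\sumk h_k^{i^*,j^*}-\sumk\inner{\qkc}{\ell_k}\Big)}_{\base}.
\]

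For the base term I would invoke \pref{lem:CDO-base} with $H=H_{i^*}$ and $\eta=\eta_{i^*,j^*}$; choosing $H_{i^*}\ge H_{*}$ guarantees $\qkc\in\Delta(M,H_{i^*},\alpha)$ for all $k$, so the lemma applies and its iterates are exactly $q_k^{i^*,j^*}$, giving
\[
\base\le\tfrac{1}{\eta_{i^*,j^*}}\Big(\Pb_K\log\tfrac{H_{i^*}}{\alpha}+H_{i^*}\big(1+\log(|X||A|H_{i^*})\big)\Big)+32\eta_{i^*,j^*}B_K-16\eta_{i^*,j^*}\sumk\inner{q_k^{i^*,j^*}}{\ell_k^2}.
\]
The decisive ingredient is the last \emph{negative} term. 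For the meta term I would run the standard OMD analysis for the weighted-entropy regularizer $\psib$ of \pref{eq:regularizer} against $e_{i^*,j^*}$, but on the \emph{corrected} loss $h_k+b_k$; rearranging,
\[
\meta=\sumk\inner{p_k-e_{i^*,j^*}}{h_k+b_k}-\sumk\inner{p_k}{b_k}+\sumk b_k^{i^*,j^*}.
\]
The multi-scale analysis (with $p_1^{i,j}\propto\varepsilon_{i,j}^2$) bounds the first sum by $\Dw(e_{i^*,j^*},p_1)=\Ot(1/\varepsilon_{i^*,j^*})$ plus a positive local-norm stability term of order $\sum_k\inner{p_k}{b_k}$, where the correction $b_k^{i,j}=32\varepsilon_{i,j}(h_k^{i,j})^2$ of \pref{eq:meta-correction} is calibrated precisely so that the middle sum $-\sum_k\inner{p_k}{b_k}$ absorbs this stability term; hence $\meta\le\Ot(1/\varepsilon_{i^*,j^*})+\sumk b_k^{i^*,j^*}$.

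The heart of the argument, and the step I expect to be the main obstacle, is the cross-level cancellation that removes the surviving $\sum_k b_k^{i^*,j^*}$. By Cauchy--Schwarz over the measure $q_k^{i^*,j^*}$ (whose total mass is at most $H_{i^*}$), $\big(h_k^{i^*,j^*}\big)^2=\inner{q_k^{i^*,j^*}}{\ell_k}^2\le H_{i^*}\inner{q_k^{i^*,j^*}}{\ell_k^2}$, so using $\varepsilon_{i,j}=\eta_{i,j}/(2H_i)$,
\[
\sumk b_k^{i^*,j^*}=32\varepsilon_{i^*,j^*}\sumk\big(h_k^{i^*,j^*}\big)^2\le32\varepsilon_{i^*,j^*}H_{i^*}\sumk\inner{q_k^{i^*,j^*}}{\ell_k^2}=16\eta_{i^*,j^*}\sumk\inner{q_k^{i^*,j^*}}{\ell_k^2}.
\]
This exactly matches, and thus cancels, the negative term in $\base$; without this simultaneous base/meta correction the surviving $\varepsilon_{i^*,j^*}H_{i^*}L_K^{i^*,j^*}$ would be of order $H_{*}\sqrt{K}$ and dominate, forcing a bound linear in $H_{*}$. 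After cancellation and using $1/\varepsilon_{i^*,j^*}=2H_{i^*}/\eta_{i^*,j^*}$, the whole expression collapses to $\Ot\big((H_{i^*}+\Pb_K)/\eta_{i^*,j^*}+\eta_{i^*,j^*}B_K\big)$.

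Finally I would tune through the two pools. Since $H^{\pi^f}\le H_{*}\le K$, the geometric horizon pool $\mathcal{H}$ contains some $H_{i^*}\in[H_{*},2H_{*}]$ (ensuring feasibility while keeping $H_{i^*}=\Theta(H_{*})$), and within that group the grid $\Ecal_{i^*}$ contains some $\eta_{i^*,j^*}$ within a factor two of $\min\{\sqrt{(H_{*}+\Pb_K)/B_K},\,1/64\}$. Balancing $(H_{*}+\Pb_K)/\eta+\eta B_K$ at the unconstrained optimum yields $\Ot(\sqrt{(H_{*}+\Pb_K)B_K})$ when $B_K\gtrsim H_{*}+\Pb_K$, while capping at $\eta=1/64$ (the largest grid value, consistent with the constraint $\eta\le\frac1{64}$ of \pref{lem:CDO-base}) yields $\O(H_{*}+\Pb_K)$ in the complementary regime; both are dominated by $\Ot(\sqrt{(H_{*}+\Pb_K)(H_{*}+\Pb_K+B_K)})$, the claimed bound. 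The remaining checks are routine: that $N_i$ makes the grid fine enough to cover the relevant step-size range (note $B_K\le H_{*}K\le(H_{*}+\Pb_K)K$ forces the optimum $\eta^*\ge1/\sqrt{K}$), and that all logarithmic and clipping contributions from $\alpha=1/K^3$ are swept into the $\Ot(\cdot)$.
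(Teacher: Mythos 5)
Your overall architecture matches the paper's proof almost step for step: the meta--base decomposition against the best base-learner $\B_{i^*,j^*}$, the base bound from Lemma~\ref{lem:CDO-base}, the multi-scale meta bound via Lemma~\ref{lem:impossible-tuning} with prior $p_1^{i,j}\propto\varepsilon_{i,j}^2$, the Cauchy--Schwarz step $(h_k^{i^*,j^*})^2\le H_{i^*}\inner{q_k^{i^*,j^*}}{\ell_k^2}$ combined with $\varepsilon_{i,j}=\eta_{i,j}/(2H_i)$ so that the surviving meta correction is exactly cancelled by the negative term in the base-regret, and the two-case tuning (optimal $\eta^*$ inside the grid versus capped at $1/64$). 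However, there is one genuine gap: you assert that choosing $H_{i^*}\ge H_*$ guarantees $\qkc\in\Delta(M,H_{i^*},\alpha)$, so that Lemma~\ref{lem:CDO-base} applies with the true comparators. This is false. Membership in the clipped space requires not only the total-mass constraint $\sum_{x,a}q^{\piC_k}(x,a)=H^{\piC_k}\le H_{i^*}$ (which your choice does ensure) but also the pointwise lower bound $q^{\piC_k}(x,a)\ge\alpha=1/K^3$ for \emph{every} state-action pair. Arbitrary compared policies --- deterministic ones in particular --- have occupancy measures vanishing on most pairs, hence lie outside $\Delta(M,H_{i^*},\alpha)$; and the OMD comparator argument behind Lemma~\ref{lem:CDO-base} (Bregman telescoping requires the comparator to be inside the feasible set of the projections) simply does not apply to them. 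Lemma~\ref{lem:CDO-base} is stated only for comparators with $q^\pi\in\Delta(M,H,\alpha)$, so your invocation is vacuous for general $\piC_{1:K}$.

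The paper closes this hole before decomposing: it replaces $\qkc$ by the mixed comparator $u_k=(1-\frac{1}{K^2})\qkc+\frac{1}{K^2}q^{\pi^u}$, where $\pi^u$ has fully supported occupancy measure $q^{\pi^u}\in\Delta(M,K,\frac{1}{K})$. Then $u_k\in\Delta(M,H_*+1,\frac{1}{K^3})$, the substitution costs only an additive constant since $\frac{1}{K^2}\sumk\inner{q^{\pi^u}-\qkc}{\ell_k}\le 2$, the path length of $\{u_k\}$ is bounded by $\Pb_K$, and $\sumk\inner{u_k}{\ell_k^2}\le\sumk\inner{u_k}{\ell_k}\le B_K+1$, so the $32\eta B_K$ term in your base bound survives intact. (This also means the group index must satisfy $H_{i^*}\ge H_*+1$ rather than $H_*$, a harmless adjustment.) With this surrogate-comparator step inserted --- i.e., running your base and meta bounds against $u_k$ instead of $\qkc$ --- the remainder of your argument goes through and coincides with the paper's proof.
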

\begin{myRemark}
    Setting compared policies $\pi_{1:K}^\C = \pi^*$ (then $P_T = 0$ and $B_K = \sumk J_k^{\pi^*}$), Theorem~\ref{thm:ssp} implies an $\Ot(\sqrt{H_* B_K})$ static regret, which gives a small-loss type bound for the episodic SSP and is new to the literature to the best of our knowledge. The bound is no worse the minimax rate $\Ot(\sqrt{H_*DK})$ of~\citet{COLT'21:SSP-minimax} as $B_K = \sumk J_k^{\pi^*}\leq DK$ in the static case, and can be much better than theirs when best policy behaves well.
\end{myRemark}
Below we show that the result in Theorem~\ref{thm:ssp} is actually minimax in terms of $B_K$ and $\Pb_K$ up to logarithmic factors.
\begin{myThm}
    \label{thm:lower-bound-ssp}
    For any online algorithm and any $\gamma \in [0, 2T]$, there exists an episodic SSP instance with diameter $D$ and a sequence of compared policies $\pi_1^\C, \ldots, \pi_K^\C$ with the largest hitting time $H_*$ such that
    $$
        \Pb_K \leq \gamma \mbox{ and }  \E[{\DReg}_K(\pi_{1:K}^\C)] \geq \Omega(\sqrt{DH_* K(1+\gamma/H_*)})
    $$
    under the full-information and known transition setting.
\end{myThm}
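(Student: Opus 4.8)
The plan is to prove the lower bound by a block (interval) decomposition that reduces the dynamic-regret lower bound to a collection of static small-loss lower bounds, one per block, and then optimize the number of blocks against the path-length budget $\gamma$. Concretely, I would split the $K$ episodes into $S$ consecutive blocks of equal length $K/S$, keep the compared policy \emph{fixed within each block}, and let it change only across block boundaries; the number of blocks $S$ is the single tuning knob that trades off regret against the induced path length $\Pb_K$. Two regimes arise: when $\gamma \lesssim H_*$ I take $S=1$ (a single fixed comparator, so $\Pb_K = 0 \le \gamma$) and the target reduces to a static lower bound $\Omega(\sqrt{DH_*K})$; when $\gamma \gtrsim H_*$ I take $S = \Theta(\gamma/H_*)$, the largest number of blocks whose boundary changes still fit into the budget.

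The core building block is a \emph{per-block small-loss static lower bound}. I would fix a single SSP instance (shared transition kernel, hence a fixed diameter $D$ across all blocks) and, on each block, draw the loss sequence from a randomized biased-Bernoulli family so that, by Yao's principle and a KL/Pinsker distinguishing argument, \emph{every} algorithm suffers expected static regret $\Omega(\sqrt{H_* B_{\mathrm{blk}}})$ against the best proper policy for that block, where $B_{\mathrm{blk}}$ is that comparator's cumulative loss over the block. The instance is engineered so that (i) its diameter is $\Theta(D)$, realized by a fast route of length $\Theta(D)$ to the goal; (ii) the relevant proper policies have hitting time $\Theta(H_*)$, so that the per-episode loss range is $\Theta(H_*)$, which is what injects the $\sqrt{H_*}$ factor into the small-loss rate; and (iii) the loss bias is calibrated ($p \asymp D/H_*$) so that the block comparator's per-episode loss is $\Theta(D)$, giving $B_{\mathrm{blk}} = \Theta(DK/S)$ and thus $\Omega(\sqrt{H_* \cdot DK/S})$ regret per block. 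Because the transitions are fixed, all block comparators live in the same occupancy-measure polytope and each has hitting time at most $H_*$.

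Summing over blocks and accounting for the path length is then routine. The total regret is $S \cdot \Omega(\sqrt{H_* DK/S}) = \Omega(\sqrt{S\, H_* D K})$. For the path length, the comparator is piecewise constant with at most $S$ changes, and each change moves the occupancy measure by at most $\norm{\qkc - q^{\pi_{k-1}^\C}}_1 \le 2H_*$ (since $\norm{q^\pi}_1 = H^\pi \le H_*$), so $\Pb_K \le 2H_* S$; choosing $S = \Theta(\gamma/H_*)$ in the regime $\gamma \gtrsim H_*$ (and provided $S \le K$, which holds in the relevant parameter range) makes $\Pb_K \le \gamma$ while yielding $\Omega(\sqrt{(\gamma/H_*)H_* DK}) = \Omega(\sqrt{DK\gamma})$, and combining the two regimes gives $\Omega(\sqrt{DK(H_*+\gamma)}) = \Omega(\sqrt{DH_*K(1+\gamma/H_*)})$, as claimed. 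As a consistency check, specializing to the loop-free case $D = H_* = H$ recovers the bound of \pref{Thm:lower-bound-loop-free} up to the logarithmic factor.

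The main obstacle is the per-block gadget: designing one fixed SSP (with proper policies guaranteed, loss range $[0,1]$, and diameter genuinely $\Theta(D)$) that simultaneously realizes a comparator of hitting time $\Theta(H_*)$ and per-episode loss $\Theta(D)$, and for which the biased-Bernoulli losses provably force $\Omega(\sqrt{H_* B_{\mathrm{blk}}})$ regret. Entangling these three quantities, namely diameter, hitting time, and comparator loss, in a valid SSP while keeping the KL divergence between the two loss distributions small enough that no algorithm can identify the better policy, so that the distinguishing argument yields the small-loss $\sqrt{H_*}$-scaled rate rather than a $\sqrt{D}$-scaled or $H_*$-scaled rate, is the delicate part; the block decomposition and path-length bookkeeping above are comparatively standard.
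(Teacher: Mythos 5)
Your proposal follows essentially the same route as the paper's proof: a piecewise-constant comparator over $S$ blocks, a per-block static lower bound of $\Omega(\sqrt{DH_*K/S})$, path-length bookkeeping of at most $2H_*$ per switch (the paper uses $2(H_*+1)$), and the two regimes $\gamma \lesssim H_*$ versus $\gamma \gtrsim H_*$ combined into $\Omega(\sqrt{DH_*K(1+\gamma/H_*)})$. The only difference is that the paper does not construct the per-block gadget you flag as the main obstacle---it simply invokes Theorem 3 of \citet{COLT'21:SSP-minimax}, which supplies exactly the required SSP instance with diameter $D+2$, comparator hitting time $H_*+1$, and static regret $\Omega(\sqrt{DH_*K})$, so that step can be cited rather than re-proved.
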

We finally remark that the upper bound in Theorem~\ref{thm:ssp} depends on the path length of occupancy measures $\Pb_K$ rather than the path length of compared policies $P_K$. A natural question is how to upper bound $\Pb_K$ by $P_K$ (up to multiplicative dependence on $H_*$). However, we show that this is generally impossible for the episodic (non-loop-free) SSP as stated in Theorem~\ref{thm:occupancy-to-policy-ssp} of Appendix~\ref{sec:appendix-impossible-SSP}.

\subsection{More Adaptive Results}
\label{sec:ssp-adaptive}
In this part, we design more adaptive algorithm for episodic SSP to achieve better guarantees in predictable environments.

Similar to the problem setup for episodic loop-free SSPs in Section~\ref{sec:loop-free-adaptive}, the online learner receives an optimism $m_k$ at the beginning of episode $k \in [K]$ as the additional prior knowledge of loss $\ell_k$. A natural motivation is to design algorithms with regret scaling with an adaptive quantity such as $V_K = \sum_{k=1}^K \norm{\ell_k - m_k}_{\infty}^2$. However, we point it out that this quantity might be not suitable for the episodic (non-loop-free) SSP due to the complicated structure of the model. Technically, even for the static regret, incorporating the optimistic online learning into Algorithm~\ref{alg:CODO-REPS} can only attain an $\Ot(H^{\pi^*}\sqrt{V_K})$ adaptive bound, which leads to an $\Ot(H^{\pi^*}\sqrt{K})$ \emph{suboptimal} bound when $m_k = 0, \forall k \in [K]$; recall that the minimax (near-)optimal regret is of order $\Ot(\sqrt{H^{\pi^*}DK})$, and $H^{\pi^*}$ can be much larger than $D$.

To this end, we introduce a novel problem-dependent quantity defined in the following way to measure the quality of optimism $m_1, \ldots, m_K$,
\begin{equation}
    \label{eq:adaptive-quantity}
    V_K = \min\Bigg\{\sum_{k=1}^K {\inner{\qkc}{\ell_k^2}}, \sum_{k=1}^K {\inner{\qkc}{(\ell_k-m_k)^2}}\Bigg\}.
\end{equation}
We illustrate the advantages of this quantity. On one hand, it safeguards the small-loss behavior as $V_K \leq \sum_{k=1}^K {\inner{\qkc}{\ell_k^2}} \leq \sum_{k=1}^K {\inner{\qkc}{\ell_k}} = B_K$ holds for any optimism, which is crucial in establishing the minimax bound for episodic SSP as presented in Section~\ref{sec:dynamic-ssp}. On the other hand, the quantity $V_K$ can be much smaller if the optimism sequence $m_1, \ldots, m_K$ is of high quality, for example $V_K = 0$ when $m_k$ predicts $\ell_k$ perfectly at each episode.

To establish dynamic regret scaling with the desired quantity in~\pref{eq:adaptive-quantity}, we substantially modify the \mbox{CODO-REPS} algorithm to leverage the predictability of the environments. First, both base-algorithm and meta-algorithm now use optimistic OMD to incorporate the optimism, which also requires modifying the injected correction terms correspondingly. Second, the quantity $V_K$ in fact depends on two optimistic sequences, \ie $\{0\}_{k\in K}$ and $\{m_k\}_{k\in K}$. To handle multiple optimistic sequences, a natural idea is to learn the best one via another expert-tracking algorithm~\citep{COLT'13:Optimistic-OMD}. However, the critical challenge is that the quantity $V_K$ depends on the \emph{unknown} compared policies $\pi_1^\C, \ldots, \pi_K^\C$, making it impossible to evaluate the quality of the sequence even after the horizon ends; and thus, learning the best optimism does not apply to our problem. To address the difficulty, we propose to maintain two sets of base-learners with different optimism sequences $\{0\}_{k\in K}$ and $\{m_k\}_{k\in K}$, in order to ensure the worst-case robustness; and importantly, an appropriate meta-algorithm is needed to adaptively combine those heterogenous base-learners and ensure the \emph{best-of-both-world} result. In what follows, we introduce the details.

\paragraph{Base-algorithm.}To exploit the predictability of the environments, the base-algorithm employs Optimistic OMD over a clipped occupancy measure space. To obtain an appropriate base-regret, we incorporate the optimism in the construction of correction terms. Specifically, let $m'_k$ be the optimism received by the base-algorithm at the beginning of episode $k$, the base-algorithm performs
\begin{equation}
    \label{eq:ssp-adaptive-base}
    \begin{split}
        q_{k} & = \argmin_{q \in \Delta(M, H, \alpha)}\eta \inner{q}{m'_k}+\Dp(q, \qh_{k}), \\
        \qh_{k+1} & = \argmin_{q \in \Delta(M,H, \alpha)}\eta \inner{q}{\ell_k+a_k}+\Dp(q, \qh_{k}) \mbox{ with } a_k = 32\eta(\ell_k-m'_k)^2,
    \end{split}
\end{equation}
where $\eta > 0$ is the step size, $\Delta(M, H, \alpha)=\{q\in \Delta(M,H) \mid q(x,a)\geq \alpha, \forall x,a\}$ is the clipped space with $\alpha \in (0,1)$, $\psi$ is the negative-entropy regularizer. We incorporate the optimism in the construction of correction term, namely, $a_k=32\eta (\ell_k-m'_k)^2, \forall k \in [K]$. This aims to ensure a small-loss dynamic regret and simultaneously introduce an additional \emph{negative term} that would be crucial to address the difficulty occurred in controlling meta-regret similar to that in Section~\ref{sec:dynamic-ssp}. Formally, the base-algorithm enjoys the following dynamic regret.
\begin{myLemma}
    \label{lem:ssp}
    Set $q_1 = \argmin_{q \in \Delta(M, H, \alpha)} \psi(q)$. Suppose $\eta \leq \frac{1}{64}$, for any compared policies $\piC_1,\ldots,\piC_K \in \{ \pi \mid q^\pi \in \Delta(M, H, \alpha) \}$, base-algorithm~\eqref{eq:ssp-adaptive-base} ensures  is upper bounded by
    \begin{align*}
        \sumk\inner{q_{k} - \qkc}{\ell_k} & \leq \frac{1}{\eta}\Big(\Pb_K\log{\frac{H}{\alpha}} + H(1+\log(|X||A|H))\Big)                                     \\
                                          & \qquad \qquad + 32\eta \sumk \inner{\qkc}{(\ell_{k}-\mp_{k})^2} - 16\eta \sumk \inner{q_k}{(\ell_{k}-\mp_{k})^2}.
    \end{align*}
\end{myLemma}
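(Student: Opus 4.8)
The plan is to recognize the update~\eqref{eq:ssp-adaptive-base} as optimistic OMD run on the \emph{surrogate} loss $g_k = \ell_k + a_k$ (with $a_k = 32\eta(\ell_k-\mp_k)^2$) and optimism $\mp_k$, and then follow the same route as the proof of~\pref{lem:CDO-base}, adapted to the optimistic, prediction-error-dependent setting. Throughout, the comparators are the changing occupancy measures $\qkc \in \Delta(M,H,\alpha)$, which are feasible by hypothesis. First I would perform a \textbf{loss-transfer} step, writing
\begin{equation*}
    \sumk \inner{q_{k} - \qkc}{\ell_k} = \sumk \inner{q_{k} - \qkc}{g_k} - \sumk \inner{q_{k} - \qkc}{a_k},
\end{equation*}
and expanding the last sum as $\sumk \inner{\qkc}{a_k} - \sumk \inner{q_k}{a_k} = 32\eta \sumk \inner{\qkc}{(\ell_k - \mp_k)^2} - 32\eta \sumk \inner{q_k}{(\ell_k - \mp_k)^2}$. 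The first piece already supplies the positive term in the claimed bound, while the second, $-32\eta \sumk \inner{q_k}{(\ell_k-\mp_k)^2}$, is held in reserve to absorb a stability term and leave the stated negative term.

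Next I would bound the \emph{surrogate dynamic regret} $\sumk\inner{q_k - \qkc}{g_k}$ by combining the three-point identities of the interim and update steps of optimistic OMD, yielding
\begin{equation*}
    \sumk\inner{q_k - \qkc}{g_k} \leq \frac{1}{\eta}\sumk\big(\Dp(\qkc, \qh_k) - \Dp(\qkc, \qh_{k+1})\big) + \sumk \inner{q_k - \qh_{k+1}}{g_k - \mp_k} - \frac{1}{\eta}\sumk\big(\Dp(q_k, \qh_k) + \Dp(\qh_{k+1}, q_k)\big).
\end{equation*}
The first group is handled exactly as in~\pref{lem:CDO-base}: reindex the telescoping sum over the changing comparators into $\Dp(q^{\pi_1^\C}, \qh_1) + \sum_{k=2}^K\big(\Dp(\qkc,\qh_k) - \Dp(q^{\pi_{k-1}^\C},\qh_k)\big)$ (discarding the nonpositive final divergence), bound the initial term $\Dp(q^{\pi_1^\C},\qh_1) \leq H(1+\log(|X||A|H))$ using $\qh_1 = \argmin\psi$ and $\sum_{x,a} q^{\pi_1^\C}(x,a) = H^{\pi_1^\C} \leq H$, and bound each cross term by $\log\frac{H}{\alpha}\,\norm{\qkc - q^{\pi_{k-1}^\C}}_1$ via the clipping lower bound $\qh_k \geq \alpha$ controlling the entropy gradient $\nabla\psi = \mathbf{1}+\log(\cdot)$. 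Summing produces $\frac{1}{\eta}\big(\Pb_K\log\frac{H}{\alpha} + H(1+\log(|X||A|H))\big)$.

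The crux is the \textbf{stability (prediction-error) term}. Writing $g_k - \mp_k = (\ell_k - \mp_k)(1 + 32\eta(\ell_k-\mp_k))$, the assumption $\eta \leq \frac{1}{64}$ together with $\ell_k,\mp_k \in [0,1]$ gives the elementwise bound $(g_k - \mp_k)^2 \leq \frac{9}{4}(\ell_k - \mp_k)^2$. Using the local-norm strong convexity of the negative entropy against the reserved negative divergence $-\frac{1}{\eta}\Dp(\qh_{k+1}, q_k)$ (a multiplicative stability estimate, valid because $\eta$ is small, ensures the intermediate-point Hessian is comparable to $\mathrm{diag}(1/q_k)$), I would show $\inner{q_k - \qh_{k+1}}{g_k-\mp_k} - \frac{1}{\eta}\Dp(\qh_{k+1},q_k) \leq \frac{\eta}{2}\inner{q_k}{(g_k-\mp_k)^2} \leq \frac{9\eta}{8}\inner{q_k}{(\ell_k - \mp_k)^2}$. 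Combining this positive $O(\eta)$ contribution with the reserved $-32\eta\sumk\inner{q_k}{(\ell_k-\mp_k)^2}$ from the transfer step leaves a net term no larger than $-16\eta\sumk\inner{q_k}{(\ell_k-\mp_k)^2}$, which completes the bound. The hard part will be precisely this local-norm control: verifying that $\Dp(\qh_{k+1},q_k)$ dominates $\frac12\norm{\qh_{k+1}-q_k}$ in the local norm at $q_k$ and tracking the constants so the surviving negative term keeps coefficient at least $16\eta$. The extra $a_k$ sitting inside $g_k$ makes these estimates more delicate than in~\pref{lem:CDO-base}, and the choice $\eta \leq \frac{1}{64}$ is exactly what makes all the cancellations go through.
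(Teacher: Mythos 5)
Your proposal is correct and takes essentially the same approach as the paper: the paper's own proof of this lemma simply invokes \pref{lem:impossible-tuning} (whose appendix proof consists of exactly your loss-transfer step, the optimistic-OMD three-point bound, and the stability control) and then reuses the changing-comparator Bregman-telescoping analysis from the proof of \pref{lem:CDO-base}, so you are inlining the same argument rather than citing it. The one step you flag as delicate---the multiplicative-stability/local-norm control, which is indeed awkward because both $q_k$ and $\qh_{k+1}$ are projections onto $\Delta(M,H,\alpha)$---is sidestepped in the paper's proof of \pref{lem:impossible-tuning} by bounding the stability term against the \emph{unconstrained} maximizer $q^*_i = q_{k,i}e^{-\eta(g_k-\mp_k)_i}$ (your $g_k=\ell_k+a_k$) and using $e^{-x}-1+x \le x^2$ for $x \ge -1$, which gives coefficient $\eta$ rather than your $\eta/2$; since $\tfrac{9}{4}\eta - 32\eta \le -16\eta$, your constants still go through with ample slack.
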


\paragraph{Scheduling.}Similar to that in Section~\ref{sec:dynamic-ssp}, we use the \emph{groupwise scheduling} scheme to simultaneously adapt to unknown non-stationary measure $\Pb_K$ and horizon length $H_*$. The constructions of the space pool and the step size pool are the same as that in Section~\ref{sec:dynamic-ssp}. We maintain \emph{two} sets of base-learners with different settings of their optimism, in order to ensure the worst-case robustness. Specifically, due to $H^{\pi^f} \leq H_* \leq K$, we first construct a horizon length pool $\mathcal{H} = \{H_i = 2^{i-1} \cdot H^{\pi^f} \mid i \in [G]\}$ with $G = 1 + \ceil{\log((K+1)/H^{\pi^f})}$ to exponentially discretize the possible range; and for each $H_i$ in the pool, we further design a step size grid $\Ecal_i = \{\eta_{i,j} = {1}/({32 \cdot 2^j}) \mid j \in [N_i] \}$ with $N_i = \ceil{\frac{1}{2}\log{(\frac{4 K}{1+\log{(|X||A|H_i)}})}}$ to search the optimal step size associated with $H_i$. Hence, we maintain $N = 2\sum_{i=1}^G N_i$ base-learners in total, each of which associates with a specific space size and step size and optimism. Let $\B_{i,1:2N_i}$ be a shorthand of the $i$-th group of base-learners $\B_{i,1},\ldots,\B_{i,2N_i}$. The configuration of all those base-learners are as follows.
\begin{itemize}
    \item For $j\leq N_i$, the base-learner $\B_{i,j}$ uses the space size $H_i \in \H$, the step size $\eta_{i,j} \in \Ecal_i$ and the optimism $m'_k=0$.
    \item For $j > N_i$, the base-learner $\B_{i,j}$ uses the space size $H_{i} \in \H$, the step size $\eta_{i,j-{N_i}} \in \Ecal_i$ and the optimism $m'_k=m_k$.
\end{itemize}
Then the set of all the base-learners can be denoted as $\{\B_{1,1:2N_1},\ldots,\B_{i,1:2N_i},\ldots,\B_{G,1:2N_G}\}$. The decision returned by the base-learner $\B_{i,j}$ on episode $k \in [K]$ is denoted by $q_{k}^{i,j}$, with $i \in [G]$ and $j \in [2N_i]$.

\begin{algorithm}[t]
    \caption{Optimistic \mbox{CODO-REPS}}
    \label{alg:optimisticCDO-REPS}
    \begin{algorithmic}[1]
        \REQUIRE horizon length pool $\mathcal{H} = \{H_1, \ldots, H_G\}$, step size grid $\Ecal_{i}=\{\varepsilon_{i,1}, \ldots, \varepsilon_{i,N_i}\}, \forall i \in [G]$ and clipping parameter $\alpha$.
        \STATE Define: $\psi(q) = \sum_{x,a} q(x,a) \log{q(x,a)}$ and $\psib(p)$ as in~\eqref{eq:weighted-entropy}.
        \STATE Initialization: $\qh_1^{\ i,j} = \argmin_{q \in \Delta(M, H_i, \alpha)} \psi(q)$ and $\ph_1^{\ i,j} \propto \varepsilon_{i,j}^2, \forall i \in [G], j \in [2N_i]$.
        \FOR{$k=1$ to $K$}
        \STATE Receive optimism $m_k \in \R_{[0,1]}^{|X||A|}$ and feed it to all base-learners.
        \STATE Set optimism for each base-learner $\B_{i,j}$: $m'_k=0$ if $j\leq N_i$ and $m'_k=m_k$ otherwise.
        \STATE Each base-learner $\B_{i,j}$ updates $q_{k}^{i,j} = \argmin_{q \in \Delta(M,H_i, \alpha)}\eta_{i,j} \inner{q}{m'_k}+\Dp(q, \qh_{k}^{\ i,j})$.
        \STATE Update $p_k = \argmin_{p \in \Delta_N} \inner{p}{M_k}+\Dpb(p, \ph_k)$ where $M_{k}^{i,j} = \inner{q_k^{i,j}}{m'_k}, \forall i, j$.
        \STATE Sample $(i_k,j_k) \sim p_k$, play the induced policy $\pi_k(a | x) \propto q^{i_k,j_k}_k(x, a), \forall x,a$.
        \STATE Suffer losses $\{\ell_k(x_1,a_1), \ldots, \ell_k(x_{I^k}, a_{I^k})\}$, receive $\ell_k$, and feed it to all base-learners.
        \STATE Define $h_{k}^{i,j} = \inner{q_k^{i,j}}{\ell_k}, b_k^{i,j} = 32 \varepsilon_{i,j} (h_k^{i,j} - M_k^{i,j})^2, a_k^{i,j} = 32\eta_{i,j}(\ell_k - m'_k)^2, \forall i, j$.
        \STATE Each base-learner $\B_{i,j}$ updates $\qh_{k+1}^{\ i,j} = \argmin_{q \in \Delta(M,H_i, \alpha)}\eta_{i,j} \inner{q}{\ell_k+a_k^{i,j}}+\Dp(q, \qh_{k}^{\ i,j})$.\\
        \STATE Update weight by $\ph_{k+1} = \argmin_{p \in \Delta_N} \inner{p}{h_k+b_k}+\Dpb(p, \ph_k)$.
        \ENDFOR
    \end{algorithmic}
\end{algorithm}

\paragraph{Meta-algorithm.}Similar to the base-algorithm, we incorporate the optimism in the updates and the construction of the correction term. Specifically, the meta-algorithm updates the weight vector $p_{k} \in \Delta_N$ by
\begin{equation*}
    p_{k} = \argmin_{p\in\Delta_N}\inner{p}{M_k} + \Dw(p, \ph_k), \quad \ph_{k+1} = \argmin_{p\in\Delta_N}\inner{p}{h_k+b_k} + \Dw(p, \ph_k)
\end{equation*}
with $b_{k}^{i,j} = 32 \varepsilon_{i,j} (h_{k}^{i,j}-M_{k}^{i,j})^2$, $\forall i \in [G], j \in [2N_i]$. Moreover, $h_k, M_K \in \R^N$ is the loss and optimism of meta-algorithm, defined as $h_k^{i,j} = \inner{q^{i,j}_k}{\ell_k}$ and $M_k^{i,j} = \inner{q^{i,j}_k}{m'_k}, \forall i \in [G], j \in [2N_i]$. Moreover, $\psib: \Delta_N \rightarrow \R$ is the weighted entropy regularizer defined as
\begin{equation}
    \label{eq:weighted-entropy}
    \psib(p) = \sum_{i =1}^G \sum_{j=1}^{2N_i} \frac{1}{\varepsilon_{i,j}}p_{i,j}\log p_{i,j}, \mbox{ with }\varepsilon_{i,j} = \frac{\eta_{i,j}}{2H_i}.
\end{equation}
In above, $\eta_{i,j}$ and $H_i$ are the step size and space size employed by the base-learner $\B_{i,j}$ as specified earlier. The weighted entropy regularizer depends on both space size and step size such that the final meta-algorithm can successfully handle the groupwise scheduling over the base-learners. The algorithm is shown in Algorithm~\ref{alg:optimisticCDO-REPS}, which enjoys the following guarantee.
\begin{myThm}
    \label{thm:ssp-adaptivity}
    Under the same setting as Theorem~\ref{thm:ssp}, Optimistic \mbox{CODO-REPS} (Algorithm~\ref{alg:optimisticCDO-REPS}) enjoys the following adaptive dynamic regret guarantee,
    \begin{equation}
        \label{eq:ssp-adaptivity-bound}
        \E[{\DReg}_K(\pi_{1:K}^\C)] \leq \Ot\Big(\sqrt{(H_{*}+ \Pb_{K})(H_{*}+\Pb_K+V_K )}\Big).
    \end{equation}
    where $\Pb_K = \Pb_K(\piC_1,\ldots,\piC_K) = \sum_{k=2}^K \norm{\qkc - q^{\pi_{k-1}^\C}}_1$ is the path length of occupancy measure, and $V_K = \min\{\sum_{k=1}^K {\inner{\qkc}{\ell_k^2}}, \sum_{k=1}^K {\inner{\qkc}{(\ell_k-m_k)^2}}\}$ is the adaptive quantity.
\end{myThm}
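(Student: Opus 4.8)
The plan is to mirror the meta-base analysis behind \pref{thm:ssp}, but now tracking the optimism-dependent variance so that the looser quantity $B_K$ is replaced by the adaptive $V_K$ of~\eqref{eq:adaptive-quantity}. Writing $L_K = \sumk\inner{p_k}{h_k}$ for the expected loss of the sampled base-learner, $L_K^{i,j}=\sumk\inner{q_k^{i,j}}{\ell_k}$, and $L_K^\C=\sumk\inner{\qkc}{\ell_k}$, I would decompose
\[
    \E[{\DReg}_K(\pi_{1:K}^\C)] = \underbrace{\E[L_K - L_K^{i^*,j^*}]}_{\meta} + \underbrace{\E[L_K^{i^*,j^*} - L_K^\C]}_{\base},
\]
where $\B_{i^*,j^*}$ is a benchmark base-learner chosen in hindsight. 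The first key idea is that the algorithm maintains \emph{two} parallel groups of base-learners—one with $\mp_k=0$ (base variance $\sumk\inner{\qkc}{\ell_k^2}$) and one with $\mp_k=m_k$ (base variance $\sumk\inner{\qkc}{(\ell_k-m_k)^2}$). I would select the benchmark group in the analysis according to whichever of these two realizes the minimum in $V_K$; since the meta-regret bound holds uniformly against \emph{every} expert, this hindsight choice costs nothing and delivers the $\min$ in~\eqref{eq:adaptive-quantity} without ever evaluating the (comparator-dependent, hence unknowable) optimism quality online. Within the chosen group I would take $H_{i^*}$ to be the smallest grid horizon exceeding $H_*$ (so $H^{\pi^f}\leq H_*\leq H_{i^*}\leq 2H_*$, ensuring $\qkc\in\Delta(M,H_{i^*},\alpha)$ up to the routine clipping error controlled by $\alpha=1/K^3$, exactly as in \pref{thm:ssp}) and $\eta_{i^*,j^*}$ to be the grid step size nearest the balancing value $\sqrt{(H_*+\Pb_K)/V_K}$.

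Next I would bound the base-regret by invoking \pref{lem:ssp} for $\B_{i^*,j^*}$. With $\alpha=1/K^3$ and $H_{i^*}\leq 2H_*$ the logarithmic factors collapse, so that
\[
    \sumk\inner{q_k^{i^*,j^*}-\qkc}{\ell_k} \leq \frac{1}{\eta_{i^*,j^*}}\Ot(H_*+\Pb_K) + 32\eta_{i^*,j^*}V_K - 16\eta_{i^*,j^*}\sumk\inner{q_k^{i^*,j^*}}{(\ell_k-\mp_k)^2},
\]
where the positive variance equals $V_K$ by the group choice above. The negative term is the object around which the whole construction is built.

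The crux is the meta-regret step. I would analyze the optimistic OMD under the weighted-entropy regularizer~\eqref{eq:weighted-entropy}. Peeling off the injected correction $b_k$ (which is added to the comparator's loss) produces exactly the positive term $32\varepsilon_{i^*,j^*}\sumk(h_k^{i^*,j^*}-M_k^{i^*,j^*})^2=32\varepsilon_{i^*,j^*}\sumk(\inner{q_k^{i^*,j^*}}{\ell_k-\mp_k})^2$, while the standard multi-scale analysis~\citep{JMLR'19:multi-scale}, together with the initialization $\ph_1^{i,j}\propto\varepsilon_{i,j}^2$, contributes a leading term $\Ot(1/\varepsilon_{i^*,j^*})=\Ot(H_{i^*}/\eta_{i^*,j^*})$ plus negative stability terms. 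The cancellation then follows from the coupling $\varepsilon_{i,j}=\eta_{i,j}/(2H_i)$ and Cauchy–Schwarz: since $(\inner{q}{\ell-\mp})^2\leq H_{i^*}\inner{q}{(\ell-\mp)^2}$, we get $32\varepsilon_{i^*,j^*}(\inner{q_k^{i^*,j^*}}{\ell_k-\mp_k})^2 \leq 16\eta_{i^*,j^*}\inner{q_k^{i^*,j^*}}{(\ell_k-\mp_k)^2}$, which exactly matches the base negative term so the two annihilate episode by episode.

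Finally, combining the surviving contributions leaves $\Ot((H_*+\Pb_K)/\eta_{i^*,j^*}+\eta_{i^*,j^*}V_K)$. Optimizing the grid-discretized step size yields $\Ot(\sqrt{(H_*+\Pb_K)V_K})$; however, because the grid caps $\eta$ at $1/64$, when $V_K$ is small the capped choice instead gives $\Ot(H_*+\Pb_K)$, so the overall bound is $\Ot(\sqrt{(H_*+\Pb_K)V_K}+(H_*+\Pb_K))$, which AM–GM rewrites as the claimed $\Ot(\sqrt{(H_*+\Pb_K)(H_*+\Pb_K+V_K)})$. I would also check that the pool sizes $G$ and $N_i$ are large enough that the near-optimal $(H_{i^*},\eta_{i^*,j^*})$ lies in the grid (routine given $H_*\leq K$ and $V_K\leq B_K\leq H_*K$). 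The main obstacle I anticipate is this meta step: coaxing the optimistic multi-scale OMD bound into the precise self-bounding form whose positive variance is annihilated \emph{exactly} by the base negative term, while simultaneously arranging the two heterogeneous optimism groups so that the benchmark attaining the $\min$ in $V_K$ is reachable with no hindsight knowledge of the comparators.
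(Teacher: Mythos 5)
Your proposal is correct and follows essentially the same route as the paper's proof: the identical meta-base decomposition with the benchmark base-learner chosen in hindsight from whichever optimism group ($m'_k=0$ or $m'_k=m_k$) attains the minimum in $V_K$, the base bound from Lemma~\ref{lem:ssp}, the multi-scale meta bound from the weighted-entropy OMD with injected corrections, the Cauchy--Schwarz cancellation $32\varepsilon_{i^*,j^*}\inner{q_k^{i^*,j^*}}{\ell_k-m'_k}^2 \leq 16\eta_{i^*,j^*}\inner{q_k^{i^*,j^*}}{(\ell_k-m'_k)^2}$ against the base-level negative term, and the same capped-versus-uncapped step-size case analysis yielding $\Ot(\sqrt{(H_*+\Pb_K)V_K}+H_*+\Pb_K)$. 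The only step the paper spells out that you treat as routine is the surrogate comparator $u_k = (1-\tfrac{1}{K^2})\qkc + \tfrac{1}{K^2}q^{\pi^u} \in \Delta(M,H_*+1,\tfrac{1}{K^3})$, which places the comparator inside the clipped space at $O(1)$ extra cost.
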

\begin{myRemark}
    This adaptive bound in Theorem~\ref{thm:ssp-adaptivity} is strictly no worse than the minimax result in Theorem~\ref{thm:ssp} since $V_K \leq B_K$ holds for any optimism sequence, and can become much tighter if the optimism sequence $m_1, \ldots, m_K$ is of high quality (for example $V_K = 0$ when $m_k$ predicts $\ell_k$ perfects at each episode).
\end{myRemark}
\section{Infinite-horizon MDPs}
\label{sec:infinite-horizon}
This section studies infinite-horizon MDPs. We begin with the problem setup and then present our main results, including a reduction to the switching-cost expert problem, the dynamic regret bound, and an impossibility result for the adaptive bound.

\subsection{Problem Setup}
An infinite-horizon MDP instance is specified by a tuple $M = (X, A, P, \{\ell_t\}_{t=1}^\infty)$, where $X, A, P$ are the same as introduced in Section~\ref{sec:loop-free-SSP}, $\ell_t \in \R_{[0,1]}^{\abs{X} \abs{A}}$ is the loss function at time $t \in [T]$. Unlike episodic MDPs studied in previous two sections, infinite-horizon MDPs have no goal state. The learner aims to minimize the cumulative loss over a $T$-step horizon in the MDP. We investigate the uniform mixing MDPs~\citep{MatOR'09:online-MDP, NIPS'10:bandit-MDP-Neu}.
\begin{myDef}[Uniform Mixing]
    \label{def:mixing}
    There exists a constant $\tau \geq 0$ such that for any policy $\pi$ and any pair of distributions $\mu$ and $\mu^\prime$ over $X$, we have $\norm{(\mu-\mu^\prime)P^\pi}_1 \leq e^{-1/\tau} \norm{\mu - \mu^\prime}_1$. The smallest constant $\tau$ is called the mixing time.
\end{myDef}
We note that the uniform mixing assumption is standard and widely adopted in online MDPs studies~\citep{MatOR'09:online-MDP, NIPS'10:bandit-MDP-Neu,IEEE'Control:bandit-MDP-Neu}. Nevertheless, the assumption could be strong in some sense, and recent study trying to relax the assumption by considering a larger class of communicating MDPs~\citep{arXiv'21:communicating-MDP}. It would be interesting to see whether our results can be extended to the communicating MDPs, and we leave this as future work to investigate.

\paragraph{Occupancy measure.}For an infinite-horizon MDP, the occupancy measure $q^\pi \in \R_{[0, 1]}^{|X||A|}$ is defined as the stationary state-action distribution when executing policy $\pi$, \ie $q^\pi(x,a) = \lim_{T \rightarrow \infty} \frac{1}{T} \sum_{t=1}^T \ind\{x_t = x, a_t=a\}$. For an infinite-horizon MDP instance $M$, its occupancy measure space is defined as $\Delta(M) = \{q \in \R_{[0,1]}^{|X||A|} \mid \sum_{x,a}q(x,a) =1 \text{ and } \sum_a q(x,a) = \sum_{x^\prime, a^\prime} P(x | x^\prime, a^\prime)q(x^\prime, a^\prime), \forall x \in X\}$. For any occupancy measure $q \in \Delta(M)$, its induced policy $\pi$ can be obtained by $\pi(a|x) \propto q(x,a), \forall x \in \X, a \in \A$.

\paragraph{Dynamic regret.} As defined in~\pref{eq:dynamic-regret}, the dynamic regret of infinite-horizon MDPs benchmarks the learner's performance against a sequence of compared policies $\pi_{1:T}^\C$, namely,
\begin{equation}
    \label{eq:dynamic-regret-infinite-horizon}
    \E[{\DReg}_T(\pi_{1:T}^\C)] = \E\bigg[\sum_{t=1}^{T} \ell_t\big(x_t, \pi_t(x_t)\big) - \sum_{t=1}^{T} \ell_t\big(x_t, \pi_t^\C(x_t)\big)\bigg],
\end{equation}
The non-stationarity measure is naturally defined as $P_T = \sum_{t=2}^T \norm{\pi_{t}^{\C}-\pi_{t-1}^{\C}}_{1, \infty}$.

\subsection{Reduction to Switching-cost Expert Problem}
\label{sec:reduction}
In this part, we present a reduction to the switching-cost expert problem for infinite-horizon MDPs. In fact, we have the following theorem.
\begin{myThm}
    \label{thm:reduction}
    For infinite-horizon MDPs with a mixing time $\tau \geq 0$, the expected dynamic regret against any compared policies $\pi_{1:T}^\C$ satisfies
    \begin{align}
        \label{eq:mdp-sc}
        \E[{\DReg}_T(\pi_{1:T}^\C)]  \leq \sum_{t=1}^T \inner{q_t-\qtc}{\ell_t} + (\tau+1) \sum_{t=2}^T \norm{q_t - q_{t-1}}_1 + (\tau+1)^2 P_T  + 4(\tau+1).
    \end{align}
    where $q_t = q^{\pi_t}$ denotes the occupancy measure of the policy $\pi_t$, $q^{\piC_t}$ denotes the occupancy measure of the compared policy $\piC_t$, and $P_T = \sum_{t=2}^T \norm{\pi_{t}^{\C}-\pi_{t-1}^{\C}}_{1, \infty}$ is the path length of the sequence of compared policies.
\end{myThm}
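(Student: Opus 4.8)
The plan is to convert both the learner's loss and the comparators' loss, which are incurred along their respective realized (non-stationary) state trajectories, into the corresponding \emph{stationary} losses $\inner{q_t}{\ell_t}$ and $\inner{\qtc}{\ell_t}$, and then to charge the two conversion errors to the switching-cost and path-length terms via the uniform mixing property of Definition~\ref{def:mixing}. Write $\mu_t$ for the state distribution actually induced by the learner at round $t$ (so $\mu_{t+1}=\mu_t P^{\pi_t}$), $\nu_t$ for the analogous distribution induced by the comparator sequence started from the same initial state ($\nu_{t+1}=\nu_t P^{\pi_t^\C}$), and $d^{\pi}(x)=\sum_a q^\pi(x,a)$ for the stationary state marginal of $\pi$. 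Since $\ell_t\in[0,1]$, the learner's expected round-$t$ loss equals $\inner{\bar q_t}{\ell_t}$ with $\bar q_t(x,a)=\mu_t(x)\pi_t(a|x)$, and $\inner{\bar q_t-q_t}{\ell_t}\le\norm{\bar q_t-q_t}_1=\norm{\mu_t-d^{\pi_t}}_1$; the same device applied to the comparator chain bounds its deviation by $\norm{\nu_t-d^{\pi_t^\C}}_1$. Thus the whole theorem reduces to controlling $\sum_t\norm{\mu_t-d^{\pi_t}}_1$ and $\sum_t\norm{\nu_t-d^{\pi_t^\C}}_1$, leaving $\sum_t\inner{q_t-\qtc}{\ell_t}$ as the exact middle term.

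The second step is the central mixing recursion. For the learner, inserting $d^{\pi_t}$, using $d^{\pi_t}P^{\pi_t}=d^{\pi_t}$ together with Definition~\ref{def:mixing}, gives
\[
    \norm{\mu_{t+1}-d^{\pi_{t+1}}}_1 \le \norm{\mu_t P^{\pi_t}-d^{\pi_t}}_1+\norm{d^{\pi_t}-d^{\pi_{t+1}}}_1 \le e^{-1/\tau}\norm{\mu_t-d^{\pi_t}}_1+\norm{d^{\pi_t}-d^{\pi_{t+1}}}_1 .
\]
Unrolling this contraction and summing over $t$, the geometric weights collapse through $\sum_{j\ge0}e^{-j/\tau}=(1-e^{-1/\tau})^{-1}\le\tau+1$, which I would verify from $1-e^{-x}\ge x/(1+x)$. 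Using that marginalization contracts the $\ell_1$ norm, $\norm{d^{\pi_t}-d^{\pi_{t-1}}}_1\le\norm{q_t-q_{t-1}}_1$, and $\norm{\mu_1-d^{\pi_1}}_1\le2$, this yields precisely $\sum_t\norm{\mu_t-d^{\pi_t}}_1\le(\tau+1)\sum_{t=2}^T\norm{q_t-q_{t-1}}_1+2(\tau+1)$, matching the learner switching-cost term and half of the additive constant.

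The third step runs the identical recursion on $\nu_t$, the only change being that the increments are now $\norm{d^{\pi_t^\C}-d^{\pi_{t-1}^\C}}_1$, the movement of the stationary distributions of the \emph{changing} comparators. Here the key auxiliary lemma I would establish is the perturbation bound $\norm{d^\pi-d^{\pi'}}_1\le(\tau+1)\norm{\pi-\pi'}_{1,\infty}$: subtracting the fixed-point identities and writing $d^\pi-d^{\pi'}=(d^\pi-d^{\pi'})P^\pi+d^{\pi'}(P^\pi-P^{\pi'})$, one applies mixing to the first summand and, after expanding $P^\pi(x'|x)=\sum_a\pi(a|x)P(x'|x,a)$, bounds $\norm{d^{\pi'}(P^\pi-P^{\pi'})}_1\le\norm{\pi-\pi'}_{1,\infty}$ for the second. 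Feeding this into the outer geometric summation introduces a \emph{second} factor of $(\tau+1)$, producing $\sum_t\norm{\nu_t-d^{\pi_t^\C}}_1\le(\tau+1)^2P_T+2(\tau+1)$. Adding the learner term, the comparator term, and the exact middle quantity $\sum_t\inner{q_t-\qtc}{\ell_t}$ delivers the claimed bound, with the two boundary contributions combining into the constant $4(\tau+1)$.

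The main obstacle I anticipate is the comparator perturbation lemma together with the bookkeeping of constants: uniform mixing must be invoked \emph{twice} — once inside the perturbation bound to obtain the $(\tau+1)$ factor relating the policy distance $\norm{\pi-\pi'}_{1,\infty}$ to the stationary-distribution distance, and once in the outer telescoping over $t$ — and it is exactly this double application that is responsible for the quadratic $(\tau+1)^2$ dependence on the mixing time in the $P_T$ term. Pinning down the geometric-series constant $(1-e^{-1/\tau})^{-1}\le\tau+1$ and the $t=1$ boundary terms so that they sum cleanly to $4(\tau+1)$ is the delicate part; the reduction of actual to stationary losses and the triangle-inequality manipulations are otherwise routine.
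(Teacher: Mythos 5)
Your proposal is correct and follows essentially the same route as the paper's proof: the same three-way decomposition into (learner actual vs.\ stationary loss) $+$ (stationary dynamic regret) $+$ (comparator stationary vs.\ actual loss), the same geometric mixing recursion with $(1-e^{-1/\tau})^{-1}\le\tau+1$ to telescope the distribution drift (the paper's Lemma~\ref{lem:infinite-horizon-3}), and the same stationary-distribution perturbation bound $\norm{d^\pi-d^{\pi'}}_1\le(\tau+1)\norm{\pi-\pi'}_{1,\infty}$ (the paper's Lemma~\ref{lem:infinite-horizon-1}) applied inside the comparator telescope, which is exactly where the paper also picks up the $(\tau+1)^2 P_T$ term and the $4(\tau+1)$ constant.
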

Therefore, it suffices to design an algorithm to minimize the first two terms on the right-hand side of~\eqref{eq:mdp-sc}, as the last two terms $(\tau+1)^2 P_T  + 4(\tau+1)$ are not related to the algorithm. This essentially provides a generic regret reduction from infinite-horizon MDPs to the \emph{switching-cost expert problem}~\citep{TIT'02:OCOwithMemory}. Specifically, for the expert problem, at each round $t \in [T]$, the learner chooses a decision $q_t \in \Delta_N$ as a weight over all $N$ experts, then receives the loss $\ell_t \in \R^N$ and suffers an instantaneous loss $\inner{q_t}{\ell_t}$. In addition to the cumulative loss $\sum_{t=1}^T\inner{q_t}{\ell_t}$, the switching-cost expert problem further takes the actions' switch into account by adding $\lambda \sum_{t=2}^T \norm{q_t- q_{t-1}}_1$ as the penalty, $\lambda > 0$ is the coefficient.

Our reduction also holds for the static regret (simply choosing all compared policies as a fixed one), perhaps surprisingly, there is no explicit reduction in the literature to the best of our knowledge, though proof of Theorem~\ref{thm:reduction} is simple and all the ingredients are already in the pioneering work~\citep{MatOR'09:online-MDP} (see Appendix~\ref{sec:appendix-proof-reduction}). As another note,~\citet{ICML'19:online-control} study online non-stochastic control and give a reduction to the switching-cost online learning problem (or called online convex optimization with memory), while their reduction does not apply to infinite-horizon MDPs.

\subsection{Dynamic Regret}
\label{sec:dynamic-infinite-horizon}

With the reduction on hand, we now consider the design of a two-layer approach to optimize the dynamic regret of the switching-cost expert problem. It turns out that a recent result~\citep{aistats'22:scream} has resolved that expert problem, building upon which we propose our REgularized \mbox{DO-REPS} (REDO-REPS) algorithm for infinite-horizon MDPs.

As discussed before, it suffices to design an algorithm to minimize the first two terms in~\eqref{eq:mdp-sc}, namely, the dynamic regret in terms of the occupancy measure and a switching cost term. Notice that the first term also appears in optimizing dynamic regret of the episodic loop-free SSP (see~\pref{eq:dynamic-regret-episodic-loop-free}). Thus, a natural idea is to run \mbox{DO-REPS} (Algorithm~\ref{alg:loop-free}) over the occupancy measure space $\Delta(M, \alpha) = \{ q \in \Delta(M) \mid q(x,a)\geq \alpha, \forall x,a \}$. Specifically, we maintain $N$ base-learners denoted by $\B_1,\ldots,\B_N$, where $\B_i$ generates the prediction $q_{t,i}$ by performing \mbox{O-REPS} with a particular step size $\eta_i$ in the step size pool $\H$; then a meta-algorithm combines predictions to produce the final decision $q_t = \sum_{i=1}^N p_{t,i}q_{t,i}$ and updates the weight $p_t$.
However, \mbox{DO-REPS} does not take the switching cost into account, leading to undesired behavior in this problem. To see the reason, we decompose the switching cost as
\begin{align}
    \sum_{t=2}^T \norm{q_t - q_{t-1}}_1 & = \sum_{t=2}^T \Big \lVert\sum_{i=1}^N p_{t,i}q_{t,i} - \sum_{i=1}^N p_{t-1,i}q_{t-1,i}\Big \rVert_1 \nonumber                                                                                                        \\
                                        & \leq \sum_{t=2}^T \Big \lVert\sum_{i=1}^N p_{t,i}q_{t,i} - \sum_{i=1}^N p_{t,i}q_{t-1,i}\Big \rVert_1 + \sum_{t=2}^T \Big \lVert\sum_{i=1}^N p_{t,i}q_{t-1,i} - \sum_{i=1}^N p_{t-1,i}q_{t-1,i}\Big \rVert_1\nonumber \\
                                        & = \sum_{t=2}^T \Big \lVert\sum_{i=2}^N p_{t,i}(q_{t,i}-q_{t-1,i})\Big \rVert_1 + \sum_{t=2}^T \Big \lVert\sum_{i=1}^N (p_{t,i}-p_{t-1,i})q_{t-1,i}\Big \rVert_1 \nonumber                                             \\
                                        & \leq \sum_{t=2}^T \sum_{i=1}^N p_{t,i}\norm{ q_{t,i}-q_{t-1,i}}_1 + \sum_{t=2}^T \norm{ p_t - p_{t-1}}_1\label{eq:switching-cost}.
\end{align}
The second term in the right-hand side of~\eqref{eq:switching-cost} is the meta-algorithm's switching cost, which can be easily bounded by $\O(\sqrt{T})$ for common expert-tracking algorithms. However, the first term is the weighted switching cost of all base-learners, which could be very large and even grow linearly with iterations due to the base-learners with large step sizes. For example, when employing OMD as the base-algorithm, the switching cost of $\B_i$ is of order $\O(\eta_i T)$. Then, the construction of step size pool requires that $\eta_N = \O(1)$, leading to an $\O(T)$ switching cost of the base-learner $\B_N$, which ruins the overall regret bound. To address this, inspired by the recent progress on OCO with memory~\citep{aistats'22:scream}, we add a switching-cost regularization in evaluating each base-learner, namely, the feedback loss of the meta-algorithm $h_t \in \R^N$ is constructed as
\begin{equation}
    \label{eq:regularized-loss}
    h_{t,i} = \inner{q_{t,i}}{\ell_t} + \lambda \norm{q_{t,i} - q_{t-1,i}}_1.
\end{equation}
Set $\lambda=\tau+1$, it can be verified that the first two terms in~\eqref{eq:mdp-sc} can be written as
\begin{align}
      & \sum_{t=1}^T \inner{q_t-\qtc}{\ell_t} + (\tau+1) \sum_{t=2}^T \norm{q_t - q_{t-1}}_1 \nonumber \\
    = & \underbrace{\sum_{t=1}^T (\inner{p_t}{h_t} - h_{t,i}) + \lambda \sum_{t=2}^T \norm{p_t - p_{t-1}}_1}_{\meta} + \underbrace{\sum_{t=1}^T \inner{q_{t,i}-\qtc}{\ell_t} + \lambda \sum_{t=2}^T \norm{q_{t,i}-q_{t-1,i}}_1}_{\base}\label{eq:sc-decomposition}.
\end{align}
As a result, we have decomposed the switching-cost dynamic regret into two parts --- the first part is the meta-regret regarding the regularized loss $h_t$ that measures the regret overhead of the meta-algorithm penalized by the corresponding switching cost, and the second part is the base-regret of a specific base-learner $\B_i$ taking her switching cost into account. Consequently, by slightly modifying \mbox{DO-REPS} (Algorithm~\ref{alg:loop-free}), we get REgularized \mbox{DO-REPS} (REDO-REPS) algorithm as shown in Algorithm \ref{alg:infinite}. The key difference is the designed switching-cost-regularized loss for the meta-algorithm's update in Lines~\ref{line:sc-meta}--\ref{line:update-meta}, such that the overall two-layer approach can achieve a desired dynamic regret with switching cost as shown below.

\begin{algorithm}[t]
    \caption{REDO-REPS}
    \label{alg:infinite}
    \begin{algorithmic}[1]
        \REQUIRE step size pool $\H = \{\eta_1, \ldots, \eta_N\}$, learning rate $\varepsilon$ and clipping parameter $\alpha$.
        \STATE Define: $\psi(q) = \sum_{x,a} q(x,a) \log{q(x,a)}$.
        \STATE Initialization: set $q_{1,i} = \argmin_{q\in \Delta(M, \alpha)}\psi(q)$ and $p_{1,i} = 1/N$ for $\forall i \in [N]$.
        \FOR{$t=1$ to $T$}
        \STATE Receive $q_{t,i}$ from base-learner $\B_i, \forall i \in [N]$.
        \STATE Compute $q_t = \sum_{i=1}^N p_{t,i} q_{t,i}$, play the induced policy $\pi_t(a|x_t) \propto q_t(x_t,a), \forall a \in \A$.
        \STATE Suffer loss $\ell_t(x_t, a_t)$ and observe loss function $\ell_t$.
        \STATE Define the switching-cost-regularized loss as $$h_{t,i} = \inner{q_{t,i}}{\ell_t} + (\tau+1) \norm{q_{t,i}-q_{t-1,i}}_1, \forall i \in [N].$$\\ \label{line:sc-meta}
        \STATE Update weight by $p_{t+1,i} \propto \exp(-\varepsilon \sum_{s=1}^t h_{s,i}), \forall i \in [N]$. \label{line:update-meta}
        \STATE Each base-learner $\B_i$ updates ${q}_{t+1,i} = \argmin_{q\in \Delta(M, \alpha)}{\eta_i}\inner{q}{\ell_t} + \Dp(q , {q}_{t,i})$.
        \ENDFOR
    \end{algorithmic}
\end{algorithm}

\begin{myThm}
    \label{thm:total-regret-general}
    Set the step size pool $\H = \Big\{2^{i-1}\sqrt{T^{-1}\log |X||A|} \mid i \in [N]\Big\}$ where $N=\ceil{\frac{1}{2}\log(1+\frac{4T \log T}{\log|X||A|})}+1$, the learning rate $\varepsilon = (2\tau+3)^{-1}\sqrt{(\log{N})/2T}$ and the clipping parameter $\alpha=1/T^2$. REDO-REPS (Algorithm~\ref{alg:infinite}) ensures
    \begin{align*}
        \E[{\DReg}_T] \leq \O\left( \sqrt{\tau T \left(\log{|X||A| + \tau P_T \log{T}}\right)} + \tau^2 P_T \right).
    \end{align*}
\end{myThm}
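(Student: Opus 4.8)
The plan is to build on the reduction in \pref{thm:reduction} together with the meta-base decomposition in~\eqref{eq:sc-decomposition}. By \pref{thm:reduction}, it suffices to control $\sum_{t=1}^T \inner{q_t-\qtc}{\ell_t} + (\tau+1)\sum_{t=2}^T\norm{q_t-q_{t-1}}_1$, since the leftover terms $(\tau+1)^2 P_T + 4(\tau+1)$ are algorithm-independent and already account for the additive $\tau^2 P_T$ (the $4(\tau+1)$ piece is lower order). Setting $\lambda=\tau+1$ and invoking~\eqref{eq:sc-decomposition}, this quantity splits, for any base-learner $\B_i$, into a meta-regret $\meta$ and a base-regret $\base$, which I would bound separately and then combine by selecting the index $i^*$ whose step size best matches the (unknown) optimal one.

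For the base-regret $\base = \sum_{t=1}^T\inner{q_{t,i}-\qtc}{\ell_t} + \lambda\sum_{t=2}^T\norm{q_{t,i}-q_{t-1,i}}_1$, I would first adapt \pref{lem:loop-free-ssp-1} to the infinite-horizon occupancy space $\Delta(M,\alpha)$ (the single-distribution analogue, effectively $H=1$), yielding an O-REPS dynamic regret of order $\eta_i T + \eta_i^{-1}\big(\log|X||A| + \Pb_T\log(1/\alpha)\big)$. The base-learner's switching cost is then handled by the standard stability of mirror descent with the negative-entropy regularizer, namely $\norm{q_{t+1,i}-q_{t,i}}_1 \leq \O(\eta_i\norm{\ell_t}_\infty)=\O(\eta_i)$, so that $\lambda\sum_t\norm{q_{t,i}-q_{t-1,i}}_1 = \O((\tau+1)\eta_i T)$. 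Plugging $\alpha=1/T^2$, this gives $\base = \O\big((\tau+1)\eta_i T + \eta_i^{-1}(\log|X||A|+\Pb_T\log T)\big)$. Since $\Pb_T$ is unknown, I would exploit the geometric pool $\H$: the optimal choice $\eta^\star \approx \sqrt{(\log|X||A|+\Pb_T\log T)/(\tau T)}$ is approximated within a constant factor by some $\eta_{i^*}\in\H$ in the covered range, so selecting that $i^*$ produces $\base = \O\big(\sqrt{\tau T(\log|X||A|+\Pb_T\log T)}\big)$.

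For the meta-regret, I would run the exponential-weights analysis on the switching-cost-regularized feedback $h_{t,i}$ from~\eqref{eq:regularized-loss}, whose range is $[0,2\tau+3]$ because $\inner{q_{t,i}}{\ell_t}\in[0,1]$ and $(\tau+1)\norm{q_{t,i}-q_{t-1,i}}_1\leq 2(\tau+1)$. With $\varepsilon$ tuned to this range, both the Hedge regret $\sum_t(\inner{p_t}{h_t}-h_{t,i^*})$ and the meta-algorithm's own switching cost $\lambda\sum_t\norm{p_t-p_{t-1}}_1$ are of order $\O((\tau+1)\sqrt{T\log N})$, yielding $\meta=\O(\tau\sqrt{T\log N})$; here I rely on the switching-cost expert result of~\citep{aistats'22:scream} to treat these two pieces coherently. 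The remaining ingredient converts the occupancy path length into the policy path length: under \pref{def:mixing} I would prove a perturbation bound $\norm{q^{\piC_t}-q^{\pi_{t-1}^\C}}_1 \leq \O(\tau\norm{\piC_t-\pi_{t-1}^\C}_{1,\infty})$ for stationary distributions, which sums to $\Pb_T \leq \O(\tau P_T)$. Adding $\meta$, $\base$, the reduction overhead $\O(\tau^2 P_T)$, and substituting $\Pb_T\leq\O(\tau P_T)$ then produces the claimed bound.

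The hard part will be the meta level. The regularized losses $h_{t,i}$ have range growing linearly in $\tau$, and the exponential-weights regret and, more delicately, the meta-algorithm's switching cost $\lambda\sum_t\norm{p_t-p_{t-1}}_1$ must be controlled \emph{simultaneously} so that the $\tau$-factors do not compound. Choosing $\varepsilon=(2\tau+3)^{-1}\sqrt{(\log N)/2T}$ to balance these two contributions against each other—ensuring the meta term does not overwhelm the base rate—is exactly the step where the switching-cost expert machinery of~\citep{aistats'22:scream} is essential, and I expect it to be the main technical obstacle.
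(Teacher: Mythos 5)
Your proposal follows essentially the same route as the paper's own proof: the reduction of \pref{thm:reduction}, the meta-base decomposition~\eqref{eq:sc-decomposition} with the regularized feedback~\eqref{eq:regularized-loss}, an O-REPS base-regret bound of order $\eta_i T + \eta_i^{-1}(\log|X||A| + \Pb_T\log T)$, switching costs controlled by OMD stability (\pref{lem:omd}), a Hedge-style meta-regret with losses in $[0,2\tau+3]$, step-size-pool covering, and finally $\Pb_T \leq (\tau+2)P_T$ (\pref{lem:infinite-horizon-2}); also, the meta level needs no external machinery from~\citet{aistats'22:scream} beyond what you describe---the paper handles it exactly as you do, by tuning $\varepsilon$ to the loss range and bounding $\norm{p_t-p_{t-1}}_1 \leq \varepsilon(2\tau+3)$ directly.

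The one genuine omission is the comparator-clipping step: \pref{lem:loop-free-ssp-1} (and its infinite-horizon analogue) only bounds regret against comparators whose occupancy measures lie in the clipped space $\Delta(M,\alpha)$, whereas the theorem allows arbitrary compared policies---e.g.\ deterministic ones, whose occupancy measures have zero entries and lie outside $\Delta(M,\alpha)$ for every $\alpha>0$. The paper fixes this by competing against the smoothed comparators $u_t = (1-\tfrac{1}{T})\qtc + \tfrac{1}{T}q^{\pi^u}$ for a uniform policy $\pi^u$, which lie in $\Delta(M,1/T^2)$, cost only an additive $\O(1)$ in regret, and satisfy $\sum_{t=2}^T\norm{u_t-u_{t-1}}_1 \leq \Pb_T$; without this (standard) step, your invocation of the base-regret lemma proves a strictly weaker statement.
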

\begin{myRemark}
    Set $\pi_{1:T}^\C = \pi^* \in \argmin_{\pi \in \Pi} \sum_{t=1}^T \ell_t(x_t, \pi(x_t))$ (then $P_T = 0$), Theorem \ref{thm:total-regret-general} recovers the best known static regret $\O (\sqrt{\tau T\log{|X||A|}} )$~\citep{NIPS'13:MDP-Neu}. Note that the dynamic regret scales with the path length of compared policies rather than the path length of occupancy measures. To achieve so, we establish relationships of path length variations between compared policies and their induced occupancy measures, which can be found in Lemma~\ref{lem:infinite-horizon-2} of Appendix~\ref{sec:appendix-infinite-horizon-occupancy-measure}.
\end{myRemark}

\subsection{More Adaptive Results}
\label{sec:adaptive-infinite-horizon}
We finally consider whether it is possible to enhance our algorithm to achieve adaptive dynamic regret bounds.

Based on the reduction in Theorem~\ref{thm:reduction}, we naturally consider how to design adaptive algorithms for the switching-cost problem that can exploit the predictability of environments. Unfortunately, we only have a negative result in this regard.
\begin{myThm}
    \label{thm:impossibility}
    For any online algorithm, there exists a loss sequence $\ell_1,\ldots,\ell_T$ such that the returned decision sequence $q_1,\ldots,q_T$ satisfies
    \[
        \sum_{t=1}^T \inner{q_t}{\ell_t} - \min_{q \in \Delta_N} \sum_{t=1}^T \inner{q}{\ell_t} + \sum_{t=2}^T \norm{q_t - q_{t-1}}_1 \geq \Omega \Big(\sqrt{1 + \sum_{t=2}^T \norm{\ell_t-\ell_{t-1}}_\infty^2} \Big).
    \]
\end{myThm}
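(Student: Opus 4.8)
The plan is to establish the bound through a randomized two-expert instance combined with an averaging (Yao) argument, exploiting that the switching cost $\sum_{t=2}^T\norm{q_t-q_{t-1}}_1$ is non-negative, so that it suffices to lower bound the static-regret term alone. I restrict to $N=2$ experts (embedding into larger $N$ by fixing the loss of experts $3,\ldots,N$ to $1$). Fix a gap parameter $\delta\in(0,1]$, draw \iid Rademacher signs $\sigma_1,\ldots,\sigma_T\in\{-1,+1\}$, and set $\ell_t=(\tfrac12-\tfrac\delta2\sigma_t,\ \tfrac12+\tfrac\delta2\sigma_t)\in\R_{[0,1]}^{2}$, which are legitimate losses. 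Two facts are immediate: consecutive losses satisfy $\norm{\ell_t-\ell_{t-1}}_\infty=\delta\,\ind\{\sigma_t\ne\sigma_{t-1}\}$, so the variation is controlled \emph{deterministically} by $V_T:=\sum_{t=2}^T\norm{\ell_t-\ell_{t-1}}_\infty^2\le\delta^2(T-1)$; and the cumulative loss vector is $\sum_{t=1}^T\ell_t=(\tfrac T2-\tfrac\delta2 S_T,\ \tfrac T2+\tfrac\delta2 S_T)$ with $S_T=\sum_{t=1}^T\sigma_t$.

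Next I would rewrite the static regret in terms of the learner's net preference $x_t=q_t(1)-q_t(2)\in[-1,1]$. Since $\min_{q\in\Delta_N}\sum_{t=1}^T\inner{q}{\ell_t}=\tfrac T2-\tfrac\delta2\abs{S_T}$ and $\sum_{t=1}^T\inner{q_t}{\ell_t}=\tfrac T2-\tfrac\delta2\sum_{t=1}^T\sigma_t x_t$, the static-regret term equals $\tfrac\delta2\big(\abs{S_T}-\sum_{t=1}^T\sigma_t x_t\big)$. The decisive structural point is that in the expert protocol $q_t$ (hence $x_t$) is committed \emph{before} $\ell_t$ is revealed, so $x_t$ depends only on $\sigma_1,\ldots,\sigma_{t-1}$ (and internal randomness) and is independent of $\sigma_t$; consequently $\E[\sigma_t x_t]=\E[x_t]\,\E[\sigma_t]=0$ for every $t$ and \emph{every} (possibly randomized) algorithm. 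Combining this with the anti-concentration estimate $\E\abs{S_T}\ge c\sqrt T$ for Rademacher sums yields that the expected static regret is at least $\tfrac{c\delta}{2}\sqrt T$, and since the switching cost is non-negative the entire left-hand side of the claimed inequality has expectation at least $\tfrac{c\delta}{2}\sqrt T$.

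Finally I would derandomize. Since the expectation of the left-hand side is at least $\tfrac{c\delta}{2}\sqrt T$, some realization of the signs produces a fixed loss sequence whose left-hand side is at least this value; for that sequence the deterministic bound gives $V_T\le\delta^2 T$. Restricting the gap to $\delta\in[T^{-1/2},1]$ ensures $\delta^2T\ge1$, whence $\sqrt{1+V_T}\le\sqrt{2}\,\delta\sqrt T$, so the left-hand side is $\ge\tfrac{c}{2\sqrt2}\sqrt{1+V_T}=\Omega(\sqrt{1+V_T})$, proving the claim; varying $\delta$ across this interval realizes every variation level from $\Theta(1)$ up to $\Theta(T)$, so a single family of instances covers the whole statement.

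The step I expect to require the most care is the vanishing of the cross term, namely the rigorous justification that the learner can extract no advantage from the current round — the measurability/independence of $x_t$ from $\sigma_t$ that kills $\E[\sigma_t x_t]$ — together with a clean constant in $\E\abs{S_T}\ge c\sqrt T$. A secondary subtlety worth flagging is that this construction makes the \emph{static-regret} component alone responsible for the lower bound; if one instead wants the bound to be genuinely driven by the switching cost in the predictable regime (where good optimism $m_t=\ell_{t-1}$ can drive the static regret to $o(\sqrt T)$), the \iid signs should be replaced by a slowly drifting random walk for the gap, and one must then prove a switching-versus-regret dichotomy by controlling the number of sign changes (leader switches) of the walk — an anti-concentration argument that would be the genuine technical core in that alternative formulation.
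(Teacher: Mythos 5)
Your construction does prove the inequality exactly as displayed: the Rademacher instance with gap $\delta$ forces expected static regret at least $\tfrac{c\delta}{2}\sqrt{T}$, the switching cost is nonnegative, and the deterministic bound $V_T\le\delta^2 T$ lets you derandomize. But this is not the paper's argument, and it misses what \pref{thm:impossibility} is actually for. In your family the forced loss and the comparator quantity have the \emph{same order}: $\sqrt{1+V_T}\asymp\delta\sqrt{T}$ while your lower bound is also $\asymp\delta\sqrt{T}$. So you obtain the inequality only with one specific (small) constant, and in fact the statement you prove is already implied by the classical $\Omega(\sqrt{T})$ expert lower bound at $\delta=1$ (since $\norm{\ell_t-\ell_{t-1}}_\infty\le 1$ forces $V_T\le T$ on every sequence). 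In particular, your bound is perfectly compatible with an algorithm guaranteeing regret plus switching cost at most $C\sqrt{1+V_T}$ for a larger constant $C$ --- and ruling out exactly such ``variation-type adaptive bounds'' is the purpose of the theorem (see the discussion immediately after it and its role in \pref{sec:adaptive-infinite-horizon}). Discarding the switching-cost term at the outset is the telltale sign: a proof that never uses it cannot exhibit the regret-versus-switching tension, and indeed, without switching costs, optimistic Hedge with $m_t=\ell_{t-1}$ attains $O(\sqrt{V_T\log N})$ regret on any sequence, so instances in which the static regret alone carries the load can never yield more than an order-matching bound.

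The paper's proof instead \emph{decouples} $V_T$ from the forced loss. Losses are held constant inside epochs of length $T/E$ and re-randomized (each expert gets $0$ or $1$, i.i.d.) only at epoch boundaries, so $V_T=\Theta(E)$ counts epochs rather than rounds. On such sequences one shows, following the switching-budget lower bound of \citet{COLT'18:switch-cost}, that any algorithm whose total switching cost is at most $B$ must suffer static regret $\Omega(T\log N/B)$. Choosing $E=\Theta(T^{2/3})$, so that $\sqrt{1+V_T}=\Theta(T^{1/3})$, gives the dichotomy: either the switching cost already exceeds $T^{1/3}$, or the regret is $\Omega(T^{2/3})$ --- a \emph{polynomial} factor larger than $\sqrt{1+V_T}$. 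This defeats every constant $C$ simultaneously, which is what makes the statement an impossibility result; your i.i.d.\ construction structurally cannot do this, because fresh per-round randomness ties the achievable variation to the square of the achievable regret. Your closing remark does flag the issue and gestures at a drifting-random-walk fix, but that sketch (controlling leader switches of a random walk) is left undeveloped and is in any case a different mechanism from the epoch/switching-budget argument the paper relies on.
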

Theorem~\ref{thm:impossibility} indicates that even for the static regret (a special case of our concerned dynamic regret) of the switching-cost expert problem, it is impossible to obtain variation-type adaptive bound, not to mention a general optimistic bound.

We note that a similar trade-off between adaptivity and switching cost was also considered by the prior work of~\citet{COLT'14:higher-order}, who show that it is impossible to achieve an adaptive bound scaling with the variance of  gradients in online linear optimization with switching cost. The caveat is that their lower bound argument crucially relies on the two factors: (i) they require an \emph{adaptive} adversary; (ii) the loss is required to a signed game with the loss range in $[-1,1]$. Unfortunately, this result does not apply to our case, in that the online problem reduced from the online infinite-horizon MDPs does not satisfy the two conditions --- the loss functions are chosen in an \emph{oblivious} way and lie in the range of $[0,1]$. Furthermore, the proof techniques of their result and our Theorem~\ref{thm:impossibility} exhibit salient difference.~\citet{COLT'14:higher-order} rely on the adaptive adversary and use some flipping operation to construct a hard instance to constitute a lower bound. By contrast, we only have an oblivious adversary and thus the proof of lower bound is more challenging. To address this, we connect the problem of adaptive OCO with switching cost to the problem of OCO with switching budget~\citep{COLT'18:switch-cost} and thus establish the desired lower bound.

We finally emphasize that Theorem~\ref{thm:impossibility} does not constitute a direct lower bound for dynamic regret of infinite-horizon MDPs. However, this suggests that significant new analyses are required to obtain adaptive bounds for this problem, though it seems to be pessimistic.
\section{Experiment}
\label{sec:experiment}
In this section, we present empirical studies to examine the performance of our algorithms.

\subsection{Episodic loop-free SSP}
\label{sec:experiment-loop-free-ssp}

\begin{figure}[t]
    \centering
    \begin{minipage}[t]{0.48\textwidth}
        \centering
        \includegraphics[width=0.9\linewidth, trim=100 0 100 30,clip]{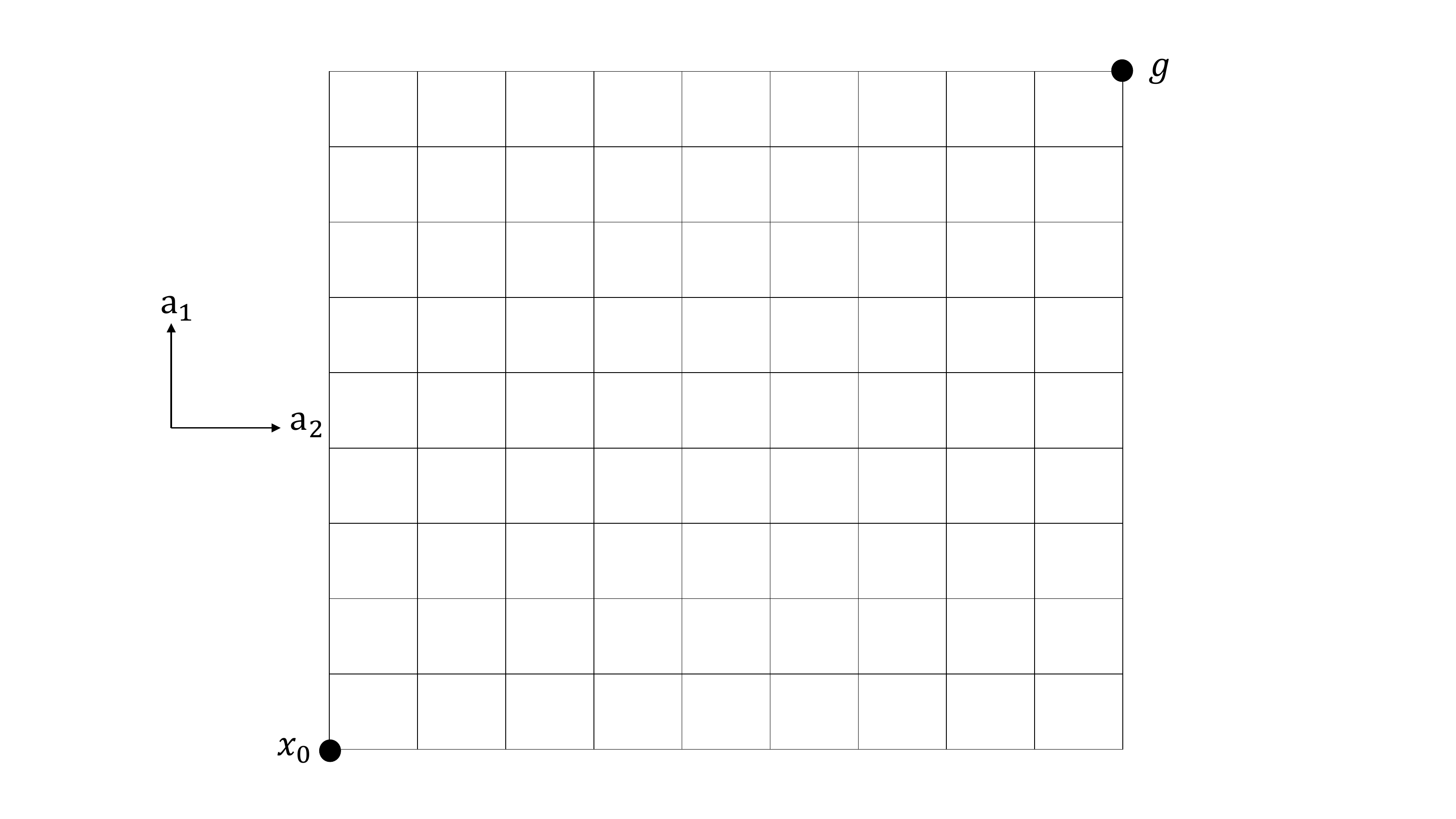}
        \caption{GridWorld environment}
        \label{fig:loop-free-env}
    \end{minipage}
    \begin{minipage}[t]{0.48\textwidth}
        \centering
        \includegraphics[width=0.9\linewidth]{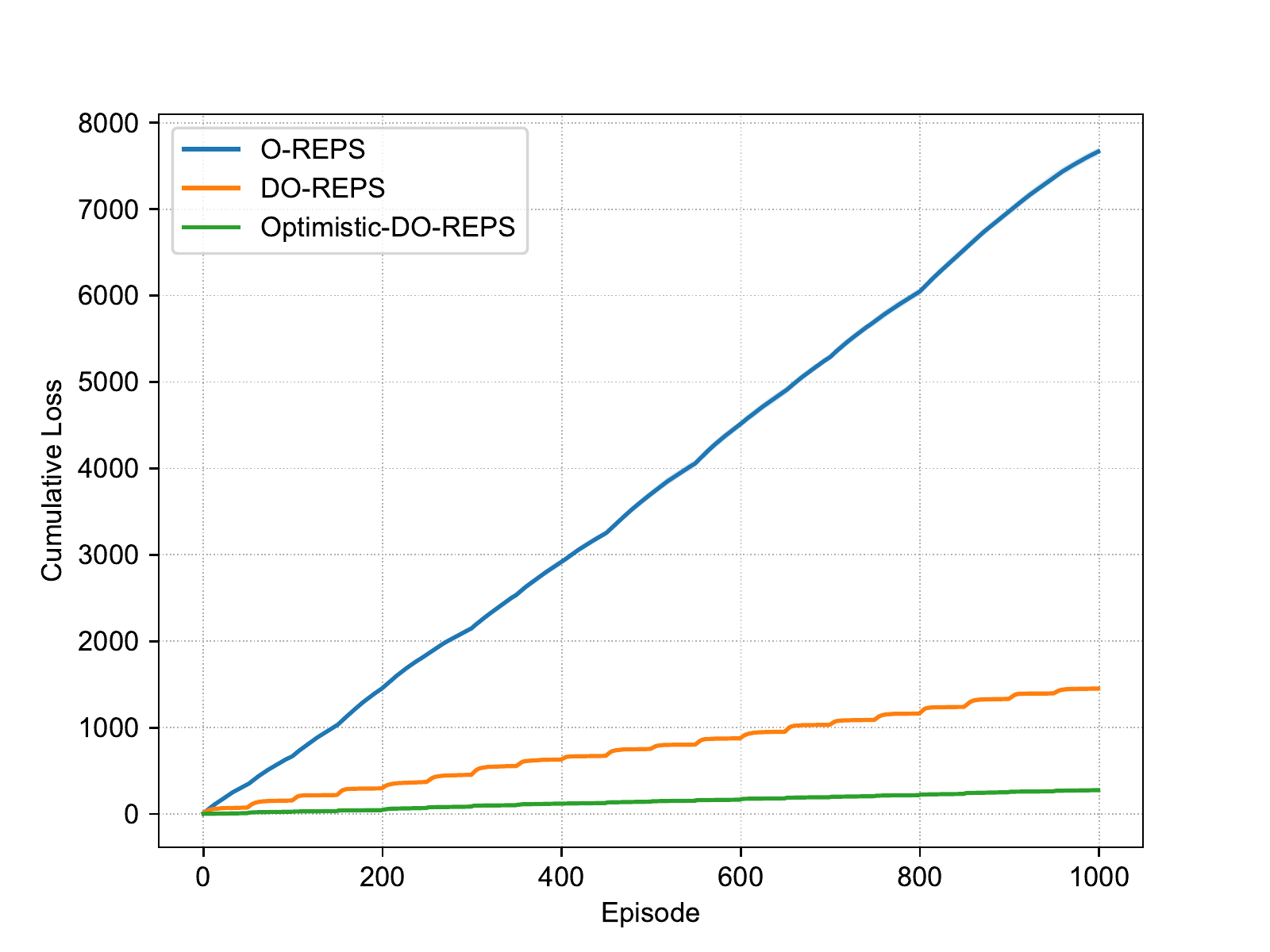}
        \caption{Cumulative loss of algorithms}
        \label{fig:loop-free-result}
    \end{minipage}
\end{figure}

We consider a GridWorld environment of size $10 \times 10$, where in each episode the learner starts from the lower left corner $x_0$ to the upper right corner $g$~\citep{colt'10:loop-free}, as shown in Figure~\ref{fig:loop-free-env}. The learner has $2$ actions in each state: moving either up or right. Taking any action leads to the corresponding direction with probability $0.9$ and the other direction with probability $0.1$. If the learner tries to move out of the boundary, the attempt will not succeed and the learner will move forward in the other direction. Thus, the problem satisfies the requirements of episodic loop-free SSPs, where the horizon length in each episode is $H=20$, the state number is $|\X|=99$ and the action number is $|\A|=2$. The number of episodes is set to $K = 1000$. The loss function $\ell_k \in \R_{[0,1]}^{|\X||\A|}$ is forced to be piecewise stationary and will change every $50$ episodes to simulate the non-stationary environments with abrupt changes. In each piece, we randomly choose an action $a \in \A$ for each state $x \in \X$ and set the loss as $\ell(x,a)=0$ and $\ell(x, a')=1$ for the other action $a'$.

We compare the performance of our proposed \mbox{DO-REPS} algorithm (Algorithm~\ref{alg:loop-free}) and Optimistic-DO-REPS algorithm (Algorithm~\ref{alg:ODO-REPS}) against \mbox{O-REPS} algorithm~\citep{NIPS'13:MDP-Neu} for this problem. For Optimistic-DO-REPS, we set the optimism $m_k = \ell_k, \forall k \in [K]$ to show the benefit of the optimism with high quality.

Figure~\ref{fig:loop-free-result} shows the cumulative loss of all algorithms. The performance of the algorithms are evaluated by running the corresponding policies in the environment $10$ times and we report the mean and the standard deviation to show the effect of the stochastic policies and transition kernel. We can observe that \mbox{O-REPS} incurs a large cumulative loss over the episodes and can not effectively learn from the non-stationary environments. By contrast, \mbox{DO-REPS} and Optimistic-DO-REPS achieve small cumulative loss in the changing environment and outperforms \mbox{O-REPS} significantly. Moreover, Optimistic-DO-REPS can exploit the knowledge of the optimism and achieve extremely small cumulative loss with high quality optimism, which supports our theoretical findings.

\subsection{Episodic SSP}
\label{sec:experiment-non-loop-free-ssp}

To simulate an episodic (non-loop-free) SSP, we again use the GridWorld environment of size $10 \times 10$. In addition, we make slight modifications to enforce larger differences in the hitting time of different polices, as shown in Figure~\ref{fig:ssp-env}. All states form a circle and transitions are possible only along the circle and the learner starts from the lower left corner $x_0$ to the upper right corner $g$ in each episode. The learner has $2$ actions in each state: moving either in a clockwise direction or the opposite. Taking any action leads to the corresponding direction with probability $0.9$ and the other one with probability $0.1$. Thus, the state number is $|\X|=99$ and the action number is $|\A|=2$. The number of episodes is set to $K = 1000$. The loss function $\ell_k \in \R_{[0,1]}^{|\X||\A|}$ is forced to be piecewise stationary and will change every $50$ episodes to simulate the non-stationary environments with abrupt changes. We randomly choose an action $a \in \A$ and set the losses $\ell(x,a)=0$ and $\ell(x, a')=1$ for the other action $a'$ for all states $x \in \X$ in the first piece and swap losses for two actions after each piece ends.

We compare the performance of our proposed \mbox{CODO-REPS} (Algorithm~\ref{alg:CODO-REPS}) and Optimistic-CODO-REPS algorithm (Algorithm~\ref{alg:optimisticCDO-REPS}) against the following two contenders designed for optimizing the static regret: (\romannumeral1) SSP-O-REPS algorithm~\citep{IJCAI'21:SSP-Rosenberg}, online mirror descent on largest possible occupancy measure space with $H=\frac{D}{c_{\min}}$ for $c_{\min} > 0$ and $H=DK^{\frac{3}{4}}$ for $c_{\min} = 0$ where $c_{\min}$ is minimum loss. (\romannumeral2) Ada-O-REPS algorithm~\citep{COLT'21:SSP-minimax}, which employs a meta-base two-layer structure to learn the unknown hitting time of the optimal policy $H^{\pi^*}$ on the fly. For Optimistic-DO-REPS, we set the optimism as $m_k = 2\ell_k, \forall k \in [K]$ to make sure the correction terms of Optimistic-CODO-REPS are the same as \mbox{CODO-REPS} (note that the correction term $a_k = 32 \eta (\ell_k - m_k)^2, b_k = 32 \varepsilon (h_k - M_k)^2, \forall k \in [K]$) while ensuring the high quality of the optimism.

\begin{figure}[t]
    \centering
    \begin{minipage}[t]{0.48\textwidth}
        \centering
        \includegraphics[width=0.9\linewidth, trim=100 0 100 30,clip]{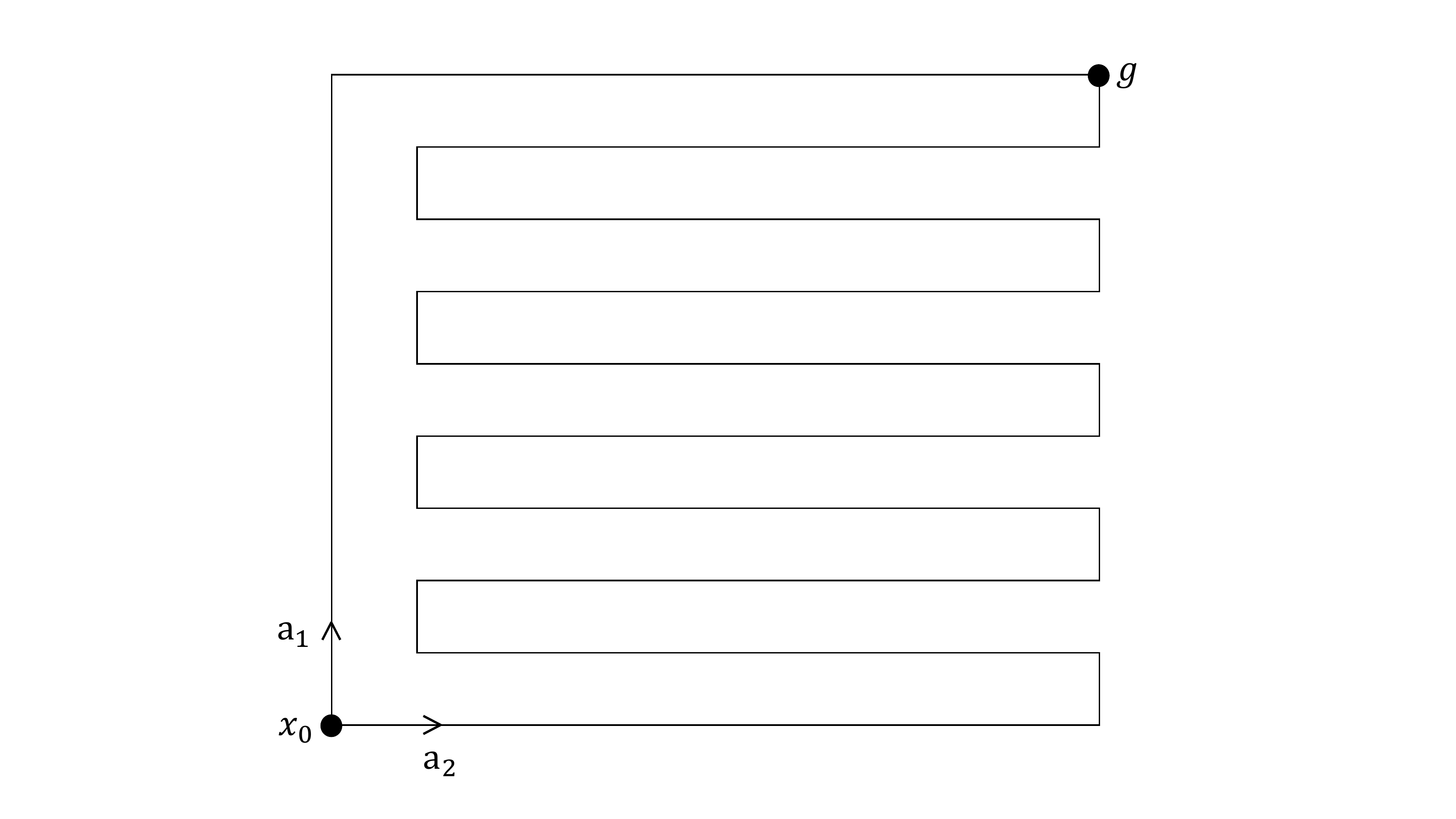}
        \caption{GridWorld environment}
        \label{fig:ssp-env}
    \end{minipage}
    \begin{minipage}[t]{0.48\textwidth}
        \centering
        \includegraphics[width=0.9\linewidth]{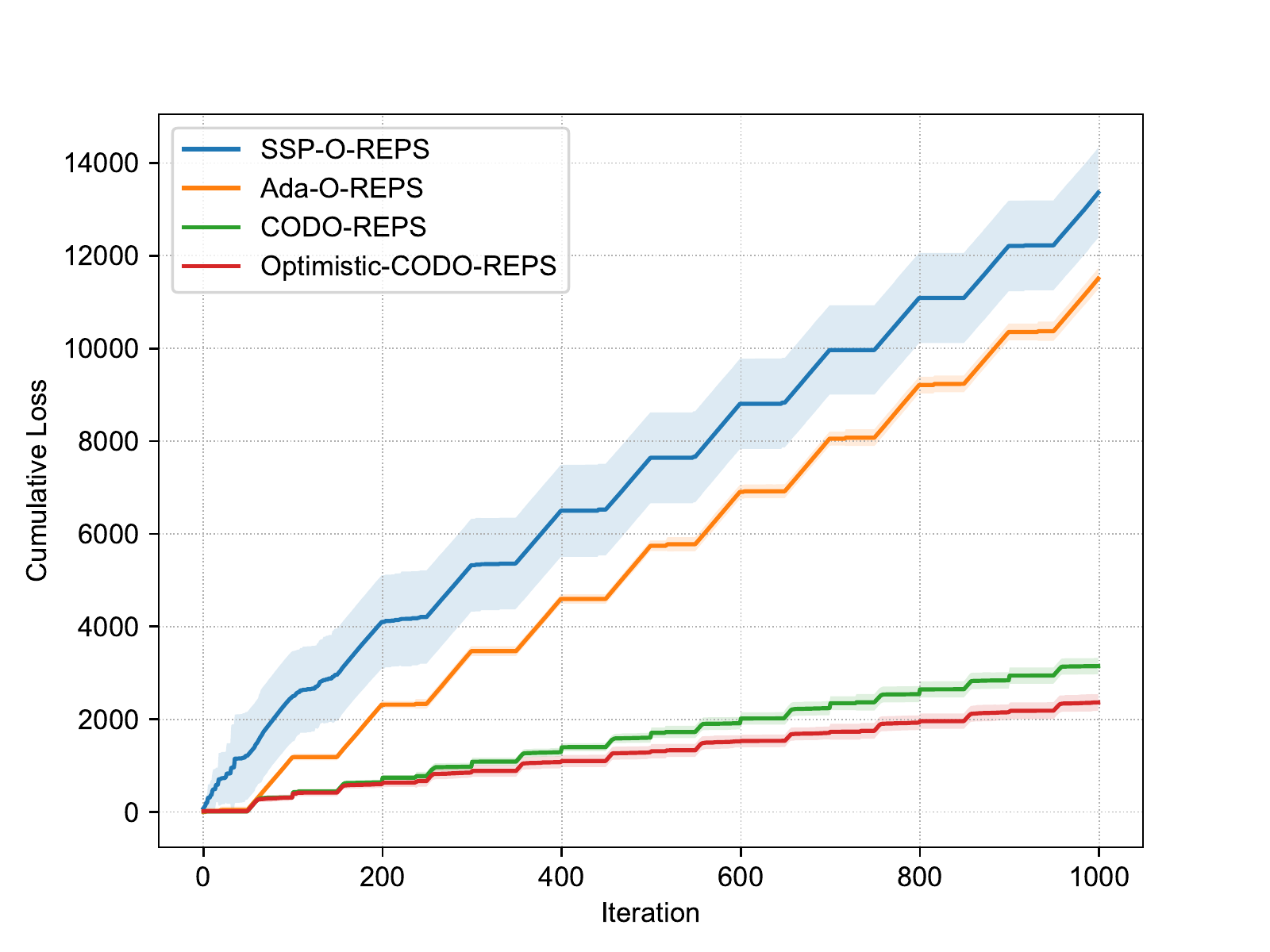}
        \caption{Cumulative loss of algorithms}
        \label{fig:ssp-result}
    \end{minipage}
\end{figure}

Figure~\ref{fig:loop-free-result} shows the cumulative loss of all algorithms. The performance of the algorithms are evaluated by running the corresponding policies in the environment $10$ times and we report the mean and the standard deviation to show the effect of the stochastic policies and transition kernel. It can be observed that our algorithms outperform the existing algorithms significantly.Moreover, the cumulative losses of SSP-O-REPS and Ada-O-REPS remain almost the same for odd pieces and grows linearly for even pieces. This is due to these two algorithms fail to learn in the non-stationary environments and keep one direction all the episodes. On the contrary, both our \mbox{CODO-REPS} and Optimistic-CODO-REPS can adapt to the changes of the environments quickly and suffer almost zero losses after a little episodes in each piece. Moreover, Optimistic-CODO-REPS can exploit the optimism and suffer smaller cumulative loss than \mbox{CODO-REPS} if the optimism is of high quality.

\subsection{Infinite-horizon MDPs}
\label{sec:experiment-infinite-horizon-mdps}

We consider the same GridWorld environment as that in Section~\ref{sec:experiment-loop-free-ssp}, where the learner starts from the initial state $x_0$. The difference is that now there is not goal state $g$ in infinite-horizon MDPs and the learner keeps moving in the MDP to minimize the cumulative loss over a $T$ step horizon. The learner has $4$ actions in each state: moving up, down, left or right. Taking any action leads to the corresponding direction with probability $0.9$ and other undesired directions uniformly at random with probability $0.1$.  Any action leads the learner to go out of the boundary will fail and the learner will not move. The number of steps is set to $T = 5000$. The loss function $\ell_k \in \R_{[0,1]}^{|\X||\A|}$ is forced to be piecewise stationary and will change every $500$ steps to simulate the non-stationary environments with abrupt changes. In each piece, we randomly choose an action $a \in \A$ for each state $x \in \X$ and set the loss as $\ell(x,a)=0$ and $\ell(x, a')=1$ for the other three actions $a' \in \A$.

\begin{figure}[t]
    \centering
    \begin{minipage}[t]{0.48\textwidth}
        \centering
        \includegraphics[width=0.9\linewidth]{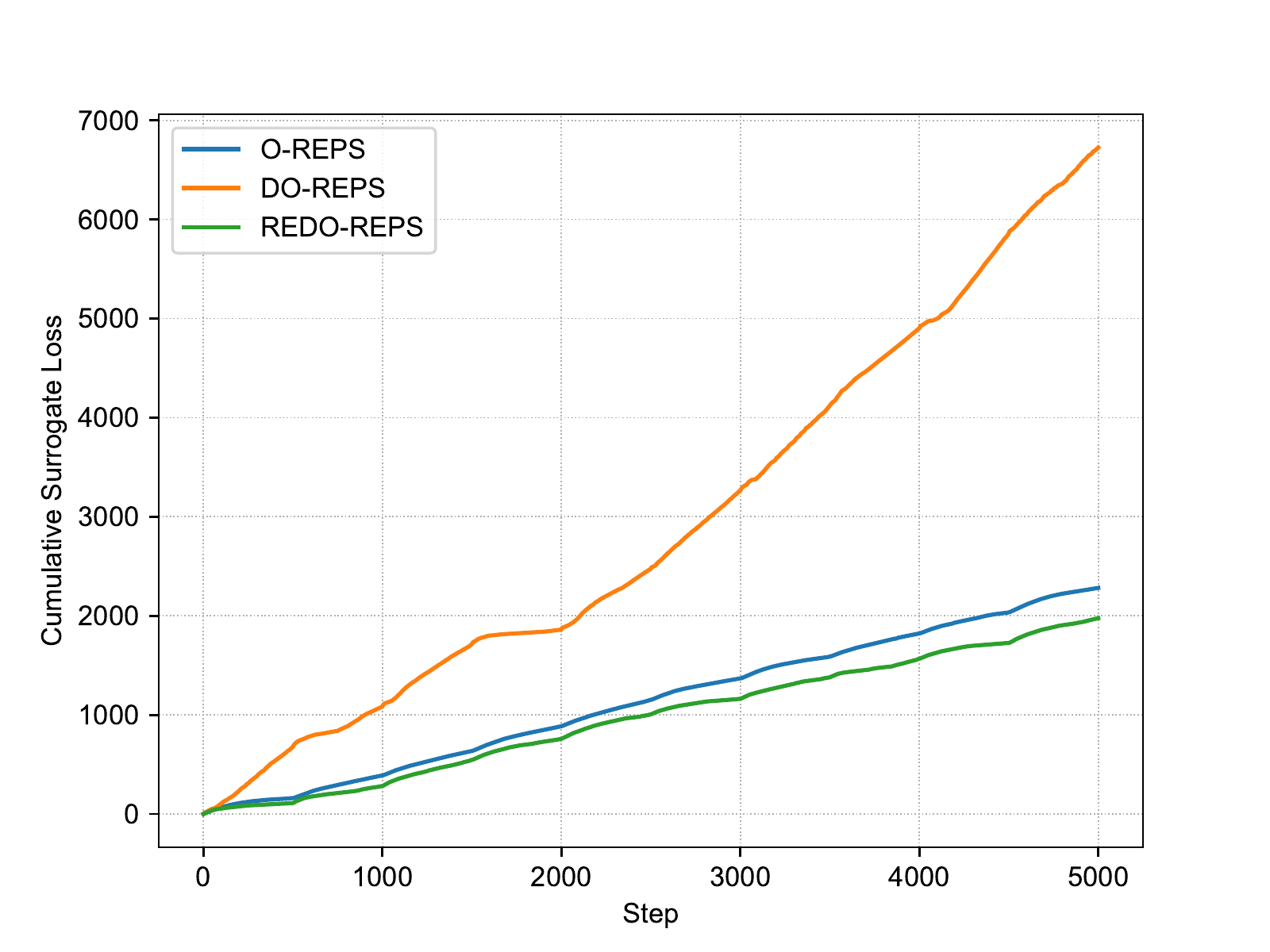}
        \caption{Cumulative surrogate loss}
        \label{fig:infinite-switching}
    \end{minipage}
    \begin{minipage}[t]{0.48\textwidth}
        \centering
        \includegraphics[width=0.9\linewidth]{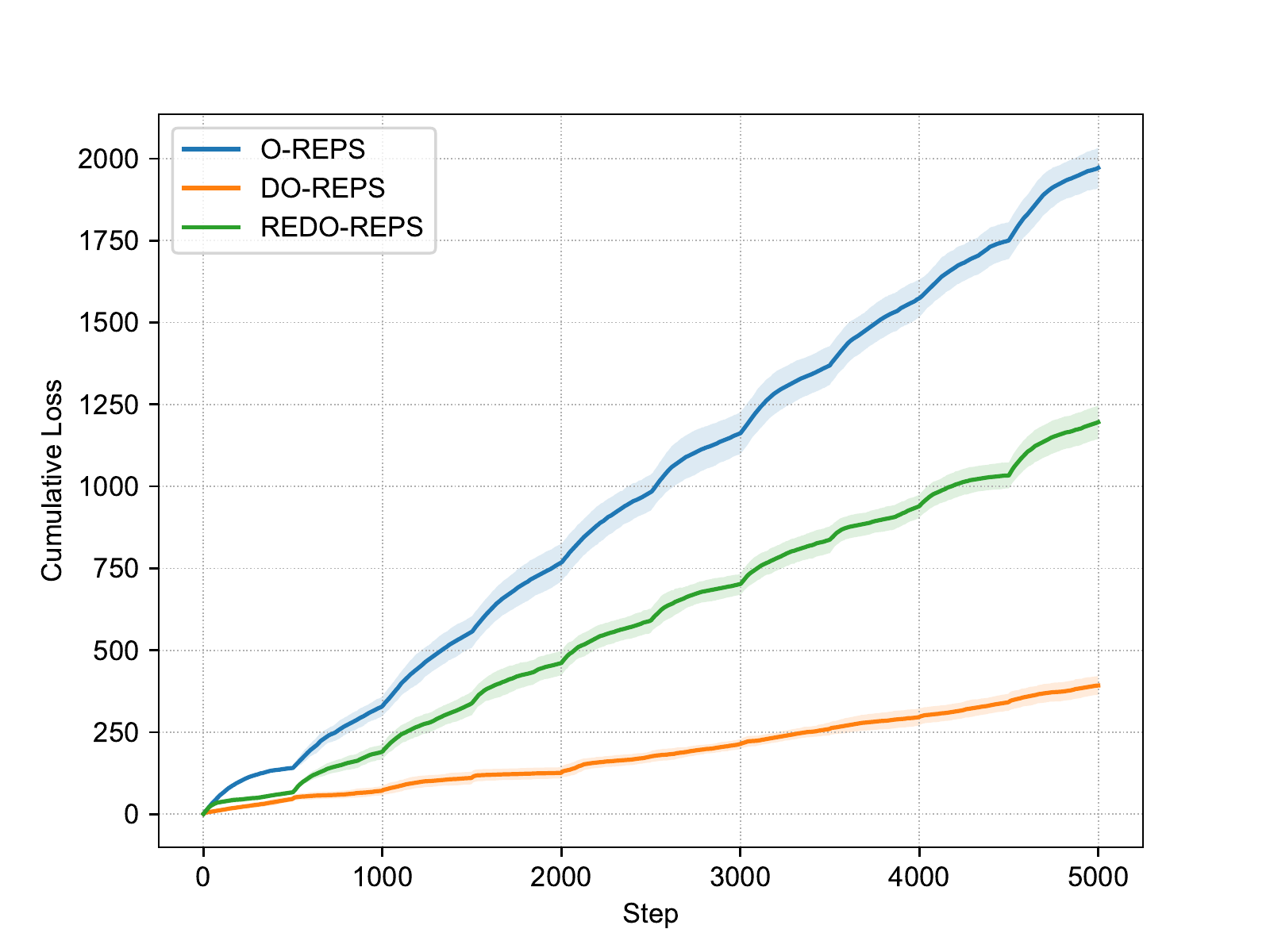}
        \caption{Cumulative loss}
        \label{fig:infinite-result}
    \end{minipage}
\end{figure}

We compare the performance of our proposed REDO-REPS (Algorithm~\ref{alg:infinite}) against the following two contenders: (\romannumeral1) \mbox{O-REPS}~\citep{NIPS'13:MDP-Neu}, online mirror descent over the occupancy measure space, which is design for optimizing static regret; and (\romannumeral2) \mbox{DO-REPS} (Algorithm~\ref{alg:loop-free}), which is designed to optimize the dynamic regret in a meta-base tow-layer structure yet does not take the switching cost into consideration.

Since our algorithm REDO-REPS is designed to optimize the dynamic regret with switching cost $ \sum_{t=1}^T \inner{q_t-\qtc}{\ell_t} + (\tau+1) \sum_{t=2}^T \norm{q_t - q_{t-1}}_1 $ in~Theorem~\ref{thm:reduction}, we define the cumulative surrogate loss as $ \sum_{t=1}^T \inner{q_t}{\ell_t} + (\tau+1) \sum_{t=2}^T \norm{q_t - q_{t-1}}_1$ and report the cumulative surrogate loss of difficult algorithms in Figure~\ref{fig:infinite-switching}. Under this measure, we can see that REDO-REPS clearly achieves the best, \mbox{O-REPS} is comparable, while \mbox{DO-REPS} is not well-behaved. Furthermore, we run the policies of different algorithms in the MDP $10$ times and Figure~\ref{fig:infinite-result} show the the mean and the standard deviation of the cumulative loss. The result reveals that though \mbox{DO-REPS} achieves largest stationary loss with switching cost, it performs best in this problem. This is due to REDO-REPS is designed to optimize the worst case upper bound of the true dynamic regret, which may be overly pessimistic and perform poorly in some situations. How to give a refined and smoothed analysis beyond the worst-case upper bound for infinite-horizon MDP is an interesting question and we leave this as the future work. We note that though REDO-REPS and \mbox{O-REPS} perform close with respect to the stationary loss with switching cost, the true performance of REDO-REPS in the MDP is much better than \mbox{O-REPS}, which shows the effectiveness of our algorithm to some extent.

\section{Conclusion}
\label{sec:conclusion}
In this paper we investigate learning in three foundational online MDPs with adversarially changing loss functions and known transition kernel. We propose novel online ensemble algorithms and establish their dynamic regret guarantees. In particular, the results for episodic (loop-free) SSP are provably minimax optimal in terms of time horizon and certain non-stationarity measure. Furthermore, when the environments are predictable, we enhance our algorithms to achieve better regret for episodic (loop-free) SSP and present impossibility results for infinite-horizon MDPs. 

Our results present an initial resolution for dynamic regret of online MDPs, and there remain many interesting open problems. For example, it remains open whether it is possible to obtain adaptive dynamic regret bound for infinite-horizon MDPs, as discussed in Section~\ref{sec:adaptive-infinite-horizon}. Moreover, extending our results to the bandit feedback and unknown transition setting is an important and challenging future work.

\bibliography{mdp}
\newpage
\appendix
\section{Useful Lemmas Related to Online Mirror Descent}
\label{sec:appendix-lemmas}
In this section, we present several important lemmas used frequently in the analysis of (optimistic) online mirror descent.

\begin{myLemma}[Lemma 3.2 of~\citet{SIAM'93:Bregman}]
    \label{lem:omd}
    Define $q^* = \argmin_{q \in \K}\eta \inner{q}{\ell} + \Dp(q, \qh)$ for some compact set $\K \subseteq \R^d$, convex function $\psi$, an arbitrary point $\ell \in \R^d$, and a point $\qh \in \K$. Then for any $u \in \K$,
    \begin{equation*}
        \inner{q^* - u}{\ell} \leq \frac{1}{\eta}(\Dp(u, \qh) - \Dp(u, q^*) - \Dp(q^*, \qh)).
    \end{equation*}
\end{myLemma}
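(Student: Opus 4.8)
The plan is to derive the inequality by pairing the first-order optimality condition of the proximal update defining $q^*$ with the classical three-point identity for Bregman divergences. First I would observe that the objective $F(q) = \eta \inner{q}{\ell} + \Dp(q, \qh)$ is convex and differentiable over $\K$ (assuming, as is standard for Legendre-type regularizers, that $\psi$ is differentiable on the relevant domain), with gradient $\nabla F(q) = \eta \ell + \nabla \psi(q) - \nabla \psi(\qh)$. Since $q^*$ minimizes $F$ over the convex set $\K$, the variational inequality characterizing a constrained minimizer gives $\inner{\nabla F(q^*)}{u - q^*} \geq 0$ for every $u \in \K$. Substituting the gradient and rearranging yields
$$
\eta \inner{q^* - u}{\ell} \leq \inner{\nabla \psi(q^*) - \nabla \psi(\qh)}{u - q^*}.
$$

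Next I would establish the three-point identity by directly expanding each of the three divergences through the definition $\Dp(a,b) = \psi(a) - \psi(b) - \inner{\nabla\psi(b)}{a - b}$. A short computation shows that all the function values $\psi(u), \psi(q^*), \psi(\qh)$ cancel, leaving only the linear terms, so that
$$
\Dp(u, \qh) - \Dp(u, q^*) - \Dp(q^*, \qh) = \inner{\nabla \psi(q^*) - \nabla \psi(\qh)}{u - q^*}.
$$
Combining this identity with the previous display and dividing through by $\eta > 0$ then delivers the claimed bound $\inner{q^* - u}{\ell} \leq \frac{1}{\eta}\big(\Dp(u, \qh) - \Dp(u, q^*) - \Dp(q^*, \qh)\big)$.

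The step I expect to require the most care is the first-order optimality condition, precisely because $q^*$ may lie on the boundary of $\K$; there one cannot simply set $\nabla F(q^*) = 0$ but must invoke the variational inequality for constrained convex minimization, whose validity rests on $F$ being convex and $\K$ being convex (convexity of $\K$ being implicit in the mirror-descent setup, with compactness supplied in the statement). The three-point identity, by contrast, is a purely algebraic manipulation and presents no real difficulty.
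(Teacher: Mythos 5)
Your proof is correct. The paper does not actually prove this lemma---it imports it verbatim as Lemma 3.2 of \citet{SIAM'93:Bregman}---and your argument (first-order variational inequality for the constrained minimizer of $F(q)=\eta\inner{q}{\ell}+\Dp(q,\qh)$, combined with the three-point identity $\Dp(u,\qh)-\Dp(u,q^*)-\Dp(q^*,\qh)=\inner{\nabla\psi(q^*)-\nabla\psi(\qh)}{u-q^*}$) is precisely the standard proof given in that reference. Your caveat is also well placed: the lemma as stated only lists $\K$ as compact, but the variational inequality step requires $\K$ convex (and $\psi$ differentiable, which is implicit in the paper's definition of $\Dp$); both hold in every application in the paper, where $\K$ is a (clipped) occupancy-measure polytope or the simplex.
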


\begin{myLemma}[Lemma 5 and Proposition 7 of~\citet{COLT'12:variation-Yang}]
    \label{lem:optimistic-omd}
    Let $q_t = \argmin_{q \in \K} {\eta}\inner{q}{m_t} + D_{\psi}(q, \qh_t)$ and $\qh_{t+1} = \argmin_{q \in \K} {\eta}\inner{q}{\ell_t} + D_{\psi}(q, \qh_t)$ for some compact convex set $\K \subseteq \R^d$, convex function $\psi$, arbitrary points $\ell_t, m_t\in \R^d$, and a point $\qh_1 \in \K$. Then, for any $u \in \K$,
    \begin{equation*}
        \inner{q_t - u}{\ell_t}\leq \inner{q_t - \qh_{t+1}}{\ell_t - m_t} + \frac{1}{\eta}(D_{\psi}(u, \qh_t)- D_{\psi}(u, \qh_{t+1}) - D_{\psi}(\qh_{t+1}, q_t)-D_{\psi}(q_t, \qh_t)),
    \end{equation*}
    and when $\psi$ is $\lambda$-strongly convex function \wrt the norm $\|\cdot\|$, we have
    \begin{equation*}
        \norm{q_t - \qh_{t+1}} \leq \frac{1}{\lambda}\norm{\ell_t - m_t}_*.
    \end{equation*}
\end{myLemma}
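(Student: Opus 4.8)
The plan is to prove the two displayed claims separately. The first is the standard optimistic mirror-descent regret inequality, which I would obtain by invoking the basic proximal inequality (\pref{lem:omd}) twice with two carefully chosen comparators; the second is a stability estimate, which I would derive directly from the first-order optimality conditions of the two proximal updates together with the $\lambda$-strong convexity of $\psi$.

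For the regret inequality, I would first apply \pref{lem:omd} to the update $\qh_{t+1} = \argmin_{q \in \K}\eta\inner{q}{\ell_t} + \Dp(q, \qh_t)$ with loss $\ell_t$, anchor $\qh_t$, and an arbitrary comparator $u \in \K$, yielding $\inner{\qh_{t+1} - u}{\ell_t} \leq \frac{1}{\eta}\big(\Dp(u, \qh_t) - \Dp(u, \qh_{t+1}) - \Dp(\qh_{t+1}, \qh_t)\big)$. Then I would apply the same lemma to the update $q_t = \argmin_{q \in \K}\eta\inner{q}{m_t} + \Dp(q, \qh_t)$ with loss $m_t$ and anchor $\qh_t$, but crucially taking the comparator to be $\qh_{t+1}$ rather than $u$, giving $\inner{q_t - \qh_{t+1}}{m_t} \leq \frac{1}{\eta}\big(\Dp(\qh_{t+1}, \qh_t) - \Dp(\qh_{t+1}, q_t) - \Dp(q_t, \qh_t)\big)$. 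The final step is the algebraic decomposition $\inner{q_t - u}{\ell_t} = \inner{q_t - \qh_{t+1}}{\ell_t - m_t} + \inner{q_t - \qh_{t+1}}{m_t} + \inner{\qh_{t+1} - u}{\ell_t}$; substituting the two bounds into the last two inner products and noting that the two $\Dp(\qh_{t+1}, \qh_t)$ terms carry opposite signs and cancel leaves exactly the claimed inequality.

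For the stability bound, I would write the variational inequality for each proximal minimizer over the convex set $\K$. Optimality of $q_t$ tested against $\qh_{t+1}$ gives $\inner{\eta m_t + \nabla\psi(q_t) - \nabla\psi(\qh_t)}{\qh_{t+1} - q_t} \geq 0$, and optimality of $\qh_{t+1}$ tested against $q_t$ gives $\inner{\eta\ell_t + \nabla\psi(\qh_{t+1}) - \nabla\psi(\qh_t)}{q_t - \qh_{t+1}} \geq 0$. Adding the two after reorienting both along the common direction $q_t - \qh_{t+1}$ cancels the $\nabla\psi(\qh_t)$ terms and yields $\inner{\nabla\psi(q_t) - \nabla\psi(\qh_{t+1})}{q_t - \qh_{t+1}} \leq \eta\inner{\ell_t - m_t}{q_t - \qh_{t+1}}$. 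I would then lower-bound the left-hand side by $\lambda\norm{q_t - \qh_{t+1}}^2$ using strong monotonicity of $\nabla\psi$, upper-bound the right-hand side by $\eta\norm{\ell_t - m_t}_*\norm{q_t - \qh_{t+1}}$ via H\"older's inequality, and divide through by $\norm{q_t - \qh_{t+1}}$ (the zero case being trivial) to conclude.

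The derivation is essentially mechanical, so the only real care is bookkeeping. The critical choice, and the step most easily gotten wrong, is selecting the comparator $\qh_{t+1}$ (not $u$) in the second application of \pref{lem:omd}: this is exactly what makes the intermediate divergences $\Dp(\qh_{t+1}, \qh_t)$ cancel while preserving the negative terms $-\Dp(\qh_{t+1}, q_t)$ and $-\Dp(q_t, \qh_t)$ that are needed downstream. One normalization point to flag is the step size: with the updates written as above, the argument naturally produces $\norm{q_t - \qh_{t+1}} \leq \frac{\eta}{\lambda}\norm{\ell_t - m_t}_*$, and the stated $\frac{1}{\lambda}$ form corresponds to absorbing $\eta$ into the regularizer (equivalently $\eta = 1$, as in the cited source).
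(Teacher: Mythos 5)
Your proof is correct. Note that the paper itself offers no proof of this lemma --- it is imported verbatim from \citet{COLT'12:variation-Yang} and used as a black box --- so there is no in-paper argument to compare against; your derivation (apply \pref{lem:omd} once to the $\qh_{t+1}$-update with comparator $u$, once to the $q_t$-update with comparator $\qh_{t+1}$, and combine via the decomposition $\inner{q_t-u}{\ell_t} = \inner{q_t-\qh_{t+1}}{\ell_t-m_t} + \inner{q_t-\qh_{t+1}}{m_t} + \inner{\qh_{t+1}-u}{\ell_t}$, with the two $\Dp(\qh_{t+1},\qh_t)$ terms cancelling) is exactly the standard one, and the stability half via the two variational inequalities, strong monotonicity of $\nabla\psi$, and H\"older's inequality is likewise sound.

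Your normalization flag deserves emphasis, because it identifies a real (if minor) defect in the lemma as printed. With $\eta$ explicit in the updates, the correct stability bound is $\norm{q_t-\qh_{t+1}} \leq \frac{\eta}{\lambda}\norm{\ell_t-m_t}_*$: for instance, $\psi = \frac{1}{2}\norm{\cdot}_2^2$ with unconstrained updates gives exactly $\norm{q_t-\qh_{t+1}} = \eta\norm{\ell_t-m_t}_2$, so the printed $\frac{1}{\lambda}$ form fails for $\eta>1$ and is loose otherwise. Moreover, the paper's own applications rely on your $\eta$-dependent version rather than the printed one: the proof of \pref{thm:loop-free-adaptivity} bounds the optimistic term by $\eta H\sum_{k=1}^K\norm{\ell_k-m_k}_\infty^2$ (using $1/H$-strong convexity of the entropy), its meta-regret step in \pref{eq:meta-loop-free-1} uses the analogous $\varepsilon$-scaled estimate, and \pref{eq:stability} in the proof of \pref{thm:total-regret-general} invokes $\norm{q_{t,i}-q_{t-1,i}}_1 \leq \eta_i\norm{\ell_t}_\infty$. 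So the form you derived is the one the paper actually needs, and your proof supplies it.
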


\begin{myLemma}[Lemma 1 of~\citet{COLT'21:impossible-tuning}]
    \label{lem:impossible-tuning}
    Define $\psi(q) = \sum_{i=1}^d \frac{1}{\eta_i} q_i \log{q_i}$, where $d$ is the dimension of $q$. Let $a_t \in \R^d$ with $a_{t,i} = 32\eta_i(\ell_{t,i}-m_{t,i})^2$, $q_t = \argmin_{q \in \K} \inner{q}{m_t} + D_{\psi}(q, \qh_t)$ and $\qh_{t+1} = \argmin_{q \in \K} \inner{q}{\ell_t+a_t} + D_{\psi}(q, \qh_t)$ for some compact convex set $\K \subseteq \R^d$, arbitrary points $\ell_t, m_t\in \R^d$, and a point $\qh_t \in \K$. Suppose $32 \eta_i \abs{\ell_{t,i}-m_{t,i}}\leq 1$ holds for all $i\in[d]$. Then, for any $u \in \K$,
    \begin{equation*}
        \inner{q_t - u}{\ell_t}\leq \Dp(u, \qh_t) - \Dp(u, \qh_{t+1}) + 32\sumd \eta_i u_i (\ell_{t,i}-m_{t,i})^2 - 16\sumd \eta_i q_{t,i}(\ell_{t,i}-m_{t,i})^2.
    \end{equation*}
\end{myLemma}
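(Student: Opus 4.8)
The plan is to run the standard optimistic online mirror descent analysis, treating the \emph{corrected} vector $\ell_t+a_t$ as the loss fed to the update step and $m_t$ as the optimism, and then to exploit the specific quadratic form of $a_t$ together with the magnitude condition $32\eta_i\abs{\ell_{t,i}-m_{t,i}}\le 1$ to extract the two self-bounding terms. First I would invoke the two-step optimistic inequality (the same mechanism as \pref{lem:optimistic-omd}, applied with unit learning rate since the per-coordinate scales $\eta_i$ are already folded into $\psi$, and with the loss driving the update taken to be $\ell_t+a_t$). This yields, for every $u\in\K$,
\begin{equation*}
\inner{q_t-u}{\ell_t+a_t}\le \inner{q_t-\qh_{t+1}}{(\ell_t+a_t)-m_t}+\Dp(u,\qh_t)-\Dp(u,\qh_{t+1})-\Dp(\qh_{t+1},q_t)-\Dp(q_t,\qh_t).
\end{equation*}
Since $\Dp(q_t,\qh_t)\ge 0$, it can be discarded immediately.

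Next I would isolate the target quantity by writing $\inner{q_t-u}{\ell_t}=\inner{q_t-u}{\ell_t+a_t}-\inner{q_t-u}{a_t}$ and substituting $a_{t,i}=32\eta_i(\ell_{t,i}-m_{t,i})^2$. The term $-\inner{q_t-u}{a_t}$ produces \emph{exactly} the desired comparator contribution $+32\sumd\eta_i u_i(\ell_{t,i}-m_{t,i})^2$ together with a \emph{negative reserve} $-32\sumd\eta_i q_{t,i}(\ell_{t,i}-m_{t,i})^2$. Hence it remains only to show that the stability term $\inner{q_t-\qh_{t+1}}{(\ell_t+a_t)-m_t}-\Dp(\qh_{t+1},q_t)$ consumes at most half of this reserve.

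For that step I would use the local (rather than global) strong convexity of the weighted negative entropy: $\nabla^2\psi(q)=\mathrm{diag}(1/(\eta_i q_i))$, so the dual local norm at $q_t$ is $\norm{z}_{*,q_t}^2=\sumd\eta_i q_{t,i}z_i^2$. The generalized Bregman-proximal/Fenchel--Young inequality then bounds the stability term by a constant times $\sumd \eta_i q_{t,i}\big((\ell_{t,i}-m_{t,i})+a_{t,i}\big)^2$. Writing $\delta_i=\ell_{t,i}-m_{t,i}$, the factor $(\delta_i+32\eta_i\delta_i^2)^2=\delta_i^2(1+32\eta_i\delta_i)^2$ is at most $4\delta_i^2$ precisely because $32\eta_i\abs{\delta_i}\le 1$ forces $1+32\eta_i\delta_i\in[0,2]$. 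Thus the stability term is bounded by a modest multiple of $\sumd\eta_i q_{t,i}(\ell_{t,i}-m_{t,i})^2$, well below the reserve. Combining, the net coefficient on $\sumd\eta_i q_{t,i}(\ell_{t,i}-m_{t,i})^2$ is at most $-16$, and since that sum is nonnegative, we may weaken it to exactly $-16\sumd\eta_i q_{t,i}(\ell_{t,i}-m_{t,i})^2$; dropping the remaining nonnegative divergences delivers the claimed inequality.

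The main obstacle is this stability bound for the entropy. Because the Hessian of $\psi$ is not constant, the Fenchel--Young step must be carried out with the local norm evaluated at the correct iterate and with a quantitative guarantee that the prox step does not move $q_t$ too far, i.e.\ that the ratios $\qh_{t+1,i}/q_{t,i}$ stay bounded so the second-order (local-norm) expansion of the weighted KL divergence is valid. This is exactly the role of the hypothesis $32\eta_i\abs{\ell_{t,i}-m_{t,i}}\le1$, which simultaneously keeps the multiplicative factor $(1+32\eta_i\delta_i)^2$ near one and licenses the quadratic approximation; the generous constants $32$ and $16$ are chosen to leave ample slack for this entropy-specific argument.
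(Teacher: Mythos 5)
Your skeleton coincides with the paper's proof at every structural step: both apply the optimistic OMD inequality (the mechanism of Lemma~\ref{lem:optimistic-omd}) with the corrected loss $\ell_t+a_t$ driving the update and $m_t$ as optimism, drop $\Dp(q_t,\qh_t)$, move $\inner{q_t-u}{a_t}$ across to produce the comparator term $+32\sumd \eta_i u_i(\ell_{t,i}-m_{t,i})^2$ and the negative reserve $-32\sumd \eta_i q_{t,i}(\ell_{t,i}-m_{t,i})^2$, and then reduce the lemma to showing that the stability term $\inner{q_t-\qh_{t+1}}{\ell_t-m_t+a_t}-\Dp(\qh_{t+1},q_t)$ consumes only a small multiple of $\sumd \eta_i q_{t,i}(\ell_{t,i}-m_{t,i})^2$. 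Your constants accounting ($(1+32\eta_i\delta_i)^2\le 4$ under the hypothesis, hence ample slack against the reserve) is also exactly right.

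The one place where your write-up is a promissory note rather than a proof is the stability bound itself. The ``generalized Bregman-proximal/Fenchel--Young inequality with the local norm at $q_t$'' is not an off-the-shelf fact for the weighted entropy, and, as you yourself flag, it requires a quantitative bound on the ratios $\qh_{t+1,i}/q_{t,i}$ for the \emph{constrained} iterate $\qh_{t+1}$ --- which you assert the hypothesis ``licenses'' but never establish. The paper closes this with a short explicit computation that also sidesteps the ratio-control issue entirely: writing $z=\ell_t-m_t+a_t$, it upper bounds the stability term by its supremum over all $q\in\R_{>0}^d$ (legitimate since this can only increase the quantity), notes the maximizer has closed form $q_i^*=q_{t,i}e^{-\eta_i z_i}$, so the supremum equals $\Dp(q_t,q^*)=\sumd \frac{q_{t,i}}{\eta_i}\bigl(\eta_i z_i-1+e^{-\eta_i z_i}\bigr)$, and then applies $e^{-x}-1+x\le x^2$ for $x\ge-1$, valid because $\eta_i\abs{z_i}\le \eta_i\abs{\ell_{t,i}-m_{t,i}}+32\eta_i^2(\ell_{t,i}-m_{t,i})^2\le 1/16$ under the hypothesis. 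This yields the stability bound with constant exactly $1$ on $\sumd\eta_i q_{t,i}z_i^2$, which with your factor-$4$ step gives $4\sumd\eta_i q_{t,i}(\ell_{t,i}-m_{t,i})^2$ and a net coefficient of $4-32=-28\le-16$ on the reserve, matching the lemma with room to spare. So: same route and correct outline, but the crux step needs this explicit exponential-maximizer computation (or an equivalently quantitative multiplicative-stability lemma for the entropy prox step) before the argument is complete.
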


\begin{proof}
    This lemma is originally proven by~\citet{COLT'21:impossible-tuning}. For self-containedness, we present their proof and adapt to our notations. By Lemma~\ref{lem:optimistic-omd}, we have
    \begin{equation*}
        \inner{q_t - u}{\ell_t+a_t}\leq \Dp(u, \qh_t) - \Dp(u, \qh_{t+1}) + \inner{q_t-\qh_{t+1}}{\ell_t-m_t+a_t} - \Dp(\qh_{t+1},q_t).
    \end{equation*}
    For the last two terms, define $q^* = \argmax_{q \in \R_{>0}^d}\inner{q_t-q}{\ell_t-m_t+a_t} + \Dp(q, q_t)$, by the optimality of $q^*$, we have: $\ell_t - m_t +a_t = \nabla \psi(q_t) - \nabla \psi(q^*)$ and thus $q_i^* = q_{t,i} e^{-\eta_i (\ell_{t,i}-m_{t,i}+a_{t,i})}$. Therefore, we have
    \begin{align*}
             & \inner{q_t-\qh_{t+1}}{\ell_t-m_t+a_t} - \Dp(\qh_{t+1}, q_t)                                                                     \\
        \leq & \ \inner{q_t-q^*}{\ell_t-m_t+a_t} - \Dp(q^*, q_t)                                                                               \\
        =    & \ \inner{q_t-q^*}{\nabla \psi(q_t) - \nabla \psi(q^*)} - \Dp(q^*, q_t)                                                          \\
        =    & \ \Dp(q_t, q^*) = \sum_{i=1}^d \frac{1}{\eta_i}\left(q_{t,i}\log{\frac{q_{t,i}}{q_i^*}} - q_{t,i}+q_i^*\right)                  \\
        =    & \ \sum_{i=1}^d \frac{q_{t,i}}{\eta_i} \left(\eta_i(\ell_{t,i}-m_{t,i}+a_{t,i})-1+e^{-\eta_i(\ell_{t,i}-m_{t,i}+a_{t,i})}\right) \\
        \leq & \ \sum_{i=1}^d \eta_i q_{t,i}(\ell_{t,i}-m_{t,i}+a_{t,i})^2,
    \end{align*}
    where the last inequality holds due to the fact $e^{-x}-1+x \leq x^2$ for $x \geq -1$ and the condition that $\eta_i \abs{\ell_{t,i}-m_{t,i}}\leq \frac{1}{32}$ such that $\eta_{i} \abs{\ell_{t,i}-m_{t,i}+a_{t,i}}\leq \eta_i \abs{\ell_{t,i}-m_{t,i}}+32\eta_i^2(\ell_{t,i}-m_{t,i})^2 \leq \frac{1}{16}$. Using the definition of $a_t$ and the condition $\eta_i \abs{\ell_{t,i}-m_{t,i}}\leq \frac{1}{32}$, we have
    \begin{align*}
        \inner{q_t-\qh_{t+1}}{\ell_t-m_t+a_t} - \Dp(\qh_{t+1}, q_t) & \leq \sum_{i=1}^d \eta_i q_{t,i}(\ell_{t,i}-m_{t,i}+32\eta_i(\ell_{t,i}-m_{t,i})^2)^2 \\
        & \leq 4 \sum_{i=1}^d \eta_i q_{t,i}(\ell_{t,i}-m_{t,i})^2.
    \end{align*}
    To sum up, we have
    \begin{equation*}
        \inner{q_t - u}{\ell_t+a_t}\leq \Dp(u, \qh_t) - \Dp(u, \qh_{t+1}) + 4 \sum_{i=1}^d \eta_i q_{t,i}(\ell_{t,i}-m_{t,i})^2.
    \end{equation*}
    Finally, moving $\inner{q_t - u}{a_t}$ to the right-hand side of the inequality and using the definition of $a_t$ finishes the proof.
\end{proof}
\section{Proofs for Section \ref{sec:loop-free-SSP} (Episodic Loop-free SSP)}
\label{sec:appendix-loop-free-SSP}
In this section, we provide the proofs for~\pref{sec:loop-free-SSP}. First, we introduce the relationship between the path length of policies and the path length of occupancy measures, and then provide proofs of the minimax dynamic regret of \mbox{DO-REPS} algorithm in Section~\ref{sec:non-stationarity}. Next we present the proofs of the dynamic regret lower bound and finally give the proofs of the enhanced Optimistic \mbox{DO-REPS} algorithm in~\pref{sec:loop-free-adaptive}.

\subsection{Path Length of Policies and Occupancy Measures}
\label{sec:appendix-loop-free-SSP-occupancy-measure}
This part introduces the relationship between the path length of policies and path length of occupancy measures.
\begin{myLemma}
    \label{lem:occupancy-to-policy-loop-free}
    For any occupancy measure sequence $q^{\pi_1}, \ldots, q^{\pi_K}$ induced by the policy sequence $\pi_1, \ldots, \pi_K$, it holds that
    \begin{equation*}
        \sum_{k=2}^K \norm{q^{\pi_k}-q^{\pi_{k-1}}}_1 \leq H \sum_{k=2}^K \suml \norm{\pi_{k,l} - \pi_{k-1,l}}_{1, \infty}.
    \end{equation*}
\end{myLemma}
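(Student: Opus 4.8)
The plan is to reduce the claim to a per-episode inequality, namely $\norm{q^{\pi_k} - q^{\pi_{k-1}}}_1 \le H \suml \norm{\pi_{k,l} - \pi_{k-1,l}}_{1,\infty}$, and then sum over $k = 2, \ldots, K$. So fix an episode and abbreviate $\pi = \pi_k$, $\pi' = \pi_{k-1}$, with $\pi_l, \pi'_l$ denoting their layer-$l$ policies. The first step is to exploit the loop-free layered structure. Writing $q^\pi(x) \triangleq \sum_a q^\pi(x,a)$ for the state-visitation marginal, the flow constraint \textbf{(C2)} shows that $q^\pi(x)$ equals the probability of reaching $x \in X_l$ under $\pi$, and the induced-policy relation~\pref{eq:induce-policy} gives the factorization $q^\pi(x,a) = q^\pi(x)\,\pi_l(a \mid x)$ for every $x \in X_l$. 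I would then organize the $\ell_1$ distance layer by layer via $\Delta_l \triangleq \sum_{x \in X_l}\sum_a \abs{q^\pi(x,a) - q^{\pi'}(x,a)}$, so that $\norm{q^\pi - q^{\pi'}}_1 = \suml \Delta_l$.

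The core computation is a single-layer decomposition. Using the factorization and adding and subtracting $q^\pi(x)\,\pi'_l(a\mid x)$, the triangle inequality splits $\Delta_l$ into a policy-difference term and a marginal-difference term. The first is controlled by $\sum_{x \in X_l} q^\pi(x) \norm{\pi_l(\cdot\mid x) - \pi'_l(\cdot\mid x)}_1 \le \norm{\pi_l - \pi'_l}_{1,\infty}$, where the key point is that $\sum_{x \in X_l} q^\pi(x) = 1$ by constraint \textbf{(C1)}; the second collapses to the layer-$l$ restriction of $\norm{q^\pi(\cdot) - q^{\pi'}(\cdot)}_1$ because $\sum_a \pi'_l(a\mid x) = 1$. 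Writing $d_l \triangleq \sum_{x\in X_l}\abs{q^\pi(x) - q^{\pi'}(x)}$ and $p_l \triangleq \norm{\pi_l - \pi'_l}_{1,\infty}$, this yields $\Delta_l \le p_l + d_l$.

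Next I would establish that the marginal discrepancy propagates without amplification across layers. Since $q^\pi(x') = \sum_{x\in X_{l}, a} P(x'\mid x,a)\,q^\pi(x,a)$ for $x' \in X_{l+1}$, another triangle inequality together with the row-stochasticity $\sum_{x'\in X_{l+1}} P(x'\mid x,a) = 1$ (valid for $l < H-1$ in the loop-free model) gives $d_{l+1} \le \Delta_l \le p_l + d_l$. Because both policies start deterministically at $x_0$, we have $d_0 = 0$, and unrolling the recursion gives $d_l \le \sum_{j=0}^{l-1} p_j$ and hence $\Delta_l \le \sum_{j=0}^{l} p_j \le \sum_{j=0}^{H-1} p_j$. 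Summing over the $H$ layers then produces the claimed factor, $\norm{q^\pi - q^{\pi'}}_1 = \suml \Delta_l \le H \sum_{l=0}^{H-1} p_l = H\suml \norm{\pi_{k,l} - \pi_{k-1,l}}_{1,\infty}$, and summing over episodes finishes the argument.

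I expect the main obstacle to be the single-layer decomposition, specifically keeping the two error sources cleanly separated: the error from the policies differing at the current layer must be isolated from the error inherited through a mismatched state marginal at that layer. The layered structure is exactly what makes this tractable, since it guarantees that the per-layer state marginals are genuine probability distributions (summing to one, via \textbf{(C1)}) and that transitions out of layer $l$ land entirely in layer $l+1$; these two normalization facts are precisely what turn the two terms into $\norm{\pi_l - \pi'_l}_{1,\infty}$ and $d_l$ with no spurious constants. The resulting recursion $d_{l+1} \le p_l + d_l$ is then routine to solve, and the worst-case accumulation $\Delta_l \le \sum_{j\le l} p_j$ across all $H$ layers is what accounts for the multiplicative $H$.
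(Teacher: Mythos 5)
Your proof is correct and follows essentially the same route as the paper's: factor $q^{\pi}(x,a)$ into the layer marginal times the policy, split each layer's $\ell_1$ difference by the triangle inequality into a policy-difference term and a marginal-difference term (using \textbf{(C1)} for normalization), propagate the marginal discrepancy across layers via the recursion $d_{l+1} \le p_l + d_l$ with $d_0 = 0$, unroll, and sum to get the factor $H$. The only cosmetic differences are which mixed term you insert in the triangle inequality and that you obtain the cross-layer recursion directly from the flow constraint \textbf{(C2)} (via $d_{l+1} \le \Delta_l$), whereas the paper routes the same step through two auxiliary lemmas bounding $\norm{dP^{\pi} - dP^{\pi'}}_1$ and $\norm{dP^{\pi} - d'P^{\pi}}_1$.
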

\begin{proof}
    Let $d^{\pi_k}_l(x) \triangleq \sum_{a} q^{\pi_k}(x,a), \forall x \in X_l, k\in[K]$, we have
    \begin{align}
             & \sum_{x,a} \abs{q^{\pi_k}(x,a)-q^{\pi_{k-1}}(x,a)} \nonumber                                                                                                                           \\
        =    & \suml \sum_{x\in X_l}\sum_{a}\abs{q^{\pi_k}(x,a) - q^{\pi_{k-1}}(x,a)} \nonumber                                                                                                       \\
        =    & \suml \sum_{x\in X_l}\sum_{a}\abs{d^{\pi_k}_l (x)\pi_k(a|x) - d^{\pi_{k-1}}_l (x)\pi_{k-1}(a|x) } \nonumber                                                                            \\
        \leq & \suml \sum_{x\in X_l}\sum_{a}\abs{d^{\pi_k}_l (x)\pi_k(a|x) - d^{\pi_{k-1}}_l (x)\pi_{k}(a|x) } + \abs{d^{\pi_{k-1}}_l (x)\pi_{k}(a|x) - d^{\pi_{k-1}}_l (x)\pi_{k-1}(a|x) } \nonumber \\
        =    & \suml \sum_{x\in X_l}\abs{d^{\pi_k}_l (x)- d^{\pi_{k-1}}_l (x)}\sum_{a}\pi_k(a|x) + \suml \sum_{x\in X_l} d^{\pi_{k-1}}_l (x)\sum_{a}\abs{\pi_{k}(a|x)-\pi_{k-1}(a|x) }\nonumber       \\
        \leq & \suml \norm{d^{\pi_k}_l- d^{\pi_{k-1}}_l}_1 + \suml \norm{\pi_{k,l} - \pi_{k-1,l}}_{1,\infty}\label{eq:loop-free-occ-1},
    \end{align}
    where the first inequality due to the triangle inequality.
    Next, we bound the term $\norm{d^{\pi_k}_l-d^{\pi_{k-1}}_l}_1$. Since $X_0 = \{x_0\}$, we have $\norm{d^{\pi_k}_0-d^{\pi_{k-1}}_0}_1 = 0$. For $l \geq 1$, we have
    \begin{align*}
        \norm{d^{\pi_k}_l - d^{\pi_{k-1}}_l}_1 & = \norm{d^{\pi_k}_{l-1}P^{\pi_k}-d^{\pi_{k-1}}_{l-1} P^{\pi_{k-1}}}_1                                                                          \\
                                               & \leq \norm{d^{\pi_k}_{l-1}P^{\pi_k}-d^{\pi_k}_{l-1}P^{\pi_{k-1}}}_1 + \norm{d^{\pi_k}_{l-1}P^{\pi_{k-1}}-d^{\pi_{k-1}}_{l-1}P^{\pi_{k-1}}}_{1} \\
                                               & \leq \norm{\pi_{k,l-1}- \pi_{k-1,l-1}}_{1,\infty} + \norm{d^{\pi_k}_{l-1} - d^{\pi_{k-1}}_{l-1}}_1,
    \end{align*}
    where the last inequality holds due to Lemma \ref{lem:policy-difference} and Lemma \ref{lem:distribution-difference}. Summing the above inequality from $1$ to $l$, we have
    \begin{equation}
        \label{eq:loop-free-occ-2}
        \norm{d^{\pi_k}_l - d^{\pi_{k-1}}_l}_1 \leq \sum_{i=0}^{l-1}\norm{\pi_{k,i} - \pi_{k-1,i}}_{1, \infty}.
    \end{equation}
    Combining~\eqref{eq:loop-free-occ-1} and~\eqref{eq:loop-free-occ-2}, we obtain
    \begin{align*}
        \sum_{x,a} \abs{q^{\pi_k}(x,a)-q^{\pi_{k-1}}(x,a)} & \leq \sum_{l=0}^{H-1} \sum_{i=0}^{l-1}\norm{\pi_{k,i} - \pi_{k-1,i}}_{1, \infty} + \suml \norm{\pi_{k,l} - \pi_{k-1,l}}_{1, \infty} \\
                                                           & = \sum_{l=0}^{H-1} \sum_{i=0}^{l}\norm{\pi_{k,i} - \pi_{k-1,i}}_{1, \infty}                                                         \\
                                                           & \leq H \suml \norm{\pi_{k,l} - \pi_{k-1,l}}_{1, \infty}.
    \end{align*}
    We complete the proof by summing the inequality over all iterations.
\end{proof}

\subsection{Proof of Lemma \ref{lem:loop-free-ssp-1}}
\label{sec:loop-free-ssp-proof-base}
\begin{proof}
    Denote by $q_{k+1}^\prime = \argmin_{q\in \R^{|X||A|}}\eta \inner{q}{\ell_k}+\Dp(q, q_k)$, or equivalently, $q_{k+1}^\prime(x,a) =q_{k}(x,a)\exp(-\eta \ell_k(x,a))$. By standard analysis of online mirror descent, we have
    \begin{equation}
        \label{eq:omd-analysis}
        \begin{aligned}
            \sumk\inner{q_k - \qkc}{\ell_k} & = \sumk \inner{q_k-q_{k+1}^\prime}{\ell_k} +  \inner{q_{k+1}^\prime-\qkc}{\ell_k}                                           \\
                                            & \leq \sumk \inner{q_k-q_{k+1}^\prime}{\ell_k} + \frac{1}{\eta}\sumk \left(\Dp(\qkc, q_k) - \Dp(\qkc, q_{k+1}^\prime)\right) \\
                                            & \leq \sumk \inner{q_k-q_{k+1}^\prime}{\ell_k} +  \frac{1}{\eta}\sumk \left(\Dp(\qkc, q_k) - \Dp(\qkc, q_{k+1})\right),
        \end{aligned}
    \end{equation}
    where the first inequality holds due to Lemma~\ref{lem:omd} and the last inequality holds due to Pythagoras theorem. For the first term, applying the inequality $1 - e^{-x} \leq x$, we obtain
    \begin{equation}
        \label{eq:loop-free-base-1}
        \sumk \inner{q_k-q_{k+1}^\prime}{\ell_k} \leq \eta \sumk \sum_{x,a} q_k(x,a) \ell_k^2(x,a) \leq \eta \sumk \sum_{x,a} q_k(x,a)\leq \eta HK = \eta T.
    \end{equation}
    For the last term, we have
    \begin{equation}
        \label{eq:bregman-difference-1}
        \begin{aligned}
             & \quad \sumk\left(D_{\psi}(\qkc, q_k)- D_{\psi}(\qkc, q_{k+1})\right)                                                                                                        \\ & = \Dp(q^{\pi_1^\C}, q_1) + \sum_{k=2}^K \left(\Dp(\qkc, q_k) - \Dp(q^{\pi_{k-1}^\C}, q_k) \right)                                                                               \\
             & = \Dp(q^{\pi_1^\C}, q_1) + \sum_{k=2}^K\sum_{x,a}\left(\qkc(x,a)\log{\frac{\qkc(x,a)}{q_k(x,a)}} - q^{\pi_{k-1}^\C}(x,a)\log{\frac{q^{\pi_{k-1}^\C}(x,a)}{q_k(x,a)}}\right) \\
             & = \Dp(q^{\pi_1^\C}, q_1) + \sum_{k=2}^K\sum_{x,a}\left(\qkc(x,a)-q^{\pi_{k-1}^\C}(x,a)\right)\log{\frac{1}{q_k(x,a)}} + \psi(q^{\pi_K^\C})-\psi(q^{\pi_1^\C})               \\
             & \leq  \sum_{k=2}^K \norm{\qkc - q^{\pi_{k-1}^\C}}_1 \log \frac{1}{\alpha}  + \Dp(q^{\pi_1^\C}, q_1) + \psi(q^{\pi_K^\C})-\psi(q^{\pi_1^\C}),
        \end{aligned}
    \end{equation}
    where the last inequality holds due to $q_k(x,a) \geq \alpha$, since $q_k \in \Delta(M, \alpha)$ for all $k$ and $x, a$.
    It remains to bound the last two terms. Since $q_1$ minimize $\psi$ over $\Delta(M,\alpha)$, we have $\inner{\nabla \psi(q_1)}{q^{\pi_1^\C}-q_1} \geq 0$, and thus
    \begin{equation}
        \label{eq:bregman-difference-2}
        \Dp(q^{\pi_1^\C}, q_1) + \psi(q^{\pi_K^\C})-\psi(q^{\pi_1^\C}) \leq \psi(q^{\pi_K^\C})-\psi(q_1) \leq  \sum_{x,a} q_1(x,a)\log \frac{1}{q_1(x,a)} \leq H \log{\frac{|X||A|}{H}}.
    \end{equation}
    Combining\eqref{eq:loop-free-base-1},~\eqref{eq:bregman-difference-1} and~\eqref{eq:bregman-difference-2}, we obtain
    \begin{equation*}
        \sumk\inner{q_k - \qkc}{\ell_k} \leq \eta T + \frac{1}{\eta}\left( H\log{\frac{|X||A|}{H}} + \Pb_T \log{\frac{1}{\alpha}} \right),
    \end{equation*}
    where $\Pb_T =\sum_{k=2}^K \norm{\qkc - q^{\pi_{k-1}^\C}}_{1}$. This completes the proof.
\end{proof}

\subsection{Proof of Theorem~\ref{thm:loop-free-non-stationarity}}
\label{sec:loop-free-ssp-proof-overall}
\begin{proof}
    Without loss of generality, we assume that all states are reachable with positive probability under the uniform policy $\pi^u(a|x) = 1/|A|, \forall x \in \X, a\in \A$ (otherwise remove the unreachable states since they are unreachable by any policy). Assume $T$ is large enough such that the occupancy measure of $\pi^u$ satisfies $q^{\pi^u} \in \Delta(M, \frac{1}{T})$, then define $u_k = (1-\frac{1}{T})\qkc + \frac{1}{T}q^{\pi^u} \in \Delta(M,\frac{1}{T^2})$, we have
    \begin{align}
        \sumk\inner{q_k - \qkc}{\ell_k} & = \sumk \inner{q_k - u_k}{\ell_k} + \frac{1}{T}\sumk \inner{q^{\pi^u} - \qkc}{\ell_k}\nonumber                                                                    \\
                                        & \leq \sumk\inner{q_k - u_k}{\ell_k}+2\nonumber                                                                                                                    \\
                                        & \leq \underbrace{\sumk\inner{q_k - q_{k,i} }{\ell_k}}_{\meta} +\underbrace{\sumk\inner{q_{k,i}-u_k}{\ell_k}}_{\base}+2,\label{eq:decomposition-loop-free-dynamic}
    \end{align}
    where the first inequality follows from the definition that $u_k = (1-\frac{1}{T})\qkc + \frac{1}{T}q^{\pi^u}$ and the last inequality holds for any index $i$.

    \paragraph{Upper bound of base-regret.} Since $u_k \in \Delta(M, \frac{1}{T^2}), \forall k \in [K]$, from Lemma~\ref{lem:loop-free-ssp-1} we have
    \begin{align*}
        \small{\base} \leq \eta T + \frac{H\log{\frac{|X||A|}{H}} + 2\sum_{k=2}^K \norm{u_k-u_{k-1}}_1 \log{T}}{\eta}\leq\eta T + \frac{H\log{\frac{|X||A|}{H}} + 2\Pb_T \log{T}}{\eta},
    \end{align*}
    where the last inequality holds due to $\sum_{k=2}^K \norm{u_k-u_{k-1}}_1 \leq \sum_{k=2}^K \norm{\qkc-q^{\pi_{k-1}^\C}}_1 = \Pb_T$. It is clear that the optimal step size is $\eta^*  = \sqrt{(H\log{({|X||A|}/{H})} + 2\Pb_T \log{T})/T}$. From the definition of $\Pb_T=\sum_{k=2}^K \norm{\qkc-q^{\pi_{k-1}^\C}}_1$, we have $0 \leq \Pb_T \leq 2KH=2T$. Thus, the possible range of the optimal step size is
    \begin{align*}
        \eta_{\min} =  \sqrt{\frac{H\log(|X||A|/H)}{T}}, \mbox{ and } \eta_{\max} = \sqrt{\frac{H\log(|X||A|/H)}{T} + 4\log{T}}.
    \end{align*}
    By the construction of the candidate step size pool  $\H = \{\eta_i = 2^{i-1} \sqrt{K^{-1}\log(|X||A|/H)} \mid i\in [N] \}$, where $N=\ceil{\frac{1}{2} \log(1+\frac{4K\log{T}}{\log(|X||A|/H)})}+1$, we know that the step size therein is monotonically increasing with respect to the index, in particular
    \begin{align*}
        \eta_{1} = \sqrt{\frac{H\log(|X||A|/H)}{T}} = \eta_{\min}, \mbox{ and } \eta_{N} \geq \sqrt{\frac{H\log(|X||A|/H)}{T} + 4\log{T}} = \eta_{\max}.
    \end{align*}
    Thus, we ensure there exists a base-learner $i^*$ whose step size satisfies $\eta_{i^*} \leq \eta^* \leq \eta_{i^*+1} =2\eta_{i^*}$. Since the regret decomposition in~\eqref{eq:decomposition-loop-free-dynamic} holds for any $i \in [N]$, we choose the base-learner ${i^*}$ to analysis to obtain a sharp bound.
    \begin{equation}
        \label{eq:base-loop-free}
        \begin{aligned}
            \base & \leq \eta_{i^*} T + \frac{H\log(|X||A|/H) + 2\Pb_T \log{T}}{\eta_{i^*}} \\
                  & \leq \eta^* T + \frac{2(H\log(|X||A|/H) + 2\Pb_T \log{T})}{\eta^*}      \\
                  & = 3 \sqrt{T\left(H\log(|X||A|/H) + 2\Pb_T \log{T}\right)},
        \end{aligned}
    \end{equation}
    where the second inequality holds due to $\eta_{i^*} \leq \eta^* \leq \eta_{i^*+1} =2\eta_{i^*}$ and the last equality holds by substituting the optimal step size $\eta^* = \sqrt{\frac{H\log({|X||A|}/{H}) + 2\Pb_T \log{T}}{T}}$.

    \paragraph{Upper bound of meta-regret.} From the construction that $h_{k,i} = \inner{q_{k,i}}{\ell_t},\forall k\in[K], i\in[N]$, the meta-regret can be written in the following way:
    \begin{equation*}
        \meta = \sumk \inner{q_k - q_{k,i}}{\ell_k} = \sumk \inner{\sumi p_{k,i}q_{k,i}- q_{k,i}}{\ell_k} = \sumk \inner{p_k - e_i}{h_k}
    \end{equation*}
    It is known that the update $p_{k+1,i} \propto \exp(-\varepsilon \sum_{s=1}^k h_{s,i}), \forall i \in [N]$ is equal to the update $p_{k+1}= \argmin_{p \in \Delta_N} \varepsilon \inner{p}{h_k}+\Dp(p, p_k)$, where $\psi(p) = \sum_{i=1}^N p_i \log p_i$ is the standard negative entropy. This can be further reformulated solving the unconstrained optimization problem $p_{k+1}^\prime = \argmin_{p} \varepsilon\inner{p}{h_k}+\Dp(p, p_k)$ at first and then projecting $p_{k+1}^\prime$ to the simplex $\Delta_N$ as $p_{k+1} = \argmin_{p \in \Delta_N} \Dp(p, p_{k+1}^\prime)$. By standard analysis of OMD, we have
    \begin{align*}
        \sumk \inner{p_k - e_i}{h_k} & \leq \sumk \inner{p_k - p_{k+1}^\prime}{h_k} + \sumk \inner{p_{k+1}^\prime - e_i}{h_k}                                \\
                                     & \leq \sumk \inner{p_k - p_{k+1}^\prime}{h_k} + \frac{1}{\varepsilon} \sumk (\Dp(e_i, p_k) - \Dp(e_i, p_{k+1}^\prime)) \\
                                     & \leq \sumk \inner{p_k - p_{k+1}^\prime}{h_k} + \frac{1}{\varepsilon} \sumk (\Dp(e_i, p_k) - \Dp(e_i, p_{k+1}))        \\
                                     & \leq \sumk \inner{p_k - p_{k+1}^\prime}{h_k} + \frac{1}{\varepsilon}\Dp(e_i, p_1),
    \end{align*}
    where the second inequality holds due to Lemma~\ref{lem:omd} and the third inequality holds due to Pythagoras theorem. Using the fact that $1 - e^{-x} \leq x$ and the definition that $p_{1,i}=1/N, h_{k,i}\leq H, \forall k\in[K], i\in[N]$, we have
    \begin{equation*}
        \sumk \inner{p_k - p_{k+1}^\prime}{h_k} + \frac{1}{\varepsilon}\Dp(e_i, p_1) \leq \varepsilon \sumk \sumi p_{k,i} h_{k,i}^2 + \frac{\ln N}{\varepsilon} \leq \varepsilon HT + \frac{\ln N}{\varepsilon}.
    \end{equation*}
    Therefore, for any base-learner $i \in [N]$, we have
    \begin{equation*}
        \sumk \inner{q_k - q_{k,i}}{\ell_k} = \sumk \inner{p_k - e_i}{h_k} \leq \varepsilon HT + \frac{\log{N}}{\varepsilon}.
    \end{equation*}
    In particular, for the best base-learner $i^* \in [N]$, we have
    \begin{equation}
        \label{eq:meta-loop-free}
        \meta = \sumk \inner{q_k - q_{k,i^*}}{\ell_k} \leq \varepsilon HT + \frac{\log{N}}{\varepsilon} = \sqrt{HT\log{N}},
    \end{equation}
    where the last equality holds due to the setting $\varepsilon = \sqrt{(\log{N})/(HT)}$.

    \paragraph{Upper bound of overall dynamic regret.} Combining~\eqref{eq:decomposition-loop-free-dynamic},~\eqref{eq:base-loop-free} and~\eqref{eq:meta-loop-free}, we obtain
    \begin{align*}
        \sumk\inner{q_k - \qkc}{\ell_k} & \leq \base + \meta                                                                 \\
                                        & \leq  3\sqrt{T\left(H\log(|X||A|/H) + 2\Pb_T \log{T}\right)} + \sqrt{HT\log{N}} +2 \\
                                        & \leq \O\left(\sqrt{HT\left(\log(|X||A|/H) + P_T \log{T}\right)}\right),
    \end{align*}
    where the last equality is due to $\Pb_T \leq H P_T$ by Lemma~\ref{lem:occupancy-to-policy-loop-free}, $N=\ceil{\frac{1}{2} \log(1+\frac{4K\log{T}}{\log(|X||A|/H)})}+1$. This finishes the proof.
\end{proof}

\subsection{Proof of Theorem~\ref{Thm:lower-bound-loop-free}}
\label{sec-appendix:lower-bound-loop-free}
\begin{proof}
    The proof is similar to the proof of the minimax lower bound of dynamic regret for online convex optimization~\citep{NIPS'18:Zhang-Ader}. For any $\gamma \in [0,2T]$, we first construct a piecewise-stationary comparator sequence, whose path length is smaller than $\gamma$, then we split the whole time horizon into several pieces, where the comparator is fixed in each piece. By this construction, we can apply the existed minimax static regret lower bound of episodic loop-free SSP~\citep{NIPS'13:MDP-Neu} in each piece, and finally sum over all pieces to obtain the lower bound for the dynamic regret.

    Denote by $R_K(\Pi, \F, \gamma)$ the minimax dynamic regret, which is defined as
    \begin{equation*}
        R_K(\Pi, \F, \gamma) = \inf_{\pi_1 \in \Pi}\sup_{\ell_1 \in \F} \ldots \inf_{\pi_K \in \Pi}\sup_{\ell_K \in \F} \left(\sumk \inner{\qk}{\ell_k} - \min_{(\pi_1^\C, \ldots, \pi_K^\C) \in \U(\gamma)} \sumk \inner{\qkc}{\ell_k}\right)
    \end{equation*}
    where $\Pi$ denotes the set of all policies, $\F$ denotes the set of loss functions $\ell \in \R_{[0,1]}^{|X||A|}$, and $\U(\gamma) = \{(\pi_1^\C, \ldots, \pi_K^\C) \mid \forall k \in [K], \pi_k^\C \in \Pi, \mbox{ and } \Pb_T = \sum_{k=2}^K \norm{\qkc - q^{\pi_{k-1}^\C}}_1 \leq \gamma\}$ is the set of feasible policy sequences with the path length $\Pb_T$ of the occupancy measures less than $\gamma$.

    We first consider the case of $\gamma \leq 2H$. Then we can directly utilize the established lower bound of the static regret for learning in episodic loop-free SSP~\citep{NIPS'13:MDP-Neu} as a natural lower bound of dynamic regret,
    \begin{equation}
        \label{eq:lower-bound-loop-free-1}
        R_K(\Pi, \F, \gamma) \geq C_1 H\sqrt{K\log(|X||A|)},
    \end{equation}
    where $C_1=0.03$ is the constant appeared in the lower bound of static regret.

    We next deal with the case that $\gamma \geq 2H$. Without loss of generality, we assume $L= \ceil{\gamma/2H}$ divides $K$ and split the whole time horizon into $L$ pieces equally. Next, we construct a special policy sequence in $\U(\gamma)$ such that the policy sequence is fixed within each piece and only changes in the split point. Since the sequence changes at most $L-1\leq\gamma/2H$ times and the variation of the occupancy measure at each change point is at most $2H$, the path length $\Pb_T$ of the occupancy measures does not exceed $\gamma $. As a result, we have
    \begin{equation}
        \label{eq:lower-bound-loop-free-2}
        R_K(\Pi, \F, \gamma) \geq L C_1 H \sqrt{\frac{K}{L}\log(|X||A|)} \geq \frac{\sqrt{2}C_1}{2}\sqrt{HK\gamma \log(|X||A|)}.
    \end{equation}
    Combining~\eqref{eq:lower-bound-loop-free-1} and~\eqref{eq:lower-bound-loop-free-2}, we obtain the final lower bound
    \begin{equation*}
        R_K(\Pi, \F, \gamma) \geq \frac{\sqrt{2}C_1}{2} \sqrt{H K\log(|X||A|)}\max(\sqrt{2H}, \sqrt{\gamma}) \geq \Omega (\sqrt{HK(H+\gamma)\log(|X||A|)}),
    \end{equation*}
    which finishes the proof.
\end{proof}

\subsection{Proof of Theorem~\ref{thm:loop-free-adaptivity}}
\label{sec:proof-loop-free-adaptivity}
\begin{proof}
    Similar to the argument in Appendix~\ref{sec:loop-free-ssp-proof-overall}, assume all states are reachable with positive probability under the uniform policy $\pi^u(a|x) = 1/|A|, \forall x \in \X, a\in \A$ and $T$ is large enough such that the occupancy measure of $\pi^u$ satisfies $q^{\pi^u} \in \Delta(M, \frac{1}{T})$, then define $u_k = (1-\frac{1}{T})\qkc + \frac{1}{T}q^{\pi^u} \in \Delta(M,\frac{1}{T^2})$, we have
    \begin{align}
        \sumk\inner{q_k - \qkc}{\ell_k} & = \sumk \inner{q_k - u_k}{\ell_k} + \frac{1}{T}\sumk \inner{q^{\pi^u} - \qkc}{\ell_k}\nonumber                                                         \\
                                        & \leq \sumk\inner{q_k - u_k}{\ell_k}+2 \nonumber                                                                                                        \\
                                        & = \underbrace{\sumk\inner{q_k - q_{k,i} }{\ell_k}}_{\meta} +\underbrace{\sumk\inner{q_{k,i}-u_k}{\ell_k}}_{\base}+2\label{eq:decomposition-loop-free},
    \end{align}
    where the first inequality follows from the definition that $u_k = (1-\frac{1}{T})\qkc + \frac{1}{T}q^{\pi^u}$ and the last inequality holds for any index $i$.

    \paragraph{Upper bound of base-regret.} From Lemma \ref{lem:strongly-convexity}, we ensure $\psi(q)=\sum_{s,a}q(x,a)\log{q(x,a)}$ is $\frac{1}{ H}$ strongly convex for $q \in \Delta(M, \frac{1}{T^2})$. By Lemma \ref{lem:optimistic-omd}, we obtain
    \begin{equation*}
        \sumk\inner{q_k - u_k}{\ell_k} \leq \eta H \sum_{k=1}^K \norm{\ell_k-m_k}_\infty^2 + \frac{1}{\eta}\sumk \left(\Dp(u_k, \qh_k) - \Dp(u_k, \qh_{k+1})\right).
    \end{equation*}
    Similar to the argument in \eqref{eq:bregman-difference-1}, \eqref{eq:bregman-difference-2} and $\sum_{k=2}^K \norm{u_k - u_{k-1}}_1 \leq \sum_{k=2}^K \norm{\qkc - q^{\pi_{k-1}^\C}}_1=\Pb_T$, we have
    \begin{equation*}
        \sumk \left(\Dp(u_k, \qh_k) - \Dp(u_k, \qh_{k+1})\right) \leq H\log(|X||A|/H) + 2\Pb_T \log{T}.
    \end{equation*}
    Therefore, we obtain
    \begin{equation*}
        \base \leq \eta H V_K + \frac{H\log{\frac{|X||A|}{H}} + 2\Pb_T \log{T}}{\eta}\leq \eta H (1+V_K) + \frac{H\log{\frac{|X||A|}{H}} + 2\Pb_T \log{T}}{\eta}.
    \end{equation*}
    It is clear the optimal step size is $\eta^* = \sqrt{\frac{H\log{({|X||A|}/{H}}) +2 \Pb_T \log{T}}{H (1+V_K)}}$. Meanwhile, from the definition of $\Pb_T$ and $\Vb_K$, we have $0 \leq \Pb_T \leq 2T$ and $0 \leq V_K \leq K-1$. Thus, the possible range of the optimal step size is
    \begin{align*}
        \eta_{\min} = \sqrt{\frac{H\log(|X||A|/H)}{T}}, \mbox{ and } \eta_{\max} = \sqrt{{\log(|X||A|/H)} + 4K\log{T}}.
    \end{align*}
    By the construction of the candidate step size pool $\H = \{\eta_i = 2^{i-1} \sqrt{K^{-1}\log(|X||A|/H)} \mid  i\in[N] \}$, where $N=\ceil{\frac{1}{2} \log(K+\frac{4K^2\log{T}}{\log(|X||A|/H)})}+1$, we know that the step size therein is monotonically increasing with respect to the index, in particular
    \begin{align*}
        \eta_{1} = \sqrt{\frac{H\log(|X||A|/H)}{T}} = \eta_{\min}, \mbox{ and } \eta_{N} \geq \sqrt{\log(|X||A|/H) + 4K\log{T}} = \eta_{\max}.
    \end{align*}
    Thus, we ensure there exists a base-learner $\B_{i^*}$ whose step size satisfies $\eta_{i^*} \leq \eta^* \leq \eta_{i^*+1} =2\eta_{i^*}$. Since the regret decomposition in~\eqref{eq:decomposition-loop-free} holds for any $i \in [N]$, we choose the base-learner $\B_{i^*}$ to analysis to obtain a sharp bound.
    \begin{equation}
        \label{eq:base-loop-free-1}
        \begin{aligned}
            \base & \leq \eta_{i^*} H (1+V_K) + \frac{H\log(|X||A|/H) + 2\Pb_T \log{T}}{\eta_{i^*}} \\
                  & \leq \eta^* H (1+V_K) + \frac{2(H\log(|X||A|/H) + 2\Pb_T \log{T})}{ \eta^*}     \\
                  & = 3 \sqrt{H (1+V_K)\left(H\log(|X||A|/H) + 2\Pb_T \log{T}\right)},
        \end{aligned}
    \end{equation}
    where the last equality holds by substituting the step size $\eta^* = \sqrt{\frac{H\log{({|X||A|}/{H})} + 2\Pb_T \log{T}}{H (1+V_K)}}$.

    \paragraph{Upper bound of meta-regret.}It is known that the update $p_{k+1,i} \propto \exp(-\varepsilon \sum_{s=1}^k (h_{s,i}+M_{k+1, i})), \forall i \in [N]$ is equal to the update $p_{k+1}= \argmin_{p \in \Delta_N} \varepsilon\inner{p}{M_{k+1}}+\Dp(p, p_{k+1}^\prime)$ and $p_{k+1}^\prime= \argmin_{p \in \Delta_N} \varepsilon\inner{p}{h_k}+\Dp(p, p_k)$, where $\psi(p) = \sum_{i=1}^N p_i \log p_i$ is the standard negative entropy. Since $\psi(p) = \sum_{i=1}^N p_i \log{p_i}$ is $1$ strongly convex \wrt $\|\cdot\|_1$ for $p\in \Delta_N$. By Lemma \ref{lem:optimistic-omd}, we obtain
    \begin{align}
        \meta & = \sumk \inner{q_k - q_{k,i}}{\ell_k} = \sumk \inner{\sumi p_{k,i}q_{k,i}- q_{k,i}}{\ell_k} = \sumk \inner{p_k - e_i}{h_k} \nonumber                              \\
              & \leq \varepsilon \left(\sum_{k=1}^K\norm{h_k-M_k}_\infty^2\right) + \frac{\Dp(e_i, \ph_1)}{\varepsilon}\label{eq:meta-loop-free-1}                                \\
              & \leq \varepsilon \left(\sum_{k=1}^K \max_{i\in [N]} \inner{q_{k,i}}{\ell_k-m_k}^2\right) + \frac{\log{N}}{\varepsilon} \label{eq:meta-loop-free-2}                \\
              & \leq \varepsilon \left(\sum_{k=1}^K \max_{i\in [N]} \norm{q_{k,i}}_1^2 \norm{\ell_k-m_k}_\infty^2\right) + \frac{\log{N}}{\varepsilon}\label{eq:meta-loop-free-3} \\
              & \leq \varepsilon H^2 (1+V_K) + \frac{\log{N}}{\varepsilon} \label{eq:meta-loop-free-4}                                                                            \\
              & \leq H \sqrt{(1+V_K)\log{N}}\label{eq:meta-loop-free-5}
    \end{align}
    where~\eqref{eq:meta-loop-free-1} holds due to Lemma \ref{lem:optimistic-omd}, \eqref{eq:meta-loop-free-2} follows from the definition of $h_{k,i} = \inner{q_{k,i}}{\ell_k}$ and $M_k=\inner{q_{k,i}}{m_k}, \forall k\in[K], i\in[N]$, \eqref{eq:meta-loop-free-3} holds due to H\"{o}lder's inequality, \eqref{eq:meta-loop-free-4} is due to $\norm{q_{k,i}}_1 = H, \forall k\in[K], i\in[N]$, and~\eqref{eq:meta-loop-free-5} makes use of the setting $\varepsilon  = \sqrt{(\log{N})/(H^2 (1+V_K))}$.

    \paragraph{Upper bound of overall dynamic regret.} Combining~\eqref{eq:decomposition-loop-free},~\eqref{eq:base-loop-free-1} and~\eqref{eq:meta-loop-free-5}, we obtain
    \begin{align*}
        \sumk\inner{q_k - \qkc}{\ell_k} & \leq 3 \sqrt{H(1+V_K)\left(H\log(|X||A|/H) + 2\Pb_T \log{T}\right)} + H \sqrt{(1+V_K)\log{N}} +2 \\
                                        & \leq \O\left(H\sqrt{(1+V_K)\left(\log(|X||A|/H) + P_T \log{T}\right)}\right),
    \end{align*}
    where the last equality is due to $\Pb_T \leq H P_T$ by Lemma~\ref{lem:occupancy-to-policy-loop-free} and $N=\ceil{\frac{1}{2} \log(K+\frac{4K^2\log{T}}{\log(|X||A|/H)})}+1$. This finishes the proof.
\end{proof}

\subsection{Useful Lemmas}
In this part, we present some basic lemmas in episodic loop-free SSP. For any policy $\pi(a|x)$, we define $P^\pi$ to be the transition matrix induced by $\pi$, where the component $[P^\pi]_{x,x^\prime}$ is the transition probability from $x$ to $x^\prime$, \ie $[P^\pi]_{x,x^\prime} = \sum_a \pi(a|x)P_{x,x^\prime}^a$. Then, we have the following useful lemmas.
\begin{myLemma}[Lemma 6.3 of~\citet{MatOR'09:online-MDP}]
    \label{lem:policy-difference}
    For any policies $\pi$ and $\pi$ and any state distribution $d$, we have
    \begin{equation*}
        \norm{dP^\pi - dP^{\pi^\prime}}_1 \leq \norm{\pi - \pi^\prime}_{1, \infty}.
    \end{equation*}
\end{myLemma}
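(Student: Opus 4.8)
The plan is to prove the inequality by a direct expansion of the definitions followed by two elementary normalization facts about transition kernels and distributions. First I would write out the entries of the two next-state distributions: by definition of the induced transition matrix, $(dP^\pi)(x') = \sum_x d(x)\sum_a \pi(a\,|\,x) P_{x,x'}^a$, and similarly for $\pi'$. Subtracting, the $x'$-th entry of $dP^\pi - dP^{\pi'}$ equals $\sum_x d(x)\sum_a \big(\pi(a\,|\,x)-\pi'(a\,|\,x)\big)P_{x,x'}^a$, so the quantity to control is
\[
    \norm{dP^\pi - dP^{\pi'}}_1 = \sum_{x'}\Big\lvert \sum_x d(x)\sum_a \big(\pi(a\,|\,x)-\pi'(a\,|\,x)\big)P_{x,x'}^a \Big\rvert.
\]

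The key step is to push the outer absolute value inside via the triangle inequality and then interchange the order of summation so that the sum over $x'$ acts only on the transition probabilities. This gives the upper bound $\sum_x d(x)\sum_a \abs{\pi(a\,|\,x)-\pi'(a\,|\,x)}\sum_{x'}P_{x,x'}^a$. Here I would invoke the first normalization fact: for each fixed $(x,a)$, the row $P_{x,\cdot}^a$ is a probability distribution over next states, so $\sum_{x'}P_{x,x'}^a = 1$. This collapses the expression to $\sum_x d(x)\,\norm{\pi(\cdot\,|\,x)-\pi'(\cdot\,|\,x)}_1$, i.e.\ a $d$-weighted average of the per-state $\ell_1$ policy differences.

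Finally I would bound each per-state difference by its maximum over states, namely $\norm{\pi(\cdot\,|\,x)-\pi'(\cdot\,|\,x)}_1 \leq \max_{y}\norm{\pi(\cdot\,|\,y)-\pi'(\cdot\,|\,y)}_1 = \norm{\pi-\pi'}_{1,\infty}$, pull this constant out of the sum, and use the second normalization fact that $d$ is a distribution so $\sum_x d(x) = 1$. This yields exactly $\norm{dP^\pi - dP^{\pi'}}_1 \leq \norm{\pi-\pi'}_{1,\infty}$ and completes the argument. I do not anticipate a genuine obstacle here, as the statement is a routine computation; the only points requiring care are the legitimacy of the summation interchange (justified since all terms are finite and the state/action spaces are finite) and the correct application of the two normalization identities, which are what make the weighting by $d$ and the marginalization over $x'$ both cost-free.
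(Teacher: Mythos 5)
Your proof is correct and follows essentially the same argument as the paper's: triangle inequality, the row normalization $\sum_{x'}P^a_{x,x'}=1$, and the normalization of $d$. The only difference is organizational—the paper first treats $d$ as a point mass and then extends to general $d$ by linearity, whereas you carry the general $d$ through the computation directly—but the underlying steps are identical.
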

\begin{proof}
    Consider the case when $d$ is a delta function on $x$. The difference in the next state distributions, $\norm{dP^\pi - dP^{\pi^\prime}}_1$, is
    \begin{align*}
        \sum_{x^{\prime}}\Big|[P^{\pi}]_{x, x^{\prime}}-[P^{\pi^{\prime}}]_{x, x^{\prime}}\Big| & =\sum_{x^{\prime}} \sum_{a}\abs{P(x^\prime | x,a)\left(\pi(a | x)-\pi^{\prime}(a | x)\right)} \\
                                                                                                & \leq \sum_{x^{\prime}, a} P(x^\prime | x, a)\abs{\pi(a | x)-\pi^{\prime}(a | x)}              \\
                                                                                                & =\sum_{a}\abs{\pi(a | x)-\pi^{\prime}(a | x)}.
    \end{align*}
    Linearity of expectation leads to the result for arbitrary $d$.
\end{proof}

\begin{myLemma}
    \label{lem:distribution-difference}
    For any state distribution $d$ and $d^\prime$, and any policy $\pi$, we have
    \begin{equation}
        \norm{dP^\pi - d^\prime P^\pi}_1 \leq \norm{d - d^\prime}_1.
    \end{equation}
\end{myLemma}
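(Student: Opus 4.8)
The statement is the standard non-expansiveness of a row-stochastic transition matrix in the $\ell_1$ norm, so the plan is to unwind the definitions and apply the triangle inequality. First I would use linearity of the (left) matrix action to write $dP^\pi - d^\prime P^\pi = (d-d^\prime)P^\pi$, and set $\delta = d - d^\prime$, so the goal reduces to showing $\norm{\delta P^\pi}_1 \leq \norm{\delta}_1$. The only structural fact I need is that $P^\pi$ is row-stochastic: by its definition $[P^\pi]_{x,x^\prime} = \sum_a \pi(a|x) P(x^\prime | x,a) \geq 0$, and since each $P(\cdot | x,a)$ and $\pi(\cdot | x)$ is a probability distribution, $\sum_{x^\prime} [P^\pi]_{x,x^\prime} = 1$ for every $x$.

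Next I would write out the $\ell_1$ norm componentwise, treating $\delta$ as a row vector so that the $x^\prime$-th entry of $\delta P^\pi$ is $\sum_x \delta(x) [P^\pi]_{x,x^\prime}$. Then the chain of inequalities is
\begin{align*}
    \norm{\delta P^\pi}_1
    &= \sum_{x^\prime} \Big| \sum_x \delta(x) [P^\pi]_{x,x^\prime} \Big|
     \leq \sum_{x^\prime} \sum_x \abs{\delta(x)}\, [P^\pi]_{x,x^\prime} \\
    &= \sum_x \abs{\delta(x)} \sum_{x^\prime} [P^\pi]_{x,x^\prime}
     = \sum_x \abs{\delta(x)} = \norm{\delta}_1,
\end{align*}
where the inequality is the triangle inequality combined with the nonnegativity of the entries $[P^\pi]_{x,x^\prime}$, and the third equality uses that each row of $P^\pi$ sums to one. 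Substituting back $\delta = d - d^\prime$ gives exactly $\norm{dP^\pi - d^\prime P^\pi}_1 \leq \norm{d - d^\prime}_1$.

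I do not expect a genuine obstacle here, since this is a routine contraction argument; the only point requiring care is the bookkeeping around the row-vector convention (so that the matrix acts on the right and the sum over $x^\prime$ of each row collapses to one). The structure mirrors the preceding Lemma~\ref{lem:policy-difference}, which bounds the effect of changing the \emph{policy}; this lemma instead bounds the effect of changing the \emph{input distribution}, and the two together are what drive the layerwise recursion used in the proof of Lemma~\ref{lem:occupancy-to-policy-loop-free}.
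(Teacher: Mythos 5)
Your proof is correct and follows essentially the same route as the paper's: expand the $\ell_1$ norm componentwise, apply the triangle inequality using nonnegativity of the entries $[P^\pi]_{x,x'}$, swap the order of summation, and collapse the row sums of $P^\pi$ to one. The only cosmetic difference is that you first factor the difference as $\delta P^\pi$ with $\delta = d - d'$ and explicitly verify row-stochasticity of $P^\pi$, which the paper leaves implicit.
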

\begin{proof}
    Note that the relationship that $d(x^\prime) = \sum_{x} d(x)P_{x,x^\prime}^\pi$, therefore, we have
    \begin{align*}
        \norm{dP^\pi - d^\prime P^\pi}_1 & = \sum_{x^\prime}\abs{\sum_x d(x)P_{x,x^\prime}^\pi - d^\prime(x)P_{x,x^\prime}^\pi} \leq \sum_{x^\prime}\sum_x \abs{d(x)P_{x,x^\prime}^\pi - d^\prime(x)P_{x,x^\prime}^\pi} \\&  = \sum_{x^\prime}\sum_x \abs{d(x) - d^\prime(x)}P_{x,x^\prime}^\pi = \sum_x \abs{d(x) - d^\prime(x)}\sum_{x^\prime} P_{x,x^\prime}^\pi\\
                                         & = \sum_x\abs{d(x) - d^\prime(x)}
        = \norm{d - d^\prime}_1.
    \end{align*}
    This finishes the proof.
\end{proof}
Finally, we show the lemma below which shows the strongly convexity of the regularizer.
\begin{myLemma}
    \label{lem:strongly-convexity}
    $\psi(w) = \sum_{i=1}^d w_i \log w_i $ is $\frac{1}{H}$-strongly convex \wrt $\|\cdot\|_1$ for $\{w \in \R_{\geq 0}^d \mid \sum_{i=1}^d w_i = H\}$.
\end{myLemma}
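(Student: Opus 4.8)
The plan is to use the standard equivalence that a differentiable convex function $\psi$ is $\mu$-strongly convex with respect to $\|\cdot\|_1$ on a convex set if and only if its induced Bregman divergence satisfies $\Dp(u, w) \geq \frac{\mu}{2}\norm{u-w}_1^2$ for all $u, w$ in the set. Hence it suffices to lower bound $\Dp(u,w)$ by $\frac{1}{2H}\norm{u-w}_1^2$ on the scaled simplex $\{w \in \R_{\geq 0}^d : \sum_i w_i = H\}$, which yields $\mu = 1/H$.

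First I would compute the Bregman divergence of the negative entropy explicitly. Since $\nabla\psi(w)_i = 1 + \log w_i$, one has
\[
    \Dp(u,w) = \sum_{i=1}^d \left(u_i \log\frac{u_i}{w_i} - u_i + w_i\right).
\]
On the constraint set both $u$ and $w$ sum to $H$, so the terms $-u_i + w_i$ cancel after summation, leaving the generalized KL expression $\Dp(u,w) = \sum_i u_i \log(u_i/w_i)$.

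Next I would reduce to the probability-simplex case by rescaling. Writing $\bar{u} = u/H$ and $\bar{w} = w/H$, both are probability vectors, and $\Dp(u,w) = H \sum_i \bar{u}_i \log(\bar{u}_i / \bar{w}_i) = H \cdot \mathrm{KL}(\bar{u} \,\|\, \bar{w})$. Invoking Pinsker's inequality $\mathrm{KL}(\bar{u} \,\|\, \bar{w}) \geq \frac{1}{2}\norm{\bar{u} - \bar{w}}_1^2$ together with $\norm{\bar{u} - \bar{w}}_1 = \frac{1}{H}\norm{u-w}_1$ gives
\[
    \Dp(u,w) = H\cdot \mathrm{KL}(\bar{u} \,\|\, \bar{w}) \geq \frac{H}{2}\cdot \frac{1}{H^2}\norm{u-w}_1^2 = \frac{1}{2H}\norm{u-w}_1^2,
\]
which is exactly the desired bound.

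The argument is essentially routine once the rescaling is in place, and the only non-elementary ingredient is Pinsker's inequality, which I would cite as a standard fact rather than reprove. The one point requiring genuine care is the cancellation of the linear terms in the second step: it relies crucially on restricting to the set where both arguments carry the same total mass $H$. Without this constraint the reduction to $\mathrm{KL}$ fails and the clean constant $1/H$ would not be available, so the role of the mass constraint is the conceptual crux of the lemma rather than any calculation.
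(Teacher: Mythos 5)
Your proposal is correct and follows essentially the same route as the paper's own proof: both verify the first-order strong convexity condition by observing that on the equal-mass set the Bregman divergence of negative entropy reduces to $\sum_i u_i \log(u_i/w_i)$, then rescale by $H$ to land on the probability simplex and invoke Pinsker's inequality, yielding the constant $\frac{1}{2H}$. The only cosmetic difference is that you state the Bregman-divergence characterization explicitly and spell out the cancellation of the linear terms, which the paper leaves implicit.
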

\begin{proof}
    For any  $y, z \in \{w \in \R_{\geq 0}^d \mid \sum_{i=1}^N w_i = H\}$, we have $\frac{y}{H}, \frac{z}{H} \in \{w \in \R_{\geq 0}^d \mid \sum_{i=1}^d w_i = 1\}$. Then, it holds that
    \begin{equation*}
        \psi(y) - \psi(z) - \inner{\nabla \psi(y)}{y-z} = \sum_{i=1}^d y_i\log{\frac{y_i}{z_i}} = H\sum_{i=1}^d \frac{y_i}{H}\log{\frac{y_i/H}{z_i/H}}\geq \frac{1}{2 H} \norm{y-z}_1^2,
    \end{equation*}
    where the last inequality holds due to Pinsker's Inequality. This finishes the proof.
\end{proof}
\section{Proofs for Section~\ref{sec:episodic-ssp} (Episodic SSP)}
\label{sec:appendix-general-SSP}
In this section, we first give the impossible result to bound the path length of occupancy measures by the path length of policies. Next we provide proofs of the dynamic regret of \mbox{CODO-REPS} algorithm and the lower bound of dynamic regret in Section~\ref{sec:dynamic-ssp}. Finally we give the proofs of the enhanced Optimistic \mbox{CODO-REPS} algorithm in Section~\ref{sec:ssp-adaptive}.

\subsection{Path Length of Policies and Occupancy Measures}
\label{sec:appendix-impossible-SSP}
In the following, we give the impossible result to bound the path length of the occupancy measures by the path length of the corresponding policies.
\begin{myThm}
    \label{thm:occupancy-to-policy-ssp}
    For any $H_*>1$ and any positive integer $c > 0$, there exists an SSP instance with $|X| = 2c+1$ states, $|A|=2$ actions and a policy sequence $\pi_1^\C, \ldots, \pi_K^\C$ with largest expected hitting time $H_*$ such that $\Pb_K \geq c P_K$.
\end{myThm}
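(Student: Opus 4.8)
The plan is to exhibit a single ``loop gadget'' in which a minuscule change of the policy at one state rescales the expected number of times a long loop is traversed, thereby moving the occupancy mass of many states at once while changing the policy at only one state. Concretely, I would take $K=2$ and build an SSP on states $\{d, s_1, \ldots, s_{2c}\}$ (so $|X| = 2c+1$) with goal $g$: at the decision state $d$ action $a_1$ goes to $g$ while action $a_2$ enters the loop $s_1 \to s_2 \to \cdots \to s_{2c} \to d$; at every loop state both actions advance deterministically along the loop, so the policy there is irrelevant. Each compared policy is then specified by a single number, the escape probability $p = \pi^\C(a_1 \mid d) \in (0,1]$, and is proper whenever $p>0$.

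Next I would compute the occupancy measures by a geometric argument. The number of visits to $d$ is geometric with success probability $p$, so $q^{\pi}(d)=1/p$, split as $q^\pi(d,a_1)=1$ and $q^\pi(d,a_2)=1/p-1$; the expected number of loop traversals is $1/p-1$, whence $\sum_a q^\pi(s_i,a) = 1/p-1$ for every $i \in [2c]$. Consequently $H^\pi = \frac{1}{p}\bigl(1 + 2c(1-p)\bigr)$, which is a strictly decreasing bijection from $p\in(0,1]$ onto the hitting-time range $[1,\infty)$. Taking $\pi_1^\C$ with escape probability $p_1 = \frac{2c+1}{H_*+2c}$ gives exactly $H^{\pi_1^\C}=H_*$, and taking $\pi_2^\C$ to always escape ($p_2=1$) gives $H^{\pi_2^\C}=1<H_*$, so the largest hitting time of the sequence is precisely $H_*$.

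It then remains to compare the two path lengths. Since the policies agree everywhere except at $d$, $P_K = \norm{\pi_2^\C(\cdot\mid d) - \pi_1^\C(\cdot\mid d)}_1 = 2(1-p_1)$. The occupancy difference vanishes at $(d,a_1)$ but equals $1/p_1 - 1$ at $(d,a_2)$ and at each of the $2c$ loop states, so $\Pb_K = (2c+1)(1/p_1-1) = (2c+1)\tfrac{1-p_1}{p_1}$. Dividing, the factor $(1-p_1)$ cancels and yields $\Pb_K/P_K = (2c+1)/(2p_1) = (H_*+2c)/2 \geq c$, which is exactly the claimed bound $\Pb_K \geq c\,P_K$.

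The delicate point --- and the reason the $2c+1$ states are needed rather than a single self-loop --- is decoupling the target hitting time $H_*$ from the amplification factor $c$. A one-state self-loop only amplifies the ratio by a factor $\approx 1/(2p_1p_2)$, which is capped at roughly $H_*^2/2$ once both hitting times are forced to be small, and hence cannot reach an arbitrary $c$; routing the escape through a loop of length $2c$ multiplies the per-state occupancy swing by the number of loop states while leaving the hitting time controllable through $p_1$ alone. I would therefore stress verifying these two independences: that $\Pb_K/P_K$ depends on $p_1$ only through $1/(2p_1)\ge 1/2$ (so, after the choice of $p_1$, the factor equals $(H_*+2c)/2$ and never drops below $c$), and that $H^{\pi}$ can be tuned to any $H_*>1$ by the displayed formula for $p_1$ --- together giving the result for arbitrary $H_*>1$ and $c$, in sharp contrast to the one-sided estimate $\Pb_T \le H\,P_T$ available in the loop-free case of Lemma~\ref{lem:occupancy-to-policy-loop-free}.
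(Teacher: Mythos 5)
Your proof is correct, and your construction is genuinely different from the paper's. The paper uses an \emph{acyclic} gadget: a chain $x_0 \to x_1 \to \cdots \to x_{2c} \to g$, where action $a_1$ at $x_0$ jumps directly to $g$; the two compared policies differ only at $x_0$ by a perturbation $\epsilon$, so each switch contributes $2\epsilon$ to $P_K$ while every chain state picks up occupancy $\epsilon$, giving the fixed ratio $\Pb_K/P_K = c+1$, with the hitting time tuned via $H_* = 1 + 2c\epsilon$. Your gadget is genuinely cyclic --- the detour returns to the decision state --- so each loop state's occupancy is the expected number of loop traversals $1/p_1 - 1$, which is unbounded; this yields the larger, $H_*$-dependent ratio $(H_*+2c)/2$. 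Your version buys something concrete: it covers the entire parameter range claimed in the theorem. The paper's choice $\epsilon = (H_*-1)/(2c)$ is a valid probability only when $H_* \leq 2c+1$, so its construction implicitly requires the hitting time not to exceed the number of states; your parametrization $p_1 = (2c+1)/(H_*+2c) \in (0,1)$ has no such restriction, and your side remark about why a single self-loop cannot work (ratio capped near $H_*^2/2$) correctly identifies why the $2c$ extra states are needed. Two small points to patch. First, $K$ in the theorem is the given number of episodes, so rather than fixing $K=2$ you should alternate $\pi_1^\C, \pi_2^\C, \pi_1^\C, \ldots$ as the paper does; each consecutive pair contributes identically to $\Pb_K$ and $P_K$, so the ratio $(H_*+2c)/2 \geq c$ is unchanged and the largest hitting time remains $H_*$. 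Second, make explicit that the two policies use the same (arbitrary) action distribution at the loop states, so that $\norm{\pi_2^\C - \pi_1^\C}_{1,\infty}$ is realized only at $d$ and equals $2(1-p_1)$ as claimed.
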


\begin{proof}
    For any $H_*>1$ and any positive integer $c > 0$, we construct an episodic SSP with $n+1$ states $X = \{x_0, \ldots, x_n\}$ with $n=2c$ and two actions $A=\{a_1, a_2\}$. Let the transition kernel be deterministic and the state transitions are specified in Figure~\ref{fig:ssp}.
    \begin{figure}[!h]
        \centering
        \includegraphics[height=0.13\textwidth,trim=11cm 7.7cm 11cm 7.3cm,clip]{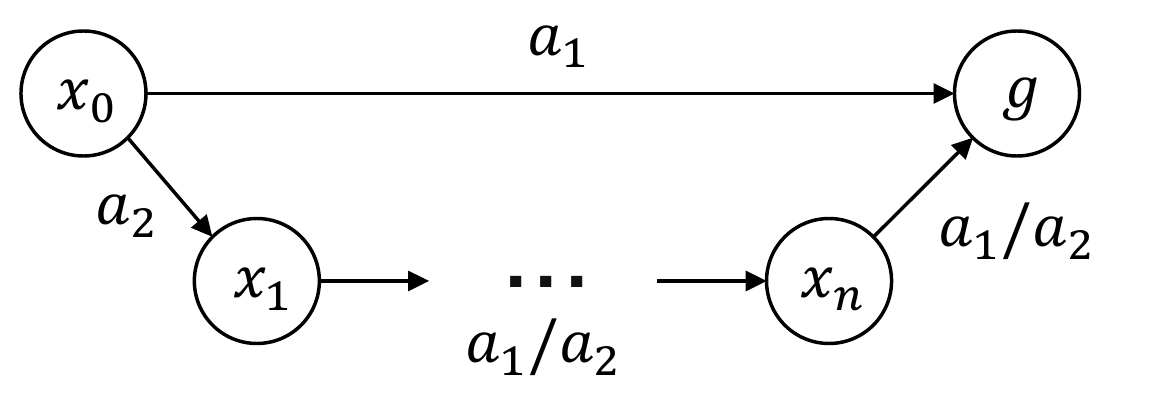}
        \caption{An illustration of state transitions of the constructed hard instance.}
        \label{fig:ssp}
    \end{figure}

    Specifically, taking $a_1$ and $a_2$ in initial state $x_0$ leads to the state $g$ and $x_1$ respectively. Taking any action in state $x_i$ leads to state $x_{i+1}, \forall i \in [n-1]$ and  taking any action in state $x_n$ leads to the goal state state $g$. Then, we consider two policies $\pi$ and $\pi^\prime$ with $\pi(a_1|x_i) = 1, \forall i \in \{0\} \cup [n]$ and $\pi^\prime(a_1|x_0) = 1-\epsilon, \pi^\prime(a_1|x_i) = 1, \forall i \in [n]$. It is clear that $\norm{\pi - \pi^\prime}_{1,\infty}=2\epsilon$ and $H^\pi(x_0) = 1, H^{\pi^\prime}(x_0) = 1+\epsilon n$.
    For any $H_*>1$ and $c>0$, let $\epsilon = (H_*-1)/n$, we have $H^{\pi^\prime} = 1+\epsilon n = H_*$, \ie the largest hitting time of $\pi$ and $\pi^\prime$ is $H_*$. Then we consider the occupancy measure discrepancy of $\pi$ and $\pi^\prime$. It is easy to verify
    \[
        \sum_{x,a}\abs{q^\pi(x,a)-q^{\pi^\prime}(x,a)} = \epsilon + \epsilon(n+1) = \epsilon(n+2) = 2\epsilon(c+1) = (c+1)\norm{\pi - \pi^\prime}_{1,\infty}.
    \]
    Therefore, we have $\norm{q^\pi - q^{\pi^\prime}}_1 \geq c \norm{\pi-\pi^\prime}_{1,\infty}$. Thus, the policy sequence $\pi, \pi^\prime, \pi, \pi^\prime, \ldots$ satisfies $\Pb_K = K \norm{q^\pi - q^{\pi^\prime}}_1 \geq cK \norm{\pi-\pi^\prime}_{1,\infty}=c P_K$, which completes the proof.
\end{proof}

\subsection{Proof of Lemma~\ref{lem:CDO-base}}
\label{sec:proof-base-ssp}
\begin{proof}
    Since $\eta \leq \frac{1}{64}$, we ensure that $32 \eta \abs{\ell_{k,i}}\leq 1, \forall k\in[K], i\in[|X||A|]$. Taking $m_k = 0$ in Lemma~\ref{lem:impossible-tuning}, we obtain
    \begin{align}
        \label{eq:ssp-lemma-1}
        \sumk\inner{q_{k} - \qkc}{\ell_k} \leq \sumk \left(\Dp(\qkc, \qh_k) - \Dp(\qkc, \qh_{k+1})\right) + 32\eta \sumk \inner{q_k}{\ell_k^2} - 16\eta \sumk \inner{\qkc}{\ell_k^2}.
    \end{align}
    For the first term, from the definition of $\Dp(q, q^\prime)$, we have
    \begin{align}
             & \sumk\left(D_{\psi}(\qkc, \qh_k)- D_{\psi}(\qkc, \qh_{k+1})\right)  \label{eq:ssp-lemma-2}                                                                                                                                           \\
        =    & \ \Dp(q^{\pi_1^\C}, \qh_1) + \sum_{k=2}^K \left(\Dp(\qkc, \qh_k) - \Dp(q^{\pi_{k-1}^\C}, \qh_k) \right)\nonumber                                                                                                                     \\
        =    & \ \Dp(q^{\pi_1^\C}, \qh_1) + \frac{1}{\eta}\sum_{k=2}^K\sum_{x,a}\left(\qkc(x,a)\log{\frac{\qkc(x,a)}{\qh_k(x,a)}} - q^{\pi_{k-1}^\C}(x,a)\log{\frac{q^{\pi_{k-1}^\C}(x,a)}{\qh_k(x,a)}}\right) \nonumber                            \\
             & \hspace{8cm} + \frac{1}{\eta}\sum_{k=2}^K\sum_{x,a}\left(q^{\pi_{k-1}^\C}(x,a)-\qkc(x,a)\right)\nonumber                                                                                                                             \\
        =    & \ \Dp(q^{\pi_1^\C}, \qh_1) + \frac{1}{\eta}\sum_{k=2}^K\sum_{x,a}\left(\qkc(x,a)-q^{\pi_{k-1}^\C}(x,a)\right)\log{\frac{1}{\qh_k(x,a)}} \nonumber                                                                                    \\
             & \hspace{6cm} + \frac{\psi(q^{\pi_K^\C})-\psi(q^{\pi_1^\C})-\sum_{x,a}(q^{\pi_K^\C}(x,a)-q^{\pi_1^\C}(x,a))}{\eta} \nonumber                                                                                                          \\
        \leq & \ \frac{1}{\eta} \log{\frac{H}{\alpha}} \sum_{k=2}^K \norm{\qkc - q^{\pi_{k-1}^\C}}_{1} + \Dp(q^{\pi_1^\C}, \qh_1) + \frac{\psi(q^{\pi_K^\C})-\psi(q^{\pi_1^\C})-\sum_{x,a}(q^{\pi_K^\C}(x,a)-q^{\pi_1^\C}(x,a))}{\eta}   \nonumber,
    \end{align}
    where the last inequality holds due to $\abs{\log \qh_k(x,a)} \leq \log\frac{H}{\alpha}$ since $\alpha \leq \qh_k(x,a) \leq H$ for $\qh_k \in \Delta(M, H, \alpha)$. For the last two term, since $\qh_1$ minimize $\psi$ over $\Delta(M, H, \alpha)$, we have $\inner{\nabla \psi(\qh_1)}{q^{\pi_1^\C} -\qh_1} \leq 0$, thus
    \begin{equation}
        \label{eq:ssp-lemma-3}
        \begin{aligned}
                 & \Dp(q^{\pi_1^\C}, \qh_1) + \frac{\psi(q^{\pi_K^\C})-\psi(q^{\pi_1^\C})-\sum_{x,a}(q^{\pi_K^\C}(x,a)-q^{\pi_1^\C}(x,a))}{\eta}                                   \\
            \leq & \ \frac{\psi(q^{\pi_1^\C}) - \psi(\qh_1)}{\eta} + \frac{\psi(q^{\pi_K^\C})-\psi(q^{\pi_1^\C})-\sum_{x,a}\left(q^{\pi_K^\C}(x,a)-q^{\pi_1^\C}(x,a)\right)}{\eta} \\
            \leq & \ \frac{\psi(q^{\pi_K^\C})-\psi(\qh_1)-\sum_{x,a}\left(q^{\pi_K^\C}(x,a)-q^{\pi_1^\C}(x,a)\right)}{\eta}                                                        \\
            \leq & \ \frac{H\log(|X||A|)+H \log{H} + H}{\eta} = \frac{H(1+\log(|X||A|H))}{\eta},
        \end{aligned}
    \end{equation}
    where the last inequality holds due to  $-H\log(|X||A|) \leq \psi(q) \leq H \log{H}$ and $0\leq \sum_{x,a} q(x,a) \\\leq H$ for any $q\in \Delta(M, H, \alpha)$ from Lemma~\ref{lem:bound-of-psi}.
    Combining~\eqref{eq:ssp-lemma-1},~\eqref{eq:ssp-lemma-2} and~\eqref{eq:ssp-lemma-3}, we have
    \begin{align*}
        \sumk\inner{q_{k} - \qkc}{\ell_k}
         & \leq \frac{H(1+\log(|X||A|H)) + \Pb_K \log(H/\alpha)}{\eta}  + 32\eta \sumk \inner{q_k}{\ell_k^2} - 16\eta \sumk \inner{\qkc}{\ell_k^2},
    \end{align*}
    where $\Pb_K = \sum_{k=2}^K \norm{\qkc - q^{\pi_{k-1}^\C}}_{1}$. This finishes the proof.
\end{proof}

\subsection{Proof of Theorem~\ref{thm:ssp}}
\label{sec:proof-dynamic-ssp}
\begin{proof}
    We only need to consider the case $H_* \leq K$ (otherwise the claimed regret bound is vacuous). Since all the compared policies are proper, they will not visit the states from which the goal state $g$ is not accessible (otherwise the hitting time will be infinite) and the states which are not accessible from initial state $x_0$. We can remove them from the SSP since we consider the known transition setting. Then, suppose $K$ is large enough such that these exists at least a policy $\pi^u$ whose occupancy measure $q^{\pi^u}$ satisfies $q^{\pi^u} \in \Delta(M, K, \frac{1}{K})$. Then, we define $u_k = (1-\frac{1}{K^2})\qkc + \frac{1}{K^2}q^{\pi^u}$ and the corresponding policy $\pi^{u_k}$. For any $k\in[K]$, we ensure that the hitting time $H^{\pi^{u_k}}\leq (1-\frac{1}{K^2}) H_* + \frac{K}{K^2} \leq H_* + 1$ and the occupancy measure $u_k(x,a)\geq \frac{1}{K^3}, \forall x,a$, \ie $u_k \in \Delta(M, H_*+1, \frac{1}{K^3})$. Thus, we have
    \begin{align}
        \E[{\DReg}_K(\pi_{1:K}^\C)] & =  \E \Bigg[\sumk \sum_{i,j} p_k^{i,j} \inner{q_k^{i,j}}{\ell_k} - \sumk \inner{\qkc}{\ell_k}\Bigg] \nonumber                                                                      \\
                                    & = \E \Bigg[\sumk \sum_{i,j} p_k^{i,j} \inner{q_k^{i,j}}{\ell_k} -  \sumk \inner{u_k}{\ell_k}\Bigg] + \frac{1}{K^2} \E \Bigg[\sumk \inner{q^{\pi^u} - \qkc}{\ell_k}\Bigg]\nonumber  \\
                                    & \leq  \E \Bigg[\sumk \sum_{i,j} p_k^{i,j} \inner{q_k^{i,j}}{\ell_k} -  \sumk \inner{u_k}{\ell_k}\Bigg] + 2,\nonumber                                                               \\
                                    & \leq \underbrace{\E \Bigg[\sumk\inner{p_k - e_{i,j}}{h_k}\Bigg]}_{\meta} +\underbrace{\E \Bigg[\sumk\inner{q_{k}^{i,j}-u_k}{\ell_k}\Bigg]}_{\base} + 2 \label{eq:ssp-clip-dynamic}
    \end{align}
    where the first inequality holds due to $\sum_{x,a} q^u(x,a) \leq K$ and $\sum_{x,a} \qkc(x,a) \leq H_* \leq K$, the last inequality holds due to the definition that $h_k^{i,j} = \inner{q_k^{i,j}}{\ell_k}, \forall i \in [G],j\in [N_i]$ and the decomposition holds for any index $i \in [G],j\in [N_i]$.

    \paragraph{Upper bound of base-regret.} Since the possible range of $H_*$ is $H^{\pi^f} \leq H_* \leq K$. From the construction of horizon length pool $\mathcal{H} = \{H_i = 2^{i-1} \cdot H^{\pi^f} \mid i \in [G]\}$ where $G = 1 + \ceil{\log((K+1)/H^{\pi^f})}$, we ensure
    \begin{equation*}
        H_1 = H^{\pi^f} \leq H_*+1 \mbox{ and } H_G = K+1 \geq H_*+1.
    \end{equation*}
    So for any unknown $H_*$, there exist an index $i$ for the space pool that satisfies $H_{i^*-1}=\frac{H_{i^*}}{2} \leq H_*+1 \leq H_{i^*}$. Then, we analysis the base-regret of the base learners in group $i^*$.

    From the construction of each step size pool, we ensure $\eta_{i,j} \leq \frac{1}{64}$, \ie $32 \eta_{i,j} \abs{\ell_{k,r}}\leq 1, \forall i\in[G],j\in[N_i], k\in[K], r\in[|X||A|]$. Since $q_k^{i^*, j}, u_k \in \Delta(M, H_{i^*}, 1/K^3), \forall j \in [N_i^*], k\in [K]$, from Lemma~\ref{lem:CDO-base}, we have
    \begin{align}
             & \ \base \nonumber                                                                                                                                                                              \\
        \leq & \ \frac{4\sum_{k=2}^K \norm{u_k-u_{k-1}}_1 \log{K} +H_{i^*}(1+\log(|X||A|H_{i^*}))}{\eta_{i^*,j}} + 16 \eta_{i^*,j} \sumk (2\inner{u_k}{\ell_{k}} - \inner{q_k^{i^*,j}}{\ell_{k}^2}) \nonumber \\
        \leq & \ \frac{4\Pb_K \log{K} +H_{i^*}(1+\log(|X||A|H_{i^*}))}{\eta_{i^*,j}} + 32\eta_{i^*,j} B_K - 16\eta_{i^*,j} \sumk \inner{q_k^{i^*,j}}{\ell_{k}^2} + 1 \label{eq:ssp-base-dynamic},
    \end{align}
    where the last inequality holds due to $\sum_{k=2}^K \norm{u_k - u_{k-1}}_1 \leq \sum_{k=2}^K \norm{\qkc - q^{\pi_{k-1}^\C}}_1=\Pb_K$ and $B_K = \sumk \inner{\qkc}{\ell_k}$.

    \paragraph{Upper bound of meta-regret.} Then, we consider the meta-regret with respect to base-learner $\B_{i^*,j}, \forall j\in N_{i^*}$. From the construction of the regularizer $\psib(p)$ in meta-algorithm, we have $32 \varepsilon_{i,j} \abs{h_{k}^{i,j} } = 32 \frac{\eta_{i,j}}{2H_i} \abs{\inner{q_k^{i,j}}{\ell_k}} \leq 1, \forall i\in[G], j\in[N_i], k\in[K]$. From the analysis of OMD in Lemma~\ref{lem:impossible-tuning}, we have
    \begin{align*}
        \meta & \leq \Dpb(e_{i^*,j}, p_1) + 32\varepsilon_{i^*,j} \sumk (h_k^{i^*,j})^2                                                                                                                                                                                                                      \\
              & = \frac{1}{\varepsilon_{i^*,j}} \log \frac{1}{p_1^{i^*,j}} + \sum_{r=1}^G \sum_{s=1}^{N_i}   \frac{p_1^{r,s}}{\varepsilon_{r,s}}+32\varepsilon_{i^*,j} \sumk (h_k^{i^*,j})^2                                                                                                                 \\
              & = \frac{1}{\varepsilon_{i^*,j}} \log{\frac{\sum_{r=1}^G \sum_{s=1}^{N_i}\varepsilon_{r,s}^2}{\varepsilon_{i^*,j}^2}} + \frac{\sum_{r=1}^G \sum_{s=1}^{N_i}\varepsilon_{r,s}}{\sum_{r=1}^G \sum_{s=1}^{N_i} \varepsilon_{r,s}^2} + 32 \varepsilon_{i^*,j}\sumk \inner{q_k^{i^*,j}}{\ell_k}^2,
    \end{align*}
    where the first equality holds due to $\Dpb(p,p')=\sum_{i,j}\frac{1}{\varepsilon_{i,j}}(p_{i,j}\log\frac{p_{i,j}}{p'_{i,j}} - p_{i,j} + p'_{i,j})$ and the last equality is due to $p_1^{i,j} \propto \varepsilon_{i,j}^2, h_k^{i,j} = \inner{q_k^{i,j}}{\ell_k}, \forall i \in[G],j\in[N_i]$. From the definition of the horizon length pool $\mathcal{H} = \{H_i = 2^{i-1} \cdot H^{\pi^f} \mid i \in [G]\}$ where $G = 1 + \ceil{\log((K+1)/H^{\pi^f})}$, the step size pools $\Ecal_i = \left\{ \frac{1}{32 \cdot 2^j} \mid j \in [N_i] \right\}, i\in[G],$ where $N_i = \ceil{\frac{1}{2}\log{(\frac{4 K}{1+\log{(|X||A|H_i)}})}}$ and learning rate $\varepsilon_{i,j} = \frac{\eta_{i,j}}{2H_i}, \forall i\in[G],j\in[N_i]$, we ensure that $\sum_{r=1}^G \sum_{s=1}^{N_i} \varepsilon_{r,s} = \Theta(1/H_1)$ and $\sum_{r=1}^G \sum_{s=1}^{N_i} \varepsilon_{r,s}^2 = \Theta(1/H_1^2)$. Thus,
    \begin{equation}
        \begin{aligned}
            \label{eq:ssp-meta-dynamic}
            \meta & \leq \Theta\left(\frac{H_{i^*}}{\eta_{i^*,j}} \log{\frac{H_{i^*}}{H_1\eta_{i^*,j}}}\right) + 16 \frac{\eta_{i^*,j}}{H_{i^*}}\sumk \inner{q_k^{i^*,j}}{\ell_k}^2 + \Theta(H_1) \\
                  & \leq \Theta\left(\frac{H_{i^*}}{\eta_{i^*,j}} \log{\frac{H_{i^*}}{H_1\eta_{i^*,j}}}\right) + 16 \eta_{i^*,j}\sumk \inner{q_k^{i^*,j}}{\ell_k^2} + \Theta(H_1),
        \end{aligned}
    \end{equation}
    where the last inequality holds due to Cauchy–Schwarz inequality.

    \paragraph{Upper bound of over all dynamic regret.} Combining~\eqref{eq:ssp-clip-dynamic},~\eqref{eq:ssp-base-dynamic} and~\eqref{eq:ssp-meta-dynamic}, we obtain
    \begin{equation}
        \label{eq:ssp-overall-dynamic}
        \E[{\DReg}_K] \leq \frac{4\Pb_T \log{K} +H_{i^*}(1+\log(|X||A|H_{i^*}))}{\eta_{i^*,j}}+ 32\eta_{i^*,j} B_K +  \Theta\left( \frac{H_{i^*}}{\eta_{i^*,j}} \log{\frac{H_{i^*}}{H_1\eta_{i^*,j}}} \right)
    \end{equation}
    holds for any index $j \in [N_{i^*}]$. Omit the last term, it is clear that the optimal step size is $\eta^* = \sqrt{\frac{H_{i^*}(1+\log(|X||A|H_{i^*}))+4\Pb_K \log K}{32B_K}}$. Meanwhile, since $\sum_{x,a} q_k(x,a) \leq H_*, \forall k \in [K]$, we have $0\leq \Pb_K=\sum_{k=2}^K \norm{\qkc - q^{\pi_{k-1}^\C}}_{1}\leq 2H_*K \leq 2H_{i^*}K$ and $B_K \leq H_*K\leq H_{i^*}K$. Therefore, we ensure that
    \begin{equation*}
        \eta^* \geq \sqrt{\frac{1+\log(|X||A|H_{i^*})}{32K}}.
    \end{equation*}
    From the construction of the candidate step size pool $H_{i^*}$, we know that the step size therein is monotonically decreasing with respect to the index, in particular,
    \begin{equation*}
        \eta_1 = \frac{1}{64}, \mbox{ and }\eta_N = \sqrt{\frac{1+\log(|X||A|H_{i^*})}{128K}} \leq \eta^*
    \end{equation*}
    Let $j^*$ be the index of base learner in group $i^*$ with step size closest to the $\eta^*$. Then, we consider the base regret of the base learner $B_{i^*,j^*}$.  We consider the following two cases:
    \begin{itemize}
        \item when $\eta^* \leq \frac{1}{64}$, then $\eta_{i^*, j^*} \leq \eta^* \leq 2 \eta_{i^*, j^*}=\eta_{i^*, j^*-1}$, we have
              \begin{align*}
                  \mbox{R.H.S of \eqref{eq:ssp-overall-dynamic}}
                  \leq & \ \frac{8\Pb_K\log{K} +2H_{i^*}(1+\log(|X||A|H_{i^*}))}{\eta^*}  + 32\eta^* B_K + \Theta\left(\frac{H_{i^*}}{\eta^*} \log{\frac{H_{i^*}}{H_1\eta^*}}\right) \\
                  \leq & \ \Ot\left(\sqrt{\left(\Pb_K +H_{*} \right)B_K}\right).
              \end{align*}
        \item when $\eta^* > \frac{1}{64}$, then $\eta_{i^*, j^*} = \frac{1}{64}$, we have
              \begin{align*}
                  \mbox{R.H.S of \eqref{eq:ssp-overall-dynamic}}
                   & \leq  256\left(\Pb_K\log{K} +H_{i^*}(1+\log(|X||A|H_{i^*}))\right)  + \frac{1}{2} B_K + \Theta(H_*) \\
                   & \leq \Ot\left(\Pb_K +H_{*} \right),
              \end{align*}
              where the last inequality holds due to $\sqrt{\frac{H_{i^*}(1+\log(|X||A|H_{i^*}))+4\Pb_K \log K}{32B_K}} \geq \frac{1}{64}$.
    \end{itemize}
    As a result, taking both cases into account yields
    \begin{equation*}
        \sumk\inner{q_{k} - \qkc}{\ell_k} \leq \Ot\left(\sqrt{\left(H_{*}+\Pb_K  \right)(H_{*}+\Pb_K +B_K)}\right).
    \end{equation*}
    This finishes the proof.
\end{proof}

\subsection{Proof of Theorem~\ref{thm:lower-bound-ssp}}
\begin{proof}
    The proof is similar to that of Theorem~\ref{Thm:lower-bound-loop-free}.
    For any $\gamma \in [0,2T]$, we first construct a piecewise-stationary comparator sequence, whose path length is smaller than $\gamma$, then we split the whole time horizon into several pieces, where the comparator is fixed in each piece. By this construction, we can apply the existed minimax static regret lower bound of episodic SSP~\citep{COLT'21:SSP-minimax} in each piece, and finally sum over all pieces to obtain the lower bound for the dynamic regret.

    Denote by $R_K(\Pi, \F, \gamma)$ the minimax dynamic regret, which is defined as
    \begin{equation*}
        R_K(\Pi, \F, \gamma) = \inf_{\pi_1 \in \Pi}\sup_{\ell_1 \in \F} \ldots \inf_{\pi_K \in \Pi}\sup_{\ell_K \in \F} \left(\sumk \inner{q^{\pi_k}}{\ell_k} - \min_{(\pi_1^\C, \ldots, \pi_K^\C) \in \U(\gamma)} \sumk \inner{\qkc}{\ell_k}\right)
    \end{equation*}
    where $\Pi$ denotes the set of all policies, $\F$ denotes the set of loss functions $\ell \in R_{[0,1}^{|X||A|}$ and $\U(\gamma) = \{(\pi_1^\C, \ldots, \pi_K^\C) \mid \forall k \in [K], \pi_k^\C \in \Pi, \mbox{ and } \Pb_K = \sum_{k=2}^K\norm{\qkc-q^{\pi_{k-1}^\C}}_{1} \leq \gamma\}$ is the set of feasible policy sequences with path length $\Pb_K$ of the occupancy measures less than $\gamma$.

    We first consider the case of $\gamma \leq 2(H_*+1)$. From Theorem 3 of~\citet{COLT'21:SSP-minimax}, we ensure for any $D, H_*, K$ with $K\geq D+1$, there exists an SSP instance such that its diameter is $D+2$, the hitting time of the best fixed policy is $H_*+1$ and the expected regret of any policy after $K$ episodes is at least $\Omega(\sqrt{DH_* K})$. Then we can set all compared policies as the best fixed policy and directly utilize this lower bound of the static regret as a natural lower bound of dynamic regret,
    \begin{equation}
        \label{eq:lower-bound-ssp-1}
        R_K(\Pi, \F, \gamma) \geq \Omega(\sqrt{DH_*K}).
    \end{equation}

    We next deal with the case that $\gamma \geq 2(H_*+1)$. Without loss of generality, we assume $L= \ceil{\gamma/2(H_*+1)}$ devides $K$ and split the whole time horizon into $L$ pieces equally. Next, we construct an SSP instance such that its diameter is $D+2$, the hitting time of the best fixed policy is $H_*+1$ and the expected regret of any policy after $K$ episodes is at least $\Omega(\sqrt{DH_* K})$ in each piece. Then, we choose the best fixed policy in each piece as the comparator sequence, whose hitting time are all $H_*+1$. Since the sequence changes at most $L-1\leq\gamma/2(H_*+1)$ times and the variation of the policy sequence at each change point is at most $2(H_*+1)$ (Note that $\norm{q^{\pi_k^\C} - q^{\pi_{k-1}^\C}} \leq \norm{q^{\pi_k^\C}}_1 + \norm{q^{\pi_{k-1}^\C}}_1 = 2(H_*+1), \forall \pi_k^\C \neq \pi_{k-1}^\C)$, the path- $\Pb_K$ does not exceed $\gamma$. As a result,
    \begin{equation}
        \label{eq:lower-bound-ssp-2}
        R_K(\Pi, \F, \gamma) \geq L \Omega(\sqrt{DH_*K/L}) \geq \Omega(\sqrt{DK\gamma}).
    \end{equation}
    Combining~\eqref{eq:lower-bound-ssp-1} and~\eqref{eq:lower-bound-ssp-2}, we have
    \begin{equation*}
        R_K(\Pi, \F, \gamma) \geq \Omega(\sqrt{DH_*K}) + \Omega(\sqrt{DK\gamma}) \geq \Omega(\sqrt{DH_*K(1+\gamma/H_*)}),
    \end{equation*}
    which finishes the proof.
\end{proof}

\subsection{Proof of Lemma~\ref{lem:ssp}}
\begin{proof}
    We assume $m_k \in \R_{[0,1]}^{|X||A|}, \forall k \in [K]$ since $\ell_k \in \R_{[0,1]}^{|X||A|}, \forall k \in [K]$. Since $\eta \leq \frac{1}{64}$, we ensure that $32 \eta \abs{\ell_{k,i}-\mp_{k,i}}\leq 1, \forall i \in [|X||A|]$. Thus, from Lemma~\ref{lem:impossible-tuning}, we have
    \begin{align*}
                & \sumk\inner{q_{k} - \qkc}{\ell_k}                                                                                                                           \\
        \leq {} & \sumk \big(\Dp(\qkc, \qh_k) - \Dp(\qkc, \qh_{k+1})\big) + 32\eta \sumk \inner{\qkc}{(\ell_{k}-\mp_{k})^2} - 16\eta \sumk \inner{q_k}{(\ell_{k}-\mp_{k})^2}.
    \end{align*}
    From the same analyses as that in Section~\ref{sec:proof-base-ssp}, we have
    \begin{equation*}
        \begin{aligned}
            \sumk \big(\Dp(\qkc, \qh_k) - \Dp(\qkc, \qh_{k+1})\big) \leq \frac{H(1+\log(|X||A|H)) + \Pb_K \log(H/\alpha)}{\eta},
        \end{aligned}
    \end{equation*}
    which finishes the proof.
\end{proof}

\subsection{Proof of Theorem~\ref{thm:ssp-adaptivity}}

\begin{proof}
    Similar to the argument in Section~\ref{sec:proof-dynamic-ssp}, suppose $K$ is large enough such that these exists at least a policy $\pi^u$ whose occupancy measure $q^u$ satisfies $q^{\pi^u} \in \Delta(M, K, \frac{1}{K})$. Then, we define $u_k = (1-\frac{1}{K^2})\qkc + \frac{1}{K^2}q^{\pi^u}$ and the corresponding policy $\pi^{u_k}$. For any $k\in[K]$, we ensure that the hitting time $H^{\pi^{u_k}}\leq (1-\frac{1}{K^2}) H_* + \frac{K}{K^2} \leq H_* + 1$ and the occupancy measure $u_k(x,a)\geq \frac{1}{K^3}, \forall x,a$, \ie $u_k \in \Delta(M, H_*+1, \frac{1}{K^3})$. Thus, we have
    \begin{align}
        \E[{\DReg}_K(\pi_{1:K}^\C)] & =  \E \Bigg[\sumk \sum_{i,j} p_k^{i,j} \inner{q_k^{i,j}}{\ell_k} - \sumk \inner{\qkc}{\ell_k}\Bigg] \nonumber                                                                        \\
                                    & = \E \Bigg[\sumk \sum_{i,j} p_k^{i,j} \inner{q_k^{i,j}}{\ell_k} -  \sumk \inner{u_k}{\ell_k}\Bigg] + \frac{1}{K^2} \E \Bigg[\sumk \inner{q^{\pi^u} - \qkc}{\ell_k}\Bigg]\nonumber    \\
                                    & \leq  \E \Bigg[\sumk \sum_{i,j} p_k^{i,j} \inner{q_k^{i,j}}{\ell_k} -  \sumk \inner{u_k}{\ell_k}\Bigg] + 2,\nonumber                                                                 \\
                                    & \leq \underbrace{\E \Bigg[\sumk\inner{p_k - e_{i,j}}{h_k}\Bigg]}_{\meta} +\underbrace{\E \Bigg[\sumk\inner{q_{k}^{i,j}-u_k}{\ell_k}\Bigg]}_{\base} + 2, \label{eq:ssp-decomposition}
    \end{align}
    where the first inequality holds due to $\sum_{x,a} q^u(x,a) \leq K$ and $\sum_{x,a} \qkc(x,a) \leq H_* \leq K$, the last inequality holds due to the definition that $h_k^{i,j} = \inner{q_k^{i,j}}{\ell_k}, \forall i \in [G],j\in [N_i]$ and the decomposition holds for any index $i \in [G],j\in [N_i]$.

    \paragraph{Upper bound of base-regret.} Since the possible range of $H_*$ is $H^{\pi^f} \leq H_* \leq K$. From the construction of horizon length pool $\mathcal{H} = \{H_i = 2^{i-1} \cdot H^{\pi^f} | i \in [G]\}$ where $G = 1 + \ceil{\log((K+1)/H^{\pi^f})}$, we ensure
    \begin{equation*}
        H_1 = H^{\pi^f} \leq H_*+1 \mbox{ and } H_G = K+1 \geq H_*+1.
    \end{equation*}
    So for any unknown $H_*$, there exist an index $i$ for the space pool that satisfies $H_{i^*-1}=\frac{H_{i^*}}{2} \leq H_*+1 \leq H_{i^*}$. Then, we analysis the base-regret of the base learners in group $i^*$. Then, we consider the base-learners in group $i^*$. From the construction of each step size pool, we ensure $\eta_{i,j} \leq \frac{1}{64}$, \ie $32 \eta_{i,j} \abs{\ell_{k,r}- m'_{k,r}}\leq 1, \forall i\in[G],j\in[2N_i], r\in [|X||A|], k \in [K]$. Since $q_k^{i^*, j} \in \Delta(M, H_{i^*}, 1/K^3), \forall k, j$ and $u_k \in \Delta(M, H_{i^*}, 1/K^3), \forall k$, from Lemma~\ref{lem:ssp}, we have
    \begin{align}
             & \base \label{eq:ssp-base}                                                                                                                                                  \\
        \leq & \ \frac{4\sum_{k=2}^K \norm{u_k-u_{k-1}}_1 \log{K} +H_{i^*}(1+\log(|X||A|H_{i^*}))}{\eta_{i^*,j}} \nonumber                                                                \\
             & \hspace{4cm} + 32\eta_{i^*,j} \sumk \inner{u_k}{(\ell_{k}-m'_{k})^2} - 16\eta_{i^*,j} \sumk \inner{q_k^{i^*,j}}{(\ell_{k}-m'_{k})^2} \nonumber                             \\
        \leq & \ \frac{4\Pb_K \log{K} +H_{i^*}(1+\log(|X||A|H_{i^*}))}{\eta_{i^*,j}} + 32\eta_{i^*,j} V'_K - 16\eta_{i^*,j} \sumk \inner{q_k^{i^*,j}}{(\ell_{k}-m'_{k})^2} + 4 \nonumber,
    \end{align}
    where $V'_K = \sumk \inner{\qkc}{(\ell_{k}-m'_{k})^2}$ and $\Pb_K = \sum_{k=2}^K \norm{\qkc - q^{\pi_{k-1}^\C}}_{1}$.

    \paragraph{Upper bound of meta-regret.} Then, we consider the meta-regret with respect to base-learner $\B_{i^*,j}, \forall j\in 2N_{i^*}$. From the construction of the regularizer $\psib(p)$, we have $32 \varepsilon_{i,j} \abs{h_{k}^{i,j} -M_{k}^{i,j}} = 32 \frac{\eta_{i,j}}{2H_i} \abs{\inner{q_k^{i,j}}{\ell_k-m'_k}} \leq 1$ for all $i \in [G], j\in[2 N_i], k\in[K]$. From Lemma~\ref{lem:impossible-tuning} (dropping the negative term), we have
    \begin{align*}
        \meta & \leq \Dpb(e_{i^*,j}, \ph_1) + 32\varepsilon_{i^*,j}\sumk \inner{q_k^{i^*,j}}{\ell_k-m'_k}^2                                                                                                                                                                                                          \\
              & = \frac{1}{\varepsilon_{i^*,j}} \log \frac{1}{\ph_1^{\ i^*,j}} + \sum_{r=1}^G \sum_{s=1}^{2N_i}   \frac{\ph_1^{\ r,s}}{\varepsilon_{r,s}}+32\varepsilon_{i^*,j}\sumk \inner{q_k^{i^*,j}}{\ell_k-m'_k}^2                                                                                              \\
              & = \frac{1}{\varepsilon_{i^*,j}} \log{\frac{\sum_{r=1}^G \sum_{s=1}^{2N_i}\varepsilon_{r,s}^2}{\varepsilon_{i^*,j}^2}} + \frac{\sum_{r=1}^G \sum_{s=1}^{2N_i}\varepsilon_{r,s}}{\sum_{r=1}^G \sum_{s=1}^{2N_i} \varepsilon_{r,s}^2} + 32 \varepsilon_{i^*,j}\sumk \inner{q_k^{i^*,j}}{\ell_k-m'_k}^2,
    \end{align*}
    where the first equality holds due to $\Dpb(p,p')=\sum_{i,j}\frac{1}{\varepsilon_{i,j}}(p_{i,j}\log\frac{p_{i,j}}{p'_{i,j}} - p_{i,j} + p'_{i,j})$ and the last inequality is due to the definition that $\ph_1^{\ i,j} \propto \varepsilon_{i,j}^2$. From the definition of horizon length pool $\mathcal{H} = \{H_i = 2^{i-1} \cdot H^{\pi^f} \mid i \in [G]\}$ where $G = 1 + \ceil{\log((K+1)/H^{\pi^f})}$, the step size pools $\Ecal_i = \left\{ \frac{1}{32 \cdot 2^j} \mid j \in [N_i] \right\}, i\in[G],$ where $N_i = \ceil{\frac{1}{2}\log{(\frac{4 K}{1+\log{(|X||A|H_i)}})}}$ and learning rate $\varepsilon_{i,j} = \frac{\eta_{i,j}}{2H_i}, \forall i\in[G],j\in[2N_i]$, we ensure $\sum_{r=1}^G \sum_{s=1}^{N_i} \varepsilon_{r,s} = \Theta(1/H_1)$ and $\sum_{r=1}^G \sum_{s=1}^{N_i} \varepsilon_{r,s}^2 = \Theta(1/H_1^2)$. Thus,
    \begin{equation}
        \begin{aligned}
            \label{eq:ssp-meta}
            \meta & \leq \Theta\left(\frac{H_{i^*}}{\eta_{i^*,j}} \log{\frac{H_{i^*}}{H_1\eta_{i^*,j}}}\right) + 16 \frac{\eta_{i^*,j}}{H_{i^*}}\sumk \inner{q_k^{i^*,j}}{\ell_k - m'_k}^2 + \Theta(H_1) \\
                  & \leq \Theta\left(\frac{H_{i^*}}{\eta_{i^*,j}} \log{\frac{H_{i^*}}{H_1\eta_{i^*,j}}}\right) + 16 \eta_{i^*,j}\sumk \inner{q_k^{i^*,j}}{(\ell_k - m'_k)^2} + \Theta(H_1),
        \end{aligned}
    \end{equation}
    where the last inequality holds due to Cauchy-Schwarz inequality.

    \paragraph{Upper bound of overall dynamic regret.} Combining~\eqref{eq:ssp-decomposition},~\eqref{eq:ssp-base} and~\eqref{eq:ssp-meta}, we obtain
    \begin{equation}
        \label{eq:ssp-overall}
        \E[{\DReg}_K]\leq \frac{4\Pb_T \log{K} +H_{i^*}(1+\log(|X||A|H_{i^*}))}{\eta_{i^*,j}}+ 32V'_K +  \Theta\left(\frac{H_{i^*}}{\eta_{i^*,j}} \log{\frac{H_{i^*}}{H_1\eta_{i^*,j}}}\right),
    \end{equation}
    which holds for any index $j$. Omit the last term, it is clear that the optimal step size is $\eta^* = \sqrt{\frac{H_{i^*}(1+\log(|X||A|H_{i^*}))+4\Pb_K \log K}{32V'_K}}$. Meanwhile, since $\sum_{x,a} q_k(x,a) \leq H_*, \forall k$, we have $0\leq \Pb_K=\sum_{k=2}^K \norm{\qkc - q^{\pi_{k-1}^\C}}_{1}\leq 2H_*K \leq 2H_{i^*}K$ and $V'_K = \sumk \inner{\qkc}{(\ell_{k}-m'_{k})^2} \leq 4H_*K\leq 4H_{i^*}K$. Therefore, we ensure that
    \begin{equation*}
        \eta^* \geq \sqrt{\frac{1+\log(|X||A|H_{i^*})}{128K}}.
    \end{equation*}
    From the construction of the candidate step size pool $H_{i^*}$, we know that the step size therein is monotonically decreasing with respect to the index, in particular,
    \begin{equation*}
        \eta_1 = \frac{1}{64}, \mbox{ and }\eta_N = \sqrt{\frac{1+\log(|X||A|H_{i^*})}{128K}} \leq \eta^*.
    \end{equation*}

    First we consider the case that $\sumk \inner{\qkc}{\ell_k^2} > \sumk \inner{\qkc}{(\ell_k-m_k)^2}$. Let $j^*$ be the index of base learner in group $i^*$ with step size closest to $\eta^*$ and optimism $m'_k = m_k$. Then, we consider the base regret of the base learner $B_{i^*,j^*}$.  We
    consider the following two cases:
    \begin{itemize}
        \item when $\eta^* \leq \frac{1}{64}$, then $\eta_{i^*, j^*} \leq \eta^* \leq 2 \eta_{i^*, j^*}=\eta_{i^*, j^*-1}$, we have
              \begin{align*}
                          & \mbox{R.H.S of \eqref{eq:ssp-overall}}                                                                                                                                                       \\
                  \leq {} & \frac{8\Pb_K\log{K} +2H_{i^*}(1+\log(|X||A|H_{i^*}))}{\eta^*}  + 32\eta^* \sumk \inner{\qkc}{(\ell_{k}-m_{k})^2} + \Theta\left(\frac{H_{i^*}}{\eta^*} \log{\frac{H_{i^*}}{H_1\eta^*}}\right) \\
                  \leq {} & \Ot\left(\sqrt{\left(\Pb_K +H_{*} \right)\sumk \inner{\qkc}{(\ell_{k}-m_{k})^2}}\right) = \Ot\left(\sqrt{\left(\Pb_K +H_{*} \right)V_K}\right),
              \end{align*}
              where the last equality holds due to $V_K = \min\{\sum_{k=1}^K {\inner{\qkc}{\ell_k^2}}, \sum_{k=1}^K {\inner{\qkc}{(\ell_k-m_k)^2}}\}$ and $\sumk \inner{\qkc}{\ell_k^2} > \sumk \inner{\qkc}{(\ell_k-m_k)^2}$.
        \item when $\eta^* > \frac{1}{64}$, then $\eta_{i^*, j^*} = \frac{1}{64}$, we have
              \begin{align*}
                  \mbox{R.H.S of \eqref{eq:ssp-overall}}
                   & \leq  256\left(\Pb_K\log{K} +H_{i^*}(1+\log(|X||A|H_{i^*}))\right)  + \frac{1}{2}\sumk \inner{\qkc}{(\ell_{k}-m_{k})^2} + \Theta(H_*) \\
                   & \leq \Ot\left(\Pb_K +H_{*} \right),
              \end{align*}
              where the last inequality holds due to $\sqrt{\frac{H_{i^*}(1+\log(|X||A|H_{i^*}))+4\Pb_K \log K}{32\sumk \inner{\qkc}{(\ell_{k}-m_{k})^2}}} \geq \frac{1}{64}$.
    \end{itemize}
    Then, when $\sumk \inner{\qkc}{\ell_k^2} > \sumk \inner{\qkc}{(\ell_k-m_k)^2}$, we can choose the base-learner $B_{i^*,j^*}$ with step size closest to the $\eta^*$ and optimism $m'_k = 0$ to analysis and obtain the same result.
    As a result, taking both cases into account yields
    \begin{equation*}
        \E[{\DReg}_K(\pi_{1:K}^\C)] \leq \Ot\left(\sqrt{\left(H_{*}+\Pb_K  \right)(H_{*}+\Pb_K +V_K)}\right).
    \end{equation*}
    This finishes the proof.
\end{proof}

\subsection{Useful Lemmas}
\label{sec:appendix-lemma-SSP}
we introduce the following lemma which shows the boundedness of the regularizer.
\begin{myLemma}
    \label{lem:bound-of-psi}
    Let $H \geq 1$, it holds that $-H\log(|X||A|) \leq \sum_{x,a}q(x,a)\log{q(x,a)} \leq H \log{H}$ for every $q \in \Delta(M, H)$.
\end{myLemma}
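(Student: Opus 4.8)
The plan is to prove the two inequalities separately, after reducing everything to elementary properties of the scalar map $t \mapsto t\log t$. Throughout I write $d = |X||A|$ and $S = \sum_{x,a} q(x,a)$, so that membership $q \in \Delta(M,H)$ gives $0 \le S \le H$ together with $q(x,a) \ge 0$ for all $(x,a)$. If $S = 0$ then $q \equiv 0$, $\psi(q)=0$, and both bounds hold trivially (since $H\log H \ge 0 \ge -H\log d$ when $H \ge 1$), so I may assume $S>0$.

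For the upper bound I would use only the coordinatewise inequality $q(x,a) \le S$. Because $\log$ is increasing, every nonzero coordinate satisfies $q(x,a)\log q(x,a) \le q(x,a)\log S$, while zero coordinates contribute $0$; summing over $(x,a)$ yields $\psi(q) \le (\log S)\sum_{x,a} q(x,a) = S\log S$. It then remains to observe that on $[0,H]$ with $H\ge 1$ the convex function $g(S)=S\log S$ attains its maximum at an endpoint, and since $g(0)=0$ and $g(H)=H\log H\ge 0$ we obtain $S\log S \le H\log H$, which is exactly the claimed upper bound.

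For the lower bound I would normalize: set $p(x,a)=q(x,a)/S$, a probability vector supported on the $d$ coordinates. A direct expansion gives $\psi(q)=S\log S + S\sum_{x,a} p(x,a)\log p(x,a)$, and the negative Shannon entropy of $p$ is bounded below by $-\log d$. Hence $\psi(q) \ge S\log S - S\log d$, and the remaining step is to drop $S\log S$ nonnegatively and then use $S\le H$ together with $\log d\ge 0$ to reach $-S\log d \ge -H\log d$.

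The main obstacle — and the only genuinely delicate point — is that discarding $S\log S$ requires $S\log S\ge 0$, i.e.\ $S\ge 1$; the entropy estimate by itself only yields the weaker $\psi(q)\ge -H\log d - 1/e$. I would close this gap by recalling the identity stated earlier in the paper, namely that the total mass of an occupancy measure equals the hitting time of its induced policy, $S=\sum_{x,a}q(x,a)=H^\pi$, which is at least $1$ since reaching the goal takes at least one step; equivalently, summing the flow constraints defining $\Delta(M,H)$ shows the total occupancy entering the goal equals one, so $S\ge\sum_{x,a}q(x,a)P(g\mid x,a)=1$. With $S\ge 1$ in hand we have $S\log S\ge 0$, and the lower bound $\psi(q)\ge -H\log d$ follows immediately.
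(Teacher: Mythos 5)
Your proof is correct, but the lower-bound argument takes a genuinely different route from the paper's. The paper never normalizes by the actual mass $S=\sum_{x,a}q(x,a)$: it writes $\sum_{x,a} q(x,a)\log q(x,a) = \sum_{x,a} q(x,a)\log\frac{q(x,a)}{H} + \sum_{x,a}q(x,a)\log H$, drops the first sum for the upper bound (each $q(x,a)\le H$ makes it nonpositive), and drops the second sum for the lower bound, bounding $-H\sum_{x,a}\frac{q(x,a)}{H}\log\frac{q(x,a)}{H}\le H\log(|X||A|)$, i.e., applying the entropy bound to the \emph{subprobability} vector $q/H$. This makes the paper's argument valid for every $q\ge 0$ with $\sum_{x,a}q(x,a)\le H$, using no property of the occupancy polytope beyond the mass constraint. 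You instead normalize by $S$, which creates the extra term $S\log S$; you correctly identify this as the delicate point (without it one only gets $-H\log(|X||A|)-1/e$) and close the gap by invoking structure of the occupancy measures: $S=H^\pi\ge 1$, equivalently unit total flow into the goal. That step is sound for the intended polytope — the one from the cited SSP works, whose flow constraint carries the source term $\ind\{x=x_0\}$, consistent with the identity $H^\pi=\sum_{x,a}q^\pi(x,a)$ stated in \pref{sec:general-SSP-problem} — but note that the constraint as literally displayed in \pref{sec:dynamic-ssp} omits that source term, and under that homogeneous reading the set is closed under downscaling, so nothing would rule out $0<S<1$ and your claim $S\ge 1$ would fail; your proof therefore leans on facts the paper's proof never needs, which is the price of the sharper intermediate bound $\psi(q)\ge S\log S - S\log |X||A|$. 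In partial compensation, you apply the entropy bound only to genuine probability vectors, whereas the paper implicitly uses its extension to subprobability vectors (valid once $|X||A|\ge 3$). The upper bounds are essentially the same elementary argument in different clothing.
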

\begin{proof}
    First, we prove the right-hand side of the inequality.
    \begin{align*}
        \sum_{x,a}q(x,a)\log{q(x,a)} = \sum_{x,a}q(x,a)\log{\frac{q(x,a)}{H}} + \sum_{x,a} q(x,a) \log{H} \leq \sum_{s,a} q(x,a) \log{H} \leq H\log{H}.
    \end{align*}
    Then, we prove the left-hand side of the inequality.
    \begin{align*}
        -\sum_{x,a}q(x,a)\log{q(x,a)} & = -\sum_{x,a}q(x,a)\log{\frac{q(x,a)}{H}} - \sum_{x,a} q(x,a) \log{H}          \\
                                      & \leq -H \sum_{s,a}\frac{q(x,a)}{H}  \log{\frac{q(x,a)}{H}} \leq H\log{|X||A|}.
    \end{align*}
    This finishes the proof.
\end{proof}

\section{Proofs for Section~\ref{sec:infinite-horizon} (Infinite-horizon MDPs)}
\label{sec:appendix-infinite-horizon-MDP}
In this section, we first show the relationship between the path length of policies and the path length of occupancy measures. Next, we show the proofs of the reduction to switching-cost expert problem in Section~\ref{sec:reduction}. Then, we give the proofs of the dynamic regret of our algorithm in Section~\ref{sec:dynamic-infinite-horizon} and finally we present the proofs of the impossibility result for switching-cost expert problem in Section~\ref{sec:adaptive-infinite-horizon}.

\subsection{Path Length of Policies and Occupancy Measures}
\label{sec:appendix-infinite-horizon-occupancy-measure}
We introduce the relationship between the path length of policies and the path length of occupancy measures as follows.

\begin{myLemma}
    \label{lem:infinite-horizon-2}
    For any occupancy measure sequence $q_1, \ldots, q_T$ induced by the policy sequence $\pi_1, \ldots, \pi_T$, it holds that
    \begin{equation*}
        \sum_{t=2}^T \norm{q^{\pi_t} - q^{\pi_{t-1}}}_1 \leq (\tau+2) \sum_{t=2}^T \norm{\pi_t - \pi_{t-1}}_{1, \infty}.
    \end{equation*}
\end{myLemma}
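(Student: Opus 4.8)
The plan is to exploit the factorization of the stationary occupancy measure. In the infinite-horizon setting each $q^{\pi}$ factors as $q^{\pi}(x,a) = \nu^{\pi}(x)\,\pi(a\mid x)$, where $\nu^{\pi}$ is the stationary state distribution of $P^{\pi}$, i.e.\ the fixed point $\nu^{\pi} = \nu^{\pi} P^{\pi}$. With this factorization, I would first mimic the decomposition used in \pref{lem:occupancy-to-policy-loop-free}: for each $t$, add and subtract $\nu^{\pi_{t-1}}(x)\pi_t(a\mid x)$ inside the sum defining $\norm{q^{\pi_t}-q^{\pi_{t-1}}}_1$ and apply the triangle inequality termwise. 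Summing over $a$ and then $x$, the first group collapses via $\sum_a \pi_t(a\mid x)=1$ to $\norm{\nu^{\pi_t}-\nu^{\pi_{t-1}}}_1$, while the second group is bounded by $\sum_x \nu^{\pi_{t-1}}(x)\norm{\pi_t(\cdot\mid x)-\pi_{t-1}(\cdot\mid x)}_1 \le \norm{\pi_t-\pi_{t-1}}_{1,\infty}$ since $\nu^{\pi_{t-1}}$ is a probability distribution. This reduces the whole problem to controlling the one-step variation of the stationary distributions.

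The main obstacle, and the only place the mixing time enters, is bounding $\norm{\nu^{\pi_t}-\nu^{\pi_{t-1}}}_1$ in terms of $\norm{\pi_t-\pi_{t-1}}_{1,\infty}$. Here I would use both fixed-point equations and write $\nu^{\pi_t}-\nu^{\pi_{t-1}} = \nu^{\pi_t}P^{\pi_t} - \nu^{\pi_{t-1}}P^{\pi_{t-1}} = (\nu^{\pi_t}-\nu^{\pi_{t-1}})P^{\pi_t} + \nu^{\pi_{t-1}}(P^{\pi_t}-P^{\pi_{t-1}})$. Taking $\ell_1$ norms, the uniform mixing property (\pref{def:mixing}, with $\mu=\nu^{\pi_t}$, $\mu'=\nu^{\pi_{t-1}}$) contracts the first term by $e^{-1/\tau}$, and \pref{lem:policy-difference} bounds the second term by $\norm{\pi_t-\pi_{t-1}}_{1,\infty}$. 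This yields the self-bounding inequality $\norm{\nu^{\pi_t}-\nu^{\pi_{t-1}}}_1 \le e^{-1/\tau}\norm{\nu^{\pi_t}-\nu^{\pi_{t-1}}}_1 + \norm{\pi_t-\pi_{t-1}}_{1,\infty}$, which rearranges to $\norm{\nu^{\pi_t}-\nu^{\pi_{t-1}}}_1 \le (1-e^{-1/\tau})^{-1}\norm{\pi_t-\pi_{t-1}}_{1,\infty}$.

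To finish I would convert the contraction factor into the clean constant claimed in the statement. Using the elementary inequality $e^{x}\ge 1+x$ with $x=1/\tau$ gives $e^{1/\tau}\ge 1+1/\tau=(\tau+1)/\tau$, hence $1-e^{-1/\tau}\ge 1/(\tau+1)$ and therefore $\norm{\nu^{\pi_t}-\nu^{\pi_{t-1}}}_1 \le (\tau+1)\norm{\pi_t-\pi_{t-1}}_{1,\infty}$. Combining this with the decomposition from the first paragraph gives the per-round bound $\norm{q^{\pi_t}-q^{\pi_{t-1}}}_1 \le (\tau+2)\norm{\pi_t-\pi_{t-1}}_{1,\infty}$, and summing over $t=2,\dots,T$ delivers the lemma. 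The argument is short and largely parallels \pref{lem:occupancy-to-policy-loop-free}; the only genuinely new ingredient is the mixing-based perturbation bound on the stationary distributions, so I would present that step with the most care.
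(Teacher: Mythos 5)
Your proposal is correct and follows essentially the same route as the paper: factor $q^{\pi}(x,a)$ through the stationary state distribution, split by the triangle inequality into a policy-variation term and a stationary-distribution-variation term, and bound the latter by $(\tau+1)\norm{\pi_t-\pi_{t-1}}_{1,\infty}$. The only cosmetic difference is that you re-derive this last perturbation bound inline (via the fixed-point identity, the mixing contraction from \pref{def:mixing}, \pref{lem:policy-difference}, and $\frac{1}{1-e^{-1/\tau}}\le \tau+1$), whereas the paper invokes it as a separate result (\pref{lem:infinite-horizon-1}, due to Neu et al.) whose proof is exactly the argument you give.
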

\begin{proof}
    Consider any two policies $\pi$ and $\pi'$ with occupancy measure $q^{\pi}$ and $q^{\pi'}$, let $d^{\pi}(x) \triangleq \sum_{x,a} q^{\pi}(x,a), d^{\pi'}(x) \triangleq \sum_{x,a} q^{\pi'}(x,a), \forall x \in X$, we have
    \begin{align*}
        \norm{q^\pi - q^{\pi^\prime}}_1 & = \sum_{x,a} \abs{q^\pi(x,a) - q^{\pi^\prime}(x,a)}                                                                                                \\
                                        & = \sum_{x,a} \abs{d^\pi(x) \pi(a|x) - d^{\pi^\prime}(x) \pi^\prime (a | x)}                                                                        \\
                                        & \leq \sum_{x,a} \abs{d^\pi(x) \pi(a|x) - d^\pi(x) \pi^\prime(a|x)} + \sum_{x,a} \abs{d^\pi(x) \pi^\prime(a|x) - d^{\pi^\prime}(x) \pi^\prime(a|x)} \\
                                        & = \sum_x d^\pi(x) \sum_a \abs{\pi(a|x) - \pi^\prime(a|x)} + \sum_x \abs{d^\pi(x) - d^\prime(x)} \sum_a \pi^{\prime}(a|x)                           \\
                                        & \leq \norm{\pi-\pi^\prime}_{1, \infty} + \norm{d^\pi - d^{\pi^\prime}}_1                                                                           \\
                                        & \leq (\tau+2) \norm{\pi - \pi^\prime}_{1, \infty},
    \end{align*}
    where the first inequality holds due to the triangle inequality and the last inequality holds due to Lemma \ref{lem:infinite-horizon-1}. We finish the proof by summing the inequality over $T$.
\end{proof}

\subsection{Proof of Theorem~\ref{thm:reduction}}
\label{sec:appendix-proof-reduction}
To prove Theorem~\ref{thm:reduction}, we first introduce two lemmas which measure the difference between the sum of average losses and the actual losses of the learner and compared policies. Denote by $\rho_t^\pi$ the \emph{average loss per step} corresponding $\pi$: $ \rho_t^\pi \triangleq \lim_{T \rightarrow \infty} \frac{1}{T} \sum_{t=1}^T \E[\ell_t(x_t,a_t)|P,\pi] = \inner{q^\pi}{\ell_t}$ and the actual cumulative loss suffered by the learner $L_T \triangleq \E[\ell_t(x_t, \pi_t(x_t))|P,\pi]$, where the randomness is over the transition kernel and policy sequence $\pi_{1:T}$. Similarly, the actual cumulative loss suffered by the compared policy sequence $\pi_{1:T}^\C$ is $L_T^\C \triangleq \E[\ell_t(x_t, \pi_t^\C(x_t))|P,\pi]$. Let $d^\pi$ be the stationary state distribution, \ie $d^\pi(x) \triangleq \sum_a q^\pi(x,a), \forall x \in X$. Denote by $\mu_t = \mu_1 P^{\pi_1}\cdots P^{\pi_{t-1}}$ the state distribution after executing $\pi_1, \ldots, \pi_{t-1}$, where $\mu_1$ is the initial distribution, similarly, $\mu_t^\C = \mu_1 P^{\pi_1^\C}\cdots P^{\pi_{t-1}^\C}$.
\begin{myLemma}
    \label{lem:comparator-drift}
    For any compared policy sequence $\pi_1^\C, \ldots, \pi_T^\C$, it holds that
    $
        \sum_{t=1}^T \rho_t^{\pi_t^\C}- L_T^\C \leq (\tau+1)^2 P_T + 2(\tau +1).
    $
\end{myLemma}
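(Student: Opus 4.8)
The plan is to control, for each round $t$, the gap between the stationary (average) loss $\rho_t^{\pi_t^\C}=\inner{q^{\pi_t^\C}}{\ell_t}$ and the loss actually incurred by running the comparator sequence, and then sum over $t$. First I would write the per-round gap explicitly in terms of state distributions. Since $\rho_t^{\pi_t^\C}=\sum_x d^{\pi_t^\C}(x)\sum_a \pi_t^\C(a|x)\ell_t(x,a)$ while the loss actually suffered at round $t$ is $\sum_x \mu_t^\C(x)\sum_a \pi_t^\C(a|x)\ell_t(x,a)$, their difference equals $\sum_x\big(d^{\pi_t^\C}(x)-\mu_t^\C(x)\big)f_t(x)$ with $f_t(x)=\sum_a\pi_t^\C(a|x)\ell_t(x,a)\in[0,1]$. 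Hence each gap is at most $\delta_t\triangleq\norm{d^{\pi_t^\C}-\mu_t^\C}_1$, and it remains to prove $\sum_{t=1}^T\delta_t\le(\tau+1)^2P_T+2(\tau+1)$.

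The core step is a one-step recursion for $\delta_t$ that cleanly separates the mixing contraction from the drift of the comparator's stationary distribution. Using $d^{\pi_t^\C}P^{\pi_t^\C}=d^{\pi_t^\C}$ together with $\mu_{t+1}^\C=\mu_t^\C P^{\pi_t^\C}$, I would decompose
\[
    \mu_{t+1}^\C-d^{\pi_{t+1}^\C}=\big(\mu_t^\C-d^{\pi_t^\C}\big)P^{\pi_t^\C}+\big(d^{\pi_t^\C}-d^{\pi_{t+1}^\C}\big).
\]
The first term contracts by the uniform-mixing property in~\pref{def:mixing}, giving $\norm{(\mu_t^\C-d^{\pi_t^\C})P^{\pi_t^\C}}_1\le e^{-1/\tau}\delta_t$; the second term is the motion of the stationary distribution as the comparator policy changes, bounded by $\norm{d^{\pi_t^\C}-d^{\pi_{t+1}^\C}}_1\le(\tau+1)\norm{\pi_{t+1}^\C-\pi_t^\C}_{1,\infty}$ via~\pref{lem:infinite-horizon-1}. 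The triangle inequality then yields the linear recursion $\delta_{t+1}\le e^{-1/\tau}\delta_t+(\tau+1)\norm{\pi_{t+1}^\C-\pi_t^\C}_{1,\infty}$.

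Finally I would unroll this recursion and sum a geometric series. Writing $p_s=\norm{\pi_s^\C-\pi_{s-1}^\C}_{1,\infty}$, unrolling gives $\delta_t\le e^{-(t-1)/\tau}\delta_1+(\tau+1)\sum_{s=2}^t e^{-(t-s)/\tau}p_s$; summing over $t$ and swapping the order of summation reduces everything to the factor $\sum_{j\ge0}e^{-j/\tau}=1/(1-e^{-1/\tau})$. The elementary inequality $1-e^{-x}\ge x/(1+x)$ gives $1/(1-e^{-1/\tau})\le\tau+1$, and since $\mu_1$ and $d^{\pi_1^\C}$ are probability distributions we have $\delta_1\le2$. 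Putting these together bounds the initial-condition contribution by $(\tau+1)\delta_1\le2(\tau+1)$ and the drift contribution by $(\tau+1)\cdot(\tau+1)P_T$, which is exactly the claimed $(\tau+1)^2P_T+2(\tau+1)$. The main obstacle is the recursion itself: correctly isolating the two effects — geometric mixing toward a \emph{moving} target and the per-step motion of that target — and checking that the two $(\tau+1)$ factors (one from summing the contraction over time, one from the stationary-drift lemma) compound to yield the quadratic dependence $(\tau+1)^2$ on the mixing time.
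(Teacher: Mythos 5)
Your proof is correct and follows essentially the same route as the paper: the paper bounds each per-round gap by $\norm{d^{\pi_t^\C}-\mu_t^\C}_1$, then invokes Lemma~\ref{lem:infinite-horizon-3} (whose proof is exactly your contraction-plus-drift recursion, unrolled and summed as a geometric series) and Lemma~\ref{lem:infinite-horizon-1} to pass from stationary-distribution drift to policy drift. The only cosmetic difference is that you apply Lemma~\ref{lem:infinite-horizon-1} inside the recursion before summing, whereas the paper sums first and converts $\sum_t\norm{d^{\pi_t^\C}-d^{\pi_{t-1}^\C}}_1$ to $(\tau+1)P_T$ at the end; both yield the identical bound $(\tau+1)^2P_T+2(\tau+1)$.
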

\begin{proof}
    From the definition that $\mu_t^\C = \mu_1 P^{\pi_1^\C}\cdots P^{\pi_{t-1}^\C}$, we have
    \begin{align*}
        \sum_{t=1}^T \rho_t^{\pi_t^\C}- L_T^\C & =\sum_{t=1}^T \sum_{x}\left(d^{\pi_t^\C}(x) - \mu_t^\C(x)\right) \sum_a \pi_t^\C(a|x) \ell_t(x,a) \\
                                               & \leq \sumt \norm{d^{\pi_t^\C} - \mu_t^\C}_1                                                       \\
                                               & \leq 2(\tau +1) + (\tau+1) \sum_{t=2}^T \norm{d^{\pi_{t}^\C} - d^{\pi_{t-1}^\C}}_{1}              \\
                                               & \leq 2(\tau +1) + (\tau+1)^2 \sum_{t=2}^T \norm{\pi_t^\C-\pi_{t-1}^\C}_{1, \infty},
    \end{align*}
    where the second inequality holds due to Lemma \ref{lem:infinite-horizon-3} and the last inequality holds due to Lemma \ref{lem:infinite-horizon-1}.
\end{proof}
\begin{myLemma}
    \label{lem:learned-drift}
    For any occupancy measure sequence $q^{\pi_1}, \ldots, q^{\pi_T}$ returned by the learner, it holds that $L_T -  \sum_{t=1}^T \rho_t^{\pi_t}   \leq(\tau+1) \sum_{t=2}^T \norm{q^{\pi_t} - q^{\pi_{t-1}}}_1 + 2(\tau +1).$
\end{myLemma}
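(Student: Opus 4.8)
The plan is to mirror the argument just used for \pref{lem:comparator-drift}, with the roles of the actual state distribution and the stationary distribution interchanged and, crucially, keeping the final bound in terms of the occupancy-measure variation rather than converting it to a policy path length. First I would write $L_T = \sum_{t=1}^T \sum_x \mu_t(x) \sum_a \pi_t(a|x)\ell_t(x,a)$ and $\rho_t^{\pi_t} = \inner{q^{\pi_t}}{\ell_t} = \sum_x d^{\pi_t}(x)\sum_a \pi_t(a|x)\ell_t(x,a)$, so that
\[
    L_T - \sum_{t=1}^T \rho_t^{\pi_t} = \sum_{t=1}^T \sum_x \big(\mu_t(x) - d^{\pi_t}(x)\big)\sum_a \pi_t(a|x)\ell_t(x,a) \leq \sum_{t=1}^T \norm{\mu_t - d^{\pi_t}}_1,
\]
where the inequality uses $\sum_a \pi_t(a|x)\ell_t(x,a)\in[0,1]$ because $\ell_t \in \R_{[0,1]}^{|X||A|}$ and $\pi_t(\cdot|x)$ is a probability distribution.

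Next I would bound $\sum_{t=1}^T \norm{\mu_t - d^{\pi_t}}_1$. Since $\mu_t = \mu_1 P^{\pi_1}\cdots P^{\pi_{t-1}}$ is precisely the state distribution appearing in \pref{lem:infinite-horizon-3}, that lemma gives
\[
    \sum_{t=1}^T \norm{\mu_t - d^{\pi_t}}_1 \leq 2(\tau+1) + (\tau+1)\sum_{t=2}^T \norm{d^{\pi_t} - d^{\pi_{t-1}}}_1.
\]
The key step, and the point where this proof departs from that of \pref{lem:comparator-drift}, is to control $\norm{d^{\pi_t} - d^{\pi_{t-1}}}_1$ directly by the occupancy-measure variation instead of by $\norm{\pi_t - \pi_{t-1}}_{1,\infty}$. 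Because $d^{\pi}(x) = \sum_a q^{\pi}(x,a)$, marginalizing over actions is an $\ell_1$ non-expansion: by the triangle inequality,
\[
    \norm{d^{\pi_t} - d^{\pi_{t-1}}}_1 = \sum_x \Big|\sum_a \big(q^{\pi_t}(x,a) - q^{\pi_{t-1}}(x,a)\big)\Big| \leq \norm{q^{\pi_t} - q^{\pi_{t-1}}}_1.
\]
Substituting this bound yields the claim $L_T - \sum_{t=1}^T \rho_t^{\pi_t} \leq (\tau+1)\sum_{t=2}^T \norm{q^{\pi_t} - q^{\pi_{t-1}}}_1 + 2(\tau+1)$.

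There is essentially no hard obstacle here: the only genuine verification beyond invoking \pref{lem:infinite-horizon-3} is the marginalization non-expansion, which is immediate from the triangle inequality. The conceptual point worth flagging is \emph{why} we stop at the occupancy-measure variation rather than pushing on to the policy path length: the learner --- and hence the reduction to the switching-cost expert problem in \pref{thm:reduction} --- controls the occupancy measures $q^{\pi_t}$ directly rather than the policies, so expressing this drift penalty through $\sum_{t=2}^T\norm{q^{\pi_t}-q^{\pi_{t-1}}}_1$ is exactly what is needed to match the switching-cost term on the right-hand side of~\eqref{eq:mdp-sc}. I would double-check only that the boundedness argument in the first step uses the correct loss range $[0,1]$, since the sign of $\sum_a \pi_t(a|x)\ell_t(x,a)$ is irrelevant once its absolute value is bounded by $1$.
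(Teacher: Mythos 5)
Your proposal is correct and follows exactly the paper's own proof: the same decomposition $L_T - \sum_t \rho_t^{\pi_t} \leq \sum_t \norm{\mu_t - d^{\pi_t}}_1$, the same invocation of Lemma~\ref{lem:infinite-horizon-3}, and the same final step bounding $\norm{d^{\pi_t} - d^{\pi_{t-1}}}_1 \leq \norm{q^{\pi_t} - q^{\pi_{t-1}}}_1$ by marginalization over actions (triangle inequality). No gaps; the loss-range check you flag at the end is indeed the only boundedness fact needed and it holds since $\ell_t \in \R_{[0,1]}^{\abs{X}\abs{A}}$.
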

\begin{proof}
    From the definition that $\mu_t = \mu_1 P^{\pi_1} \cdots P^{\pi_{t-1}}$, we have
    \begin{align*}
        L_T -  \sum_{t=1}^T \rho_t^{\pi_t} & = \sum_{t=1}^T \sum_{x}\left(\mu_t(x) - d^{\pi_t}(x)\right) \sum_a \pi_t(a|x) \ell_t(x,a)\nonumber \\
                                           & \leq \sumt \norm{\mu_t - d^{\pi_t}}_1\nonumber                                                     \\
                                           & \leq 2(\tau +1) + (\tau+1) \sum_{t=2}^T \norm{d^{\pi_{t}} - d^{\pi_{t-1}}}_{1}  \nonumber          \\
                                           & = 2(\tau +1) + (\tau+1) \sum_{t=2}^T \sum_x \abs{\sum_a q^{\pi_t}(x,a) - q^{\pi_{t-1}}(x,a)}  \nonumber        \\
                                           & \leq 2(\tau +1) + (\tau+1) \sum_{t=2}^T \norm{q^{\pi_t}-q^{\pi_{t-1}}}_1,
    \end{align*}
    where the second inequality holds due to Lemma \ref{lem:infinite-horizon-3}.
\end{proof}
Then, we present the proof of Theorem~\ref{thm:reduction}.
\begin{proof}[Proof of Theorem~\ref{thm:reduction}]
    Note that the dynamic regret for infinite-horizon MDPs is defined as $\E[\DReg(\pi_{1:T}^\C)] = \E[ \sum_{t=1}^{T} \ell_t(x_t, \pi_t(x_t)) - \ell_t(x_t, \pi_t^\C(x_t))].$
    Then it can be written as
    \begin{align*}
        \E[{\DReg}_T(\pi_{1:T}^\C)] & = \E\left[ \sum_{t=1}^{T} \ell_t(x_t, \pi_t(x_t)) - \ell_t(x_t, \pi_t^\C(x_t))\right]                        \\
                                    & = L_T - \sumt \rho_t^{\pi_t} + \sumt (\rho_t^{\pi_t} - \rho_t^{\pi_t^\C}) + \sumt \rho_t^{\pi_t^\C} - L_T^\C \\
                                    & \leq \sumt \inner{q_t - \qtc}{\ell_t} + \sum_{t=2}^T \norm{q_t-q_{t-1}}_1 + (\tau+1)^2 P_T + 4(\tau +1),
    \end{align*}
    where the last inequality holds due to Lemma~\ref{lem:comparator-drift} and Lemma~\ref{lem:learned-drift} and the definition that $P_T = \sum_{t=2}^T \norm{\pi_t - \pi_t^\C}_{1, \infty}$.
\end{proof}

\subsection{Proof of Theorem~\ref{thm:total-regret-general}}
\label{appendix:proof-infinite-horizon}
\begin{proof}
    Similar to the proof in Appendix~\ref{sec:loop-free-ssp-proof-overall}, since the MDP is ergodic according to Definition~\ref{def:mixing}, we assume $T$ is large enough such that there at least exists a policy $\pi^u$ whose occupancy measure $q^u$ satisfies $q^{\pi^u} \in \Delta(M, \frac{1}{T})$, then define $u_t = (1-\frac{1}{T})\qtc + \frac{1}{T}q^{\pi^u} \in \Delta(M,\frac{1}{T^2})$, from the dynamic regret decomposition in~\eqref{eq:mdp-sc}, we have
    \begin{align}
             & \E[{\DReg}_T(\pi_{1:T}^\C)]                                                                                                   \\
        \leq & \sum_{t=1}^T \inner{q_t-\qtc}{\ell_t}+ (\tau+1) \sum_{t=2}^T \norm{q_t - q_{t-1}}_1
        + (\tau+1)^2 P_T  + 4(\tau +1) \nonumber                                                                                             \\
        =    & \sum_{t=1}^T \inner{q_t-u_t}{\ell_t} + \frac{1}{T}\sumt \inner{q^{\pi^u}-q^{\pi_t^\C}}{\ell_t}+ (\tau+1) \sum_{t=2}^T \norm{q_t - q_{t-1}}_1
        + (\tau+1)^2 P_T  + 4(\tau +1) \nonumber                                                                                             \\
        =    & \underbrace{\sum_{t=1}^T \inner{q_t-u_t}{\ell_t} + (\tau+1) \sum_{t=2}^T \norm{q_t - q_{t-1}}_1}_{\term{a}}
        + (\tau+1)^2 P_T  + 4(\tau +1) +2 \label{eq:decomposition-infinite-horizon},
    \end{align}
    where the first equality follows from the definition that $u_t = (1-\frac{1}{T})\qtc + \frac{1}{T}q^{\pi^u}$. We only need to consider $\term{a}$ since the remaining terms are not related to the algorithm. From the decomposition in~\eqref{eq:sc-decomposition}, $\term{a}$ can be written as
    \begin{align*}
        \underbrace{\sum_{t=1}^T \inner{p_t}{h_t} - \sum_{t=1}^T h_{t,i}}_{\meta}  + \underbrace{(\tau+1) \sum_{t=2}^T \norm{p_t - p_{t-1}}_1}_{\metaswitching}  + \underbrace{\sum_{t=1}^T \inner{q_{t,i}-u_t}{\ell_t}}_{\base} + \underbrace{(\tau+1) \sum_{t=2}^T \norm{q_{t,i}-q_{t-1,i}}_1}_{\baseswitching},
    \end{align*}
    which hold for any index $i$. Next, we bound these terms separately.

    \paragraph{Upper bound of base-regret.} From the standard analysis of OMD similar to that in~\eqref{eq:omd-analysis} and \eqref{eq:bregman-difference-1}, we have
    \begin{equation}
        \label{eq:infinite-base-regret}
        \begin{aligned}
            \sumt\inner{q_{t,i} - u_t}{\ell_t}
             & \leq \eta_i \sumt \sum_{x,a} q_{t,i}(x,a) \ell_t^2(x,a) + \frac{1}{\eta_i} \sumt \left(\Dp(u_t, q_{t,i}) - \Dp(u_t, q_{t+1,i})\right) \\
             & \leq \eta_i T + \frac{\log{|X||A|}}{\eta_i} + \frac{2\log{T}}{\eta_i}  \sum_{t=2}^T \norm{u_t - u_{t-1}}_{1}                          \\
             & \leq \eta_i T + \frac{\log{|X||A|} + 2\Pb_T\log{T}}{\eta_i},
        \end{aligned}
    \end{equation}
    which the last inequality holds due to $\sum_{t=2}^T \norm{u_t - u_{t-1}}_{1} \leq \sum_{t=2}^T \norm{\qtc - q^{\pi_{t-1}^\C}}_{1} = \Pb_T$.

    \paragraph{Upper bound of meta-regret.} From the definition that $h_{t,i} = \inner{q_{t,i}}{\ell_t} + (\tau+1) \norm{q_{t,i}-q_{t-1,i}}_1, \forall i \in [N]$, we have $0 \leq h_{t,i} \leq 1 + 2(\tau+1) = 2\tau+3, \forall i \in [N]$. By the standard analysis of Hedge similar to the analysis of meta-regret in Appendix~\ref{sec:loop-free-ssp-proof-overall}, we have
    \begin{equation}
        \label{eq:infinite-meta-regret}
        \sum_{t=1}^T \inner{p_t}{h_t} - \sum_{t=1}^T h_{t,i} \leq \varepsilon \sumt \sumi p_{t,i} h_{t,i}^2 + \frac{\log{N}}{\varepsilon} \leq \varepsilon (2\tau+3)^2 T +\frac{\log{N}}{\varepsilon}.
    \end{equation}
    \paragraph{Upper bound of switching-cost.} From Lemma \ref{lem:omd}, we have
    \begin{equation}
        \label{eq:stability}
        \norm{q_{t,i}-q_{t-1,i}}_1 \leq \eta_i  \norm{\ell_t}_\infty \leq \eta_i, ~~ \mbox{and} ~~ \norm{p_t - p_{t-1}}_1 \leq \varepsilon \norm{h_t}_\infty \leq \varepsilon (2\tau+3), \forall t \geq 2.
    \end{equation}
    \paragraph{Upper bound of overall dynamic regret.} Combining~\eqref{eq:infinite-base-regret},~\eqref{eq:infinite-meta-regret} and~\eqref{eq:stability}, we obtain
    \begin{align*}
        \term{a} & \leq \eta_i (\tau+2) T + \frac{\log{(|X||A|)} + 2\Pb_T\log{T}}{\eta_i}  +  \varepsilon (2\tau+3)^2 T +\frac{\log{N}}{\varepsilon}+\varepsilon (2\tau+3)(\tau+1)T \\
                 & \leq\eta_i (\tau+2) T + \frac{\log{(|X||A|)} + 2\Pb_T\log{T}}{\eta_i}  +  2\varepsilon (2\tau+3)^2 T +\frac{\log{N}}{\varepsilon}.
    \end{align*}
    From the configuration that $\varepsilon = \sqrt{\frac{\log{N}}{2T(2\tau+3)^2}}$, we obtain
    \begin{align*}
        \term{a} \leq \eta_i (\tau+2) T + \frac{\log{(|X||A|)} + 2\Pb_T\log{T}}{\eta_i}  + (4\tau+6) \sqrt{2 T \log{N}}.
    \end{align*}
    It is clear that the the optimal step size is $\eta^* = \sqrt{\frac{\log{|X||A|} + 2 \Pb_T \log{T}}{(\tau+2)T}}$. From the definition of $\Pb_T$, we have $0 \leq \Pb_T = \sum_{t=2}^T \norm{\qtc - q^{\pi_{t-1}^\C}}_{1} \leq 2T$, we ensure the possible range of $\eta^*$ is
    \begin{equation*}
        \sqrt{\frac{\log{|X||A|}}{(\tau+2)T}} \leq \eta^* \leq \sqrt{\frac{\log{(|X||A|)}+4T\log{T}}{(\tau+2)T}}.
    \end{equation*}
    Set the step size pool as $\H = \Big\{2^{i-1}\sqrt{\frac{\log |X||A|}{T}} \mid i \in [N]\Big\}$ where $N=\ceil{\frac{1}{2}\log(1+\frac{4T \log T}{\log|X||A|})}+1$. We can verify that
    \begin{equation*}
        \eta_1 = \sqrt{\frac{\log{|X||A|}}{(\tau+2)T}}\leq \eta^*, \mbox{ and } \eta_N \geq \sqrt{\frac{\log{(|X||A|)}+4T\log{T}}{(\tau+2)T}}= \eta^*.
    \end{equation*}
    Thus, we confirm that there exists a base-learner whose step size satisfies $\eta_{i^*} \leq \eta^* \leq \eta_{i^* +1} = 2\eta_{i^*}$. Then, we choose $i^*$ to analysis to obtain a sharp bound. Thus $\term{a}$ is bounded by
    \begin{align}
        \term{a} & \leq \eta_{i^*} (\tau+2) T + \frac{\log{(|X||A|)} + 2\Pb_T\log{T}}{\eta_{i^*}}  + (4\tau+6) \sqrt{2 T \log{N}} \nonumber \\
                 & \leq \eta^* (\tau+2) T + \frac{2(\log{(|X||A|)} + 2\Pb_T\log{T})}{\eta^*}  + (4\tau+6) \sqrt{2 T \log{N}} \nonumber      \\
                 & \leq 3\sqrt{(\tau+2)T (\log{|X||A| + 2\Pb_T \log{T}})}+ (4\tau+6) \sqrt{2 T \log{N}}\label{eq:term-a-infinite-horizon},
    \end{align}
    where the second inequality holds due to the condition $\eta_{i^*} \leq \eta^* \leq \eta_{i^* +1} = 2\eta_{i^*}$ and the last inequality holds by substituting the optimal step size $\eta^* = \sqrt{\frac{\log{|X||A|} + 2 \Pb_T \log{T}}{(\tau+2)T}}$.
    Therefore, combining~\eqref{eq:decomposition-infinite-horizon} and~\eqref{eq:term-a-infinite-horizon}, we obtain
    \begin{align*}
             & \E[{\DReg}_T(\pi_{1:T}^\C)]                                                                                        \\
        \leq {} & \term{a} + (\tau+1)^2 P_T  + 4\tau+6                                                                               \\
        \leq {} & 3\sqrt{(\tau+2)T (\log{|X||A| + 2\Pb_T \log{T}})}+ (4\tau+6) \sqrt{2 T \log{N}} +  (\tau+1)^2 P_T  + 4\tau+6       \\
        \leq {} & 3\sqrt{(\tau+2)T (\log{|X||A| + 2(\tau+2)P_T \log{T}})}+ (4\tau+6) \sqrt{2 T \log{N}} +  (\tau+1)^2 P_T  + 4\tau+6 \\
        \leq {} & \O(\sqrt{\tau T(\log|X||A| + \tau P_T \log T)} + \tau^2 P_T),
    \end{align*}
    where the third inequality uses $\Pb_T \leq (\tau+2) P_T$ from Lemma~\ref{lem:infinite-horizon-2}. This finishes the proof.
\end{proof}

\subsection{Proof of Theorem~\ref{thm:impossibility}}
\begin{proof}
    The proof is inspired by the proof of Theorem 13 in~\citet{COLT'18:switch-cost}. Let $c \geq 0$ be the constant in Lemma \ref{lem:concentration}. First, we divide the $T$ iterations into $E$ epochs, each with a uniform length $\frac{T}{E}$. For each epoch $e \in [E]$, we assign each expert $i \in [N]$ a loss of $\ell_{e}(i) =0$ with probability $\frac{1}{2}$ and $\ell_{e}(i) =1$ with probability $\frac{1}{2}$ for each interaction in that epoch. Thus by Lemma \ref{lem:concentration}, we have
    \begin{equation}
        \label{eq:sc-1}
        \E \left[\min_{i \in [N]} \sum_{t=1}^T \ell_t(i)\right] \leq \frac{T}{E}\left(\frac{E}{2} - c \sqrt{E\log N}\right) = \frac{T}{2} - cT \sqrt{\frac{\log n}{E}}.
    \end{equation}

    Now let us consider the expected loss of any algorithm $\mathcal{A}$ whose switching budget is at most $B$, \ie $\sum_{t=2}^T \norm{p_t - p_{t-1}}_1 \leq B$. It is easy to verify that the following strategy is optimal: for each epoch, randomly assign the weight on the experts in the first iteration (any strategy in the first iteration is the same since the losses are totally random), then, convert the weight on the bad experts (with loss 1) to the good experts (with loss 0) if we have switching budget remaining else not move. Denote by $W$ the total weight assigned to bad experts in the first iteration of $E$ epochs. Thus $\E[W] = \frac{E}{2}$. Then the random variable $\min(W, \frac{B}{2})$ is equal to the weight that algorithm $\mathcal{A}$ convert from bad experts to good experts ($\frac{B}{2}$ is due to that converting weight $\frac{B}{2}$ will suffer $B$ switching cost). Then, the cumulative loss of algorithm $\mathcal{A}$ is lower bounded as follows:
    \begin{equation}
        \label{eq:sc-2}
        \begin{aligned}
            \mathbb{E}[\mbox{cumulative loss of } \mathcal{A}]
             & = \mathbb{E}[\mbox{weight on bad experts of } \mathcal{A}]                                        \\
             & = \mathbb{E}\left[1 \cdot \min (W, \frac{B}{2})+\frac{T}{E} \cdot(W-\min (W, \frac{B}{2}))\right] \\
             & \geq \frac{T}{E} \mathbb{E}[W-\frac{B}{2}]                                                        \\
             & \geq \frac{T}{E}\left(\frac{E}{2}-\frac{B}{2}\right)                                              \\
             & =\frac{T}{2}-\frac{T B}{2E}
        \end{aligned}
    \end{equation}

    By setting $E = \frac{4B^2}{c^2 \log{N}}$ and combining \eqref{eq:sc-1} and \eqref{eq:sc-2}, we have
    \begin{equation}
        \label{eq:lower-bound}
        {\Reg}_T(\mathcal{A}) \geq cT \sqrt{\frac{\log N}{E}} -\frac{T B}{2E} = \frac{3c^{2} T \log N}{8B},
    \end{equation}
    we ensure that any algorithm $\mathcal{A}$ with switching budget $B$ suffers expected static regret at least $\frac{3c^{2} T \log N}{8B} = \Omega(\frac{T \log{N}}{B})$.

    Consider the case that $B = \Theta(T^{\frac{1}{3}})$ and $E = \frac{4B^2}{c^2 \log{N}} = \Theta(T^{\frac{2}{3}})$. In this case, $\sum_{t=2}^T \norm{\ell_t-\ell_{t-1}}_\infty^2 = E = \Theta(T^{\frac{2}{3}})$, in order to achieving $\O(\sqrt{1+\sum_{t=2}^T \norm{\ell_t-\ell_{t-1}}_\infty^2})$ static regret with switching cost, the algorithm $\mathcal{A}$ is required to keep the static regret and switching cost both not more than $\O(T^{\frac{1}{3}})$. However, from \eqref{eq:lower-bound}, we know that any algorithm with $B = \O(T^{\frac{1}{3}})$ switching budget will suffer $\Omega(\frac{T}{B}) = \Omega(T^{\frac{2}{3}})$ static regret. This completes the proof.
\end{proof}

\subsection{Useful Lemmas}
\label{sec:appendix-lemma-infinite-horizon}
In this part, we present some useful lemmas in infinite-horizon MDPs. Denote by $d^\pi$ the stationary state distribution induced by policy $\pi$ under transition kernel $P$, \ie $d^\pi(x) \triangleq \sum_a q^\pi(x,a), \forall x \in X$. Then we have the following useful lemmas which show the relationships between policy discrepancy and distribution discrepancy.
\begin{myLemma}[Lemma 4 of~\citet{IEEE'Control:bandit-MDP-Neu}]
    \label{lem:infinite-horizon-1}
    For any two policies $\pi$ and $\pi^\prime$, it holds that
    \begin{equation*}
        \norm{d^\pi - d^{\pi^\prime}}_1 \leq (\tau+1)\norm{\pi - \pi^\prime}_{1, \infty}.
    \end{equation*}
\end{myLemma}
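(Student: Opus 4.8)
The plan is to exploit the fixed-point (stationarity) characterization of $d^\pi$ and $d^{\pi'}$ together with the uniform mixing contraction from Definition~\ref{def:mixing}. Recall that the stationary distributions satisfy $d^\pi = d^\pi P^\pi$ and $d^{\pi'} = d^{\pi'} P^{\pi'}$. The first step is to rewrite the difference by inserting the cross term $d^{\pi'} P^\pi$,
\begin{equation*}
    d^\pi - d^{\pi'} = (d^\pi - d^{\pi'}) P^\pi + d^{\pi'}(P^\pi - P^{\pi'}),
\end{equation*}
so that taking the $\ell_1$ norm and applying the triangle inequality yields
\begin{equation*}
    \norm{d^\pi - d^{\pi'}}_1 \leq \norm{(d^\pi - d^{\pi'}) P^\pi}_1 + \norm{d^{\pi'} P^\pi - d^{\pi'} P^{\pi'}}_1.
\end{equation*}

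Next I would bound the two terms separately. For the first term, since $d^\pi$ and $d^{\pi'}$ are both probability distributions over $X$, Definition~\ref{def:mixing} gives the contraction $\norm{(d^\pi - d^{\pi'}) P^\pi}_1 \leq e^{-1/\tau}\norm{d^\pi - d^{\pi'}}_1$. For the second term, $d^{\pi'}$ is a fixed state distribution, so Lemma~\ref{lem:policy-difference} (whose proof uses only that $P$ is a transition kernel, hence applies unchanged in the infinite-horizon setting) directly gives $\norm{d^{\pi'} P^\pi - d^{\pi'} P^{\pi'}}_1 \leq \norm{\pi - \pi'}_{1,\infty}$. Substituting both bounds into the self-referential inequality and rearranging produces
\begin{equation*}
    (1 - e^{-1/\tau})\norm{d^\pi - d^{\pi'}}_1 \leq \norm{\pi - \pi'}_{1,\infty}.
\end{equation*}

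Finally, it remains to convert the factor $1/(1-e^{-1/\tau})$ into the stated $\tau+1$. This reduces to the elementary inequality $1/(1-e^{-1/\tau}) \leq \tau+1$, equivalently $e^{-1/\tau} \leq \tau/(\tau+1)$, which after taking logarithms is exactly $\ln(1+1/\tau) \leq 1/\tau$---a special case of $\ln(1+x) \leq x$ for $x \geq 0$. I expect this last elementary step to be the only delicate part: the whole argument hinges on the contraction coefficient $e^{-1/\tau}$ being strictly below one so that the self-referential inequality can be solved, and then on cleanly turning it into the $\tau+1$ bound; everything else is a routine perturbation-of-stationary-distribution computation. Chaining the three displays gives $\norm{d^\pi - d^{\pi'}}_1 \leq (\tau+1)\norm{\pi - \pi'}_{1,\infty}$, completing the proof.
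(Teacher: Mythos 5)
Your proof is correct and follows essentially the same route as the paper's: insert the cross term $d^{\pi'}P^{\pi}$ (using stationarity of both distributions), apply the mixing contraction to one piece and the policy-difference bound (Lemma~\ref{lem:policy-difference}) to the other, solve the resulting self-referential inequality, and finish with $\frac{1}{1-e^{-1/\tau}} \leq \tau+1$. The only difference is that you spell out the elementary inequality and the applicability of Lemma~\ref{lem:policy-difference} explicitly, which the paper leaves implicit.
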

\begin{proof}
    The statement follows from solving
    \begin{equation*}
        \norm{d^\pi - d^{\pi^\prime}}_1 \leq \norm{d^\pi P^\pi - d^{\pi^\prime} P^\pi}_1 + \norm{d^{\pi^\prime} P^\pi - d^{\pi^\prime} P^{\pi^\prime}}_1 \leq e^{-1/\tau} \norm{d^\pi - d^{\pi^\prime}}_1 + \norm{\pi - \pi^\prime}_{1,\infty}
    \end{equation*}
    for $\norm{d^\pi - d^{\pi^\prime}}_1$ and using $\frac{1}{1-e^{-1/\tau}} \leq \tau + 1$.
\end{proof}

\begin{myLemma}
    \label{lem:infinite-horizon-3}
    Consider the distribution $\mu_t = \mu_1 P^{\pi_1}\cdot P^{\pi_{t-1}}$, where $\mu_1$ is any distribution over $X$ and $\pi_1, \ldots, \pi_t$ is any policy sequence, it holds that
    \begin{equation*}
        \sum_{t=1}^T \norm{\mu_t - d^{\pi_t}}_1 \leq 2(\tau +1) + (\tau+1) \sum_{t=2}^T \norm{d^{\pi_{t}} - d^{\pi_{t-1}}}_{1}.
    \end{equation*}
\end{myLemma}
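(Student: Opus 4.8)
The plan is to set up a one-step contraction recursion for the quantity $a_t \triangleq \norm{\mu_t - d^{\pi_t}}_1$ and then sum a geometric series. The crucial observation is that $d^{\pi_{t-1}}$ is the \emph{stationary} distribution of the transition matrix $P^{\pi_{t-1}}$, i.e. $d^{\pi_{t-1}} P^{\pi_{t-1}} = d^{\pi_{t-1}}$; hence, recalling $\mu_t = \mu_{t-1}P^{\pi_{t-1}}$, I would compare $\mu_t$ against $d^{\pi_{t-1}}$ (rather than against $d^{\pi_t}$ directly) so that the mixing contraction of Definition~\ref{def:mixing} applies:
\[
    \norm{\mu_t - d^{\pi_{t-1}}}_1 = \norm{(\mu_{t-1} - d^{\pi_{t-1}})P^{\pi_{t-1}}}_1 \leq e^{-1/\tau}\norm{\mu_{t-1}-d^{\pi_{t-1}}}_1 = e^{-1/\tau}a_{t-1}.
\]

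Next I would insert the distribution drift via the triangle inequality, $a_t \leq \norm{\mu_t - d^{\pi_{t-1}}}_1 + \norm{d^{\pi_{t-1}} - d^{\pi_t}}_1$, and combine with the contraction to obtain the recursion $a_t \leq e^{-1/\tau}a_{t-1} + b_t$ for $t\geq 2$, where $b_t \triangleq \norm{d^{\pi_t} - d^{\pi_{t-1}}}_1$. The base term is bounded crudely by $a_1 = \norm{\mu_1 - d^{\pi_1}}_1 \leq 2$, since $\mu_1$ and $d^{\pi_1}$ are both probability distributions. Unrolling the recursion gives $a_t \leq e^{-(t-1)/\tau}a_1 + \sum_{s=2}^t e^{-(t-s)/\tau}b_s$; summing over $t\in[T]$, swapping the order of summation in the double sum, and bounding each geometric tail by $\sum_{j\geq 0}e^{-j/\tau} = \tfrac{1}{1-e^{-1/\tau}}\leq \tau+1$ (the same inequality already invoked in the proof of Lemma~\ref{lem:infinite-horizon-1}) yields
\[
    \sumt a_t \leq \frac{1}{1-e^{-1/\tau}}\Big(a_1 + \sum_{t=2}^T b_t\Big) \leq (\tau+1)\Big(2 + \sum_{t=2}^T \norm{d^{\pi_t}-d^{\pi_{t-1}}}_1\Big),
\]
which is exactly the claimed bound $2(\tau+1) + (\tau+1)\sum_{t=2}^T \norm{d^{\pi_{t}} - d^{\pi_{t-1}}}_{1}$.

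The bulk of the argument is routine bookkeeping on a geometric sum, so I expect no serious obstacle. The one step that requires care—and the main conceptual point—is the decision to benchmark $\mu_t$ against $d^{\pi_{t-1}}$ instead of $d^{\pi_t}$: only the former is a fixed point of the matrix $P^{\pi_{t-1}}$ that actually acts at step $t-1$, which is what makes the contraction factor $e^{-1/\tau}$ available. Once that is recognized, the triangle inequality converts the mismatch between consecutive stationary distributions into the telescoping drift terms $b_t$, and the constant $\tau+1$ emerges uniformly from summing the geometric series both for the transient (initial) term and for each drift term.
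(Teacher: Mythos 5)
Your proposal is correct and follows essentially the same route as the paper's own proof: the triangle inequality against $d^{\pi_{t-1}}$ (the fixed point of $P^{\pi_{t-1}}$), the mixing contraction from Definition~\ref{def:mixing}, the crude bound $\norm{\mu_1 - d^{\pi_1}}_1 \leq 2$, and the unrolled geometric sum bounded via $\frac{1}{1-e^{-1/\tau}} \leq \tau+1$. The only difference is cosmetic bookkeeping in how the final double sum is organized.
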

\begin{proof}
    It is trivial for $t=1$ since $\norm{\mu_1 - d^{\pi_1}}_1 \leq 2$. Thus, in what follows we only consider the case that $t \geq 2$.By the triangle inequality, we have
    \begin{align*}
        \norm{\mu_t - d^{\pi_t}}_1 & \leq \norm{\mu_t - d^{\pi_{t-1}}}_1 + \norm{d^{\pi_{t-1}} - d^{\pi_t}}_1                                                                                 \\
                                   & = \norm{\mu_{t-1}P^{\pi_{t-1}} - d^{\pi_{t-1}}P^{\pi_{t-1}}}_1 + \norm{d^{\pi_{t-1}} - d^{\pi_t}}_1                                                      \\
                                   & \leq e^{-1/\tau}\norm{\mu_{t-1} - d^{\pi_{t-1}}}_1 + \norm{d^{\pi_{t-1}} - d^{\pi_t}}_1                                                                  \\
                                   & \leq e^{-1/\tau}\left(e^{-1/\tau}\norm{\mu_{t-2} - d^{\pi_{t-2}}}_1 + \norm{d^{\pi_{t-2}} - d^{\pi_{t-1}}}_1\right) + \norm{d^{\pi_{t-1}} - d^{\pi_t}}_1 \\
                                   & \leq \cdots \leq e^{-(t-1)/\tau}\norm{\mu_1 - d^{\pi_1}}_1 + \sum_{n=0}^{t-2} e^{-n/\tau} \norm{d^{\pi_{t-n}} - d^{\pi_{t-n-1}}}_{1}                     \\
                                   & \leq 2e^{-(t-1)/\tau} + \sum_{n=0}^{t-2} e^{-n/\tau} \norm{d^{\pi_{t-n}} - d^{\pi_{t-n-1}}}_{1}.
    \end{align*}
    where the first equality holds since $d^{\pi_{t-1}}$ is the stationary distribution of $\pi_{t-1}$, \ie $d^{\pi_{t-1}} = d^{\pi_{t-1}} P^{\pi_{t-1}}$ and the second inequality holds due to Definition~\ref{def:mixing}.
    Summing above inequality over $t$, we have
    \begin{align*}
            \sum_{t=1}^T \norm{\mu_t - d^{\pi_t}}_1 & \leq 2 + 2\sum_{t=2}^T e^{-(t-1)/\tau} + \sum_{t=2}^T \sum_{n=0}^{t-2} e^{-n/\tau} \norm{d^{\pi_{t-n}} - d^{\pi_{t-n-1}}}_{1} \\
                                                    & \leq 2(\tau +1) +\sum_{t=2}^T (\sum_{n=0}^{T-t} e^{-n/\tau})\norm{d^{\pi_{t}} - d^{\pi_{t-1}}}_{1}                            \\
                                                    & \leq 2(\tau +1) + (\tau+1) \sum_{t=2}^T \norm{d^{\pi_{t}} - d^{\pi_{t-1}}}_{1}.
    \end{align*}
    This finishes the proof.
\end{proof}

We finally introduce the following concentration lemma, which is useful in proving the impossibility result of the switching-cost expert problem.
\begin{myLemma}[Lemma 6 of \citet{JACM'97:doubling-trick}]
    \label{lem:concentration}
    Denote $\sigma_t(i), i\in[N], t\in[T]$ the \iid random variables which take $0$ with probability $\frac{1}{2}$ and take $1$ with probability $\frac{1}{2}$. There exist a constant $c \geq 0$ such that for all positive integer $N$ and $T$,
    \begin{equation*}
        \E \left[\min_{i \in [N]} \sumt \sigma_i(t)\right] \leq \frac{T}{2} - c \sqrt{T\log N}.
    \end{equation*}
\end{myLemma}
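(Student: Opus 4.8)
Write $S_i := \sum_{t=1}^T \sigma_t(i)$, so that $S_1,\dots,S_N$ are i.i.d.\ $\mathrm{Binomial}(T,\tfrac12)$ variables with common mean $T/2$, and let $F(k) := \Pr[S_1 \le k]$ be their common cumulative distribution function. The plan is to control $\E[\min_i S_i]$ \emph{exactly} via a tail-sum (layer-cake) identity rather than through a high-probability split, which turns out to be much cleaner since a crude split wastes too much on the rare event that the minimum is large. Each $S_i$ is a non-negative integer bounded by $T$, and by independence $\Pr[\min_i S_i > k] = \prod_i \Pr[S_i > k] = (1-F(k))^N$, so $\E[\min_i S_i] = \sum_{k=0}^{T-1}(1-F(k))^N$, while $T/2 = \E[S_1] = \sum_{k=0}^{T-1}(1-F(k))$. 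Subtracting yields the key identity
\[
  \tfrac{T}{2} - \E\Big[\min_{i\in[N]} S_i\Big] = \sum_{k=0}^{T-1}\Big[(1-F(k)) - (1-F(k))^N\Big],
\]
in which every summand is non-negative. It therefore suffices to lower bound this sum by $c\sqrt{T\log N}$.

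Next I would show each summand is bounded below by an absolute constant over a suitable window of $k$. Writing $u=1-F(k)$ and $g(u)=u-u^N$, a short check gives $g(u)\ge \tfrac18$ for all $u\in[\tfrac12,\,1-\tfrac1N]$ when $N\ge 2$: the function $g$ rises then falls, so its minimum on any subinterval is at an endpoint, and both $g(\tfrac12)=\tfrac12-2^{-N}$ and $g(1-\tfrac1N)=(1-\tfrac1N)-(1-\tfrac1N)^N$ exceed $\tfrac18$. Restricting the sum to indices $k$ with $F(k)\in[\tfrac1N,\tfrac12]$ (equivalently $u\in[\tfrac12,1-\tfrac1N]$) then gives
\[
  \tfrac{T}{2} - \E\Big[\min_{i\in[N]} S_i\Big] \ge \tfrac18\,\big|\{\,k\in\{0,\dots,T-1\} : \tfrac1N \le F(k)\le \tfrac12\,\}\big|.
\]
Since $F$ is non-decreasing, this cardinality is at least $k_{\mathrm{med}}-k_{\min}$, where $k_{\mathrm{med}}$ is the largest $k$ with $F(k)\le\tfrac12$ (essentially the median, $k_{\mathrm{med}}=\lfloor T/2\rfloor-O(1)$ by the symmetry of the binomial) and $k_{\min}$ is the smallest $k$ with $F(k)\ge\tfrac1N$. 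So the entire problem reduces to showing that the $\tfrac1N$-quantile lies a distance $\Omega(\sqrt{T\log N})$ below the median.

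\textbf{The main obstacle} is exactly this quantile gap, an \emph{anti-concentration} (reverse-Chernoff) statement: I must establish $\Pr[S_1 \le T/2 - m]\ge 1/N$ for $m=c''\sqrt{T\log N}$ with an absolute $c''>0$, which forces $k_{\min}\le T/2-c''\sqrt{T\log N}$ and hence a counted window of length $\ge c''\sqrt{T\log N}-O(1)$. The subtle point is that a quantitative CLT (Berry--Esseen) does \emph{not} suffice, since its additive error $O(T^{-1/2})$ swamps the target probability $1/N$ once $N$ is large; one genuinely needs a \emph{multiplicative} lower bound $\Pr[S_1\le T/2-m]\ge \exp(-O(m^2/T))$ valid down into the deep tail. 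The self-contained route is Stirling's formula applied to $\Pr[S_1=T/2-j]=\binom{T}{T/2-j}2^{-T}$, which yields $\binom{T}{T/2-j}2^{-T}\ge \mathrm{poly}(T)^{-1}\exp(-c_1 j^2/T)$ uniformly for $j$ up to $\Theta(T)$; choosing $m=c''\sqrt{T\log N}$ makes the right-hand side of order $N^{-c_1 c''^2}$, and taking $c''$ small enough (so that $c_1 c''^2<1$ absorbs the polynomial factor) gives the required $\ge 1/N$. Combining this with the two previous displays produces $\tfrac{T}{2}-\E[\min_i S_i]\ge \tfrac18(c''\sqrt{T\log N}-O(1))\ge c\sqrt{T\log N}$ for a suitable absolute $c>0$.

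Two routine caveats remain to be dispatched. First, the parity of $T$ shifts $k_{\mathrm{med}}$ by $O(1)$, which is absorbed into the constant (or one assumes $T$ even without loss of generality). Second, the anti-concentration step, and indeed the nontriviality of the bound itself, operates in the regime $\log N = O(T)$ where $T/2 - c\sqrt{T\log N}\ge 0$; this is precisely the regime in which the lemma is invoked in the proof of \pref{thm:impossibility} (there the effective horizon $E=\Theta(T^{2/3})$ dominates $\log N$, so the correction term is genuinely smaller than $E/2$), and one fixes $c$ small enough that the stated inequality holds throughout this operative regime, using that $\E[\min_i S_i]$ is non-increasing in $N$ to propagate the estimate.
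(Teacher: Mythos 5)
First, a point of comparison: the paper never proves this lemma --- it is imported by citation from \citet{JACM'97:doubling-trick} --- so your proposal can only be judged on internal correctness. Your skeleton is the standard route and is largely sound: the layer-cake identity $\frac{T}{2}-\E[\min_i S_i]=\sum_{k=0}^{T-1}\big[(1-F(k))-(1-F(k))^N\big]$ is exact, the concavity argument for $g(u)=u-u^N$ is fine, and reducing everything to a quantile gap of order $\sqrt{T\log N}$ is the right idea. But the decisive quantitative step contains a genuine error.

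The gap is in the anti-concentration step. From the single-atom Stirling bound $\Pr[S_1=T/2-j]\ge \mathrm{poly}(T)^{-1}e^{-c_1 j^2/T}$ you deduce $\Pr[S_1\le T/2-m]\ge \mathrm{poly}(T)^{-1}N^{-c_1 c''^2}$ and claim that taking $c''$ small ``absorbs the polynomial factor'' to reach $\ge 1/N$. That requires $N^{1-c_1c''^2}\ge \mathrm{poly}(T)$, which fails whenever $N$ is small relative to $T$ --- e.g.\ any fixed $N$ with $T\to\infty$, which is precisely the regime in which the lemma is invoked in Theorem~\ref{thm:impossibility} ($N$ fixed, $E=\Theta(T^{2/3})$ rounds). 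The standard repair is to lower bound the cumulative tail rather than a single atom: summing the local bound over $\Theta(\sqrt{T})$ consecutive values of $j$ kills the $1/\sqrt{T}$ prefactor and yields $\Pr[S_1\le T/2-m]\ge c_2\,e^{-c_3 m^2/T}$ for $0\le m\le T/8$ with absolute constants $c_2,c_3>0$; then $m=c''\sqrt{T\log N}$ gives the needed $1/N$-quantile bound for all $N\ge N_0$, and the finitely many $2\le N<N_0$ follow from a constant-probability estimate (here Berry--Esseen does suffice, since the target probability is a constant rather than $1/N$). A second, smaller defect: your window $u\in[\tfrac12,1-\tfrac1N]$ degenerates to the single point $u=\tfrac12$ at $N=2$, so the counted set has cardinality $O(1)$ instead of $\Omega(\sqrt{T})$; either widen the window (e.g.\ $F(k)\in[\tfrac14,\tfrac12]$, using $g(u)=u-u^2\ge\tfrac{3}{16}$ on $[\tfrac12,\tfrac34]$) or dispatch $N=2$ directly via $\min(S_1,S_2)=\tfrac{S_1+S_2}{2}-\tfrac{|S_1-S_2|}{2}$ and $\E\,|S_1-S_2|=\Omega(\sqrt{T})$. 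Finally, your closing caveat is correct and in fact necessary: as literally quantified (``for all positive integers $N$ and $T$'') the statement forces $c=0$, because for $\log N\gg T$ the right-hand side is negative while the left-hand side is nonnegative; the inequality is genuinely a statement for the regime $\log N=O(T)$, which is the only regime in which the paper uses it.
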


\end{document}